\documentclass[11pt]{article}

\usepackage[margin=1.0in]{geometry}
\usepackage[utf8]{inputenc}            %
\usepackage[T1]{fontenc}               %
\usepackage{nicefrac} %
\usepackage{microtype} %
\usepackage[dvipsnames]{xcolor}
\usepackage[
   pdftex=true,
    colorlinks=true,
    linkcolor=RoyalPurple,   %
    citecolor=Blue,     %
    urlcolor=Violet,     %
    linktoc=page
]{hyperref}
\usepackage[most]{tcolorbox}
\usepackage{varwidth}
\newtcbox{\mybox}{colback=blue!5,
	colframe=blue!30!black, center, enhanced, varwidth upper}

\newcommand{\changelocaltocdepth}[1]{%
  \addtocontents{toc}{\protect\setcounter{tocdepth}{#1}}%
  \setcounter{tocdepth}{#1}%
}
\usepackage{graphicx} %
\usepackage{subcaption}
\usepackage[toc]{appendix}

\usepackage{adjustbox}
\usepackage[margin=1.0in]{geometry}
\usepackage[utf8]{inputenc}            %
\usepackage[T1]{fontenc}               %
\usepackage[dvipsnames]{xcolor}
\usepackage[
    pdftex=true,
    colorlinks=true,
    linkcolor=RoyalPurple,   %
    citecolor=Blue,     %
    urlcolor=Violet,     %
    linktoc=page
]{hyperref}
\usepackage[mathscr]{euscript}
\usepackage[T1]{fontenc}

\usepackage{multirow}
\usepackage{url}                        %
\usepackage{booktabs}                   %
\usepackage{amsfonts}                   %
\usepackage{nicefrac}                   %
\usepackage{microtype}                  %
\usepackage[boxruled]{algorithm2e}
\usepackage{xcolor}                     %
\usepackage{lmodern}
\usepackage{enumitem}
\usepackage[numbers, sort]{natbib}
\usepackage{bbm}
\usepackage{amsmath, amssymb, amsfonts}
\usepackage{amsthm, thmtools, thm-restate}
\usepackage{mathtools, nccmath, extarrows}
\usepackage{enumitem}
\usepackage{subcaption}
\usepackage{bm}
\usepackage{scalerel}
\usepackage[capitalise]{cleveref}
\usepackage{array}
\usepackage[most]{tcolorbox}
\usepackage{graphicx}
\usepackage{subcaption}
\usepackage[percent]{overpic} %
\usepackage{rotating} %
\usepackage{array}    %

\usepackage{etoolbox}     %
\newbool{showSquare}      %
\setbool{showSquare}{true}%

\usepackage{amsfonts}

\newcounter{imagerow}[figure]
\usepackage{wrapfig}

\newcounter{imagecolumn}[imagerow]

\newcolumntype{I}{>{\refstepcounter{imagecolumn}}{c}<{}}

\def\isarxiv{}

\usepackage{stackengine}
\stackMath

\DeclareFontEncoding{LS1}{}{}
\DeclareFontSubstitution{LS1}{stix}{m}{n}

\makeatletter
\newcommand*\textmathversion{\csname textmv@\math@version\endcsname}
\newcommand*\textmv@normal{m}
\newcommand*\textmv@bold{b}
\makeatother
\newtcbtheorem{defn}{Definition}%
{	colframe=blue!30!gray, theorem name, fonttitle=\bfseries,
	colback=blue!5, top=1mm,bottom=1mm, boxrule=0.5pt}{Example}

\newtcbtheorem[use counter=theorem, number within=section]{THM}{Theorem}%
{	colframe=blue!30!gray, fonttitle=\bfseries,
	colback=blue!5,  fontupper=\itshape, top=1mm,bottom=1mm, boxrule=0.5pt}{Theorem}
\usepackage{varwidth}
\newtcbox{\mymath}[1][]{%
	nobeforeafter, math upper, tcbox raise base,
	enhanced, colframe=blue!30!black,
	colback=blue!5, boxrule=0.5pt, top=1mm,bottom=1mm,
	#1}

\newtheorem{theorem}{Theorem}[section]
\newtheorem{proposition}[theorem]{Proposition}
\newtheorem{lemma}[theorem]{Lemma}

\newtheorem{claim}[theorem]{Claim}

\newtheorem{definition}[theorem]{Definition}

\newtheorem{example}[theorem]{Example}

\newcommand\revised[1]{#1}

\usepackage{bm}

\newcommand{\cB}{\mathcal{B}}

\newcommand{\cF}{\mathcal{F}}
\newcommand{\cG}{\mathcal{G}}

\newcommand{\cL}{\mathcal{L}}

\newcommand{\cO}{\mathcal{O}}
\newcommand{\cP}{\mathcal{P}}

\newcommand{\cV}{\mathcal{V}}
\newcommand{\cW}{\mathcal{W}}
\newcommand{\cX}{\mathcal{X}}

\newcommand{\RR}{\mathbb{R}}
\newcommand{\NN}{{\mathbb N}}

\newcommand{\VV}{\mathbb V}
\newcommand{\UU}{{\mathbb U}}

\newcommand{\Vdup}{\mathbb{V}_{\mathrm{dup}}}
\newcommand{\Vzero}{\mathbb{V}_{\mathrm{zero}}}

\newcommand{\NNz}{{\NN}_0}

\newcommand{\Orth}{\mathrm{O}}

\newcommand{\rangee}[1]{\operatorname{range}\left( #1 \right)}

\newcommand{\EE}{\operatorname{\mathbb E}}

\newcommand{\supp}{\operatorname{supp}}

\def\shortdisplay{\setlength{\abovedisplayskip}{5pt}%
	\setlength{\belowdisplayskip}{5pt}%
	\setlength{\abovedisplayshortskip}{2pt}%
	\setlength{\belowdisplayshortskip}{2pt}}

\let\oldselectfont\selectfont
\def\selectfont{\oldselectfont\shortdisplay}

\newcommand{\NNphi}{\ensuremath{\texttt{NN}^{\phi}}}
\newcommand{\Vinf}{\ensuremath{\overline{V_\infty}}}
\newcommand{\VmG}{\ensuremath{\overline{V_\infty}/G_{\infty}}}
\newcommand{\cFcont}{\ensuremath{\mathcal{F}_{\mathrm{cont}}}}
\newcommand{\cFNcont}{\ensuremath{\mathcal{F}_{\overline{\mathrm{cont}}}}}
\newcommand{\cFDS}{\ensuremath{\mathcal{F}_{\mathrm{DS}}}}
\newcommand{\DS}{\ensuremath{{\mathrm{DeepSets}}_\infty}}

\newcommand{\NDS}{\ensuremath{\overline{\mathrm{DeepSets}}_\infty}}

\newcommand{\cFNDS}{\ensuremath{\mathcal{F}_{\overline{\mathrm{DS}}}}}

\newcommand{\Gphn}{\ensuremath{{\mathcal{W}_0}}}
\newcommand{\hk}{h_m}
\newcommand{\bard}{\ensuremath{\overline{\mathrm{d}}}}
\newcommand{\cFW}{\cF_{\cW}}
\newcommand{\cFPC}{\cF_{PC}}

\newcommand{\IGN}{I_{\varrho,M, L, b}}
\newcommand{\LE}[2]{\texttt{LE}_{#1 \rightarrow #2}}

\title{Any-Dimensional Invariant Universality
}
\author{
  Shengtai Yao\thanks{Department of Applied Mathematics and Statistics, Johns Hopkins University, Baltimore, MD 21218, USA. MD is partially supported by NSF awards CCF 2442615 and DMS 2502377, and a Sloan Research Fellowship.
}
  \and Eitan Levin\thanks{Department of Computing and Mathematical Sciences, Caltech, Pasadena, CA 91125, USA. EL is partially supported by AFOSR FA9550-23-1-0070 and FA9550-23-1-0204}
  \and Mateo D\'\i az\footnotemark[1]
}

\date{}
\begin{document}
\maketitle
\begin{abstract}

Several machine learning models are defined for inputs of any size, such as graphs with different numbers of nodes and point clouds containing varying numbers of points. The universality properties of such any-dimensional models remain poorly understood, as universality is traditionally studied for models accepting inputs of a fixed size, defined on a compact subset of their domain. In sharp contrast, any-dimensional models can be viewed as sequences of functions defined on growing-sized inputs, and it is not clear in which sense they can be universal. We develop a systematic approach to establish any-dimensional universality, by identifying any-dimensional functions with a unique function taking inputs in a suitable infinite-dimensional limit space containing inputs of all finite sizes as well as their limits. Using the symmetries of these inputs and relations between inputs of different sizes, we show that this limit space admits a natural topology with rich families of compact sets on which any-dimensional universality can be established. We illustrate our approach by showing that several existing architectures fail to be universal, and we propose simple modifications that restore universality.

\end{abstract}

\section{Introduction}

Traditional supervised learning aims to learn mappings defined on fixed, finite-dimensional spaces by using training examples lying in those same spaces. In contrast, many modern learning tasks involve maps defined on inputs of varying dimensions. For instance, graph parameters, particle system dynamics, games, and regularizers remain meaningful regardless of the number of nodes, particles, players, or variables.
There are a number of architectures in the literature that are defined on inputs of any dimension, e.g., DeepSets~\cite{zaheer2017deep} for sets, Graph Neural Networks (GNN)~\cite{gori2005new} for graphs, and PointNet~\cite{qi2017pointnet} for point clouds. Effectively, these any-dimensional architectures finitely parametrize an infinite sequence of invariant functions $(\widehat f_{n} \colon V_{n} \to \RR)_{n}$
with increasingly larger input dimensions.\footnote{By `invariant' we mean that there is a sequence of groups $(G_n)$, such as permutations or rotations, such that for each $n$ the group $G_n$ acts on $V_n$ and for all $g \in G_n$ and $v \in V_n$, we have $f_n(g \cdot v) = f_n(v)$.}
Despite the proliferation of any-dimensional architectures, their expressivity remains relatively unexplored. Specifically, the universality of most of the any-dimensional architectures above has mostly been studied in a fixed dimension or in finitely many dimensions~\cite{maron2019universality, keriven2019universal, yarotsky2022universal}. This motivates our main question
\mybox{\centering \it Can any-dimensional models universally approximate functions across dimensions?}

As stated, this question is ill-posed.  To understand why, recall that classical universality concerns approximating a \emph{single} continuous function $f\colon K\to\RR$ on a fixed compact set $K$ by a model $\widehat f$ uniformly on $K$. Any-dimensional models, however, produce a \emph{sequence} of maps with varying domains.
To resolve this mismatch, we adopt the framework of \cite{levin2025transferring}. Specifically, we view the input spaces as a nested sequence $V_1\subseteq V_2\subseteq\cdots$ of increasing-dimensional vector spaces, and embed them all into an infinite-dimensional limit space $V_\infty$ we construct containing each $V_n$ as a subspace. Under a mild compatibility condition, an any-dimensional model $(\widehat f_n)$ admits a unique extension $\widehat f_\infty\colon V_\infty\to\RR$ satisfying $\widehat f_\infty(x)=\widehat f_n(x)$ for any $x \in V_n$ and $n$. This identification lets us pose ``universality across dimensions'' as standard universality on an application-specific infinite-dimensional space.

\ifdefined\isarxiv
\paragraph{\textbf{Main contributions}.} Let us summarize our contributions.
\else
\noindent \textbf{Main contributions.} Let us summarize our contributions.
\vspace{-10pt}
\fi
\begin{enumerate}[label={}, leftmargin=*]
  \item (\textbf{\textit{A recipe}}) We develop a general strategy for proving any-dimensional invariant universality. The strategy highlights two principles:
  $(i)$ exploiting symmetries by passing to orbit spaces can yield rich families of compact sets, even in infinite dimensions, on which universality can be proved; and
  $(ii)$ unlike in finite dimensions---where all norms induce the same topology---the choice of norm on $V_\infty$ is essential, as continuity (and hence admissible activations and architectural primitives) depends on it.
  \item (\textbf{\textit{Instantiations}})
  We apply this recipe to three representative domains: sets, graphs, and point clouds. We show that several widely used architectures yield extensions $\widehat f_\infty$ that are either discontinuous in the natural topology induced by the learning task or fail to be universal. We then propose modifications---and, when necessary, new architectures---that restore continuity and achieve universality.
\end{enumerate}
\ifdefined\isarxiv
\else
\vspace{-5pt}
\fi

\ifdefined\isarxiv
\paragraph{\textbf{Outline}.}
\else
\noindent \textbf{Outline.}
\fi
The remainder of this section reviews related work. Section~\ref{sec: preliminary} reviews the mathematical preliminaries of any dimensional learning necessary for the rest of the paper. Section~\ref{sec: Any-dimensional universality} formalizes the type of any-dimensional universality that we wish to establish and outlines a general recipe for establishing universality. Section~\ref{sec: instantiations} presents concrete instantiations of this recipe for models that apply to sets, graphs, and point clouds. Finally, Section~\ref{sec: discussion} concludes the paper with limitations and opportunities for future work.

\subsection{Related work}
\label{sec: related work}

\ifdefined\isarxiv
  \paragraph{Dimension-free learning.}
  \else
\textbf{Dimension-free learning.}
\fi
Many modern architectures are any-dimensional. Convolutional neural networks apply the same translation-invariant filters to images of arbitrary resolution \cite{lecun2002gradient, hashemi2019enlarging}. Recurrent neural networks and transformers handle variable-length sequences via recurrence or attention \cite{hochreiter1997long,socher2011parsing,vaswani2017attention}. Neural operators such as DeepONet and FNO learn maps between (infinite-dimensional) function spaces and can be evaluated on grids of different resolutions \cite{lu2021learning,li2020fourier}. In geometric deep learning, several models have been proposed to handle sets of arbitrary cardinality \cite{zaheer2017deep,qi2017pointnet}, graphs with varying numbers of nodes and edges \cite{gori2005new,scarselli2008graph}, and point clouds with any size \cite{villar2021scalars,blum2025functions}. Recently, \cite{levin2023free,levin2024any,diaz2025invariant} leveraged \emph{representation stability} \cite{church2013representation, church2015fi} to derive general-purpose dimension-free {equivariant} and {invariant} model classes, encoding neural networks, convex sets, and kernel machines.
However, the ability to {accept} variable sizes does not guarantee {consistent} (or transferable) behavior across sizes.
The study of {transferability} originated in the GNN literature \cite{ruiz2020graphon} and has since been explored extensively in that context \cite{levie2021transferability, keriven2020convergence,ruiz2023transferability,maskey2023transferability,cordonnier2023convergence,cai2022convergence}. More recently, \cite{levin2025transferring} proposed a unified framework---beyond graphs---for formulating and certifying transferability across dimensions.

\ifdefined\isarxiv
  \paragraph{Universality.}
\else
\textbf{Universality.}
\fi
Universality is one of the pillars of deep learning theory. Let us start by commenting on finite-dimensional results. Classical universal approximation theorems show that fully connected networks approximate continuous functions on compact sets \cite{cybenko1989approximation,hornik1989multilayer,leshno1993multilayer,hornik1991approximation}, with recent extensions beyond compact domains under suitable growth and activation assumptions \cite{van2024noncompact,neufeld2024universal,abdeljawad2024weighted}. The expressivity of symmetric models has only recently been studied. For graphs, expressive power is subtle: message-passing GNNs are known to have intrinsic limitations \cite{xu2018powerful}, commonly formalized via the Weisfeiler--Lehman (WL) test \cite{leman1968reduction}, and many variants have been analyzed through this lens \cite{xu2018powerful,maron2019provably,morris2019weisfeiler}. Alternative GNN based on tensor layers \cite{maron2019universality, keriven2019universal}, \revised{and those based on homomorphism density functions \cite{maehara2019simple,nguyen2020graph} }
do achieve universality. 
For permutation-invariant problems, DeepSets is universal \cite{zaheer2017deep, sannai2019universal}. Beyond these specific examples, \cite{yarotsky2022universal} established universality of polynomial-invariant-based models that are symmetric under the action of any compact group.
Nonetheless, these results are largely formulated for a fixed input dimension. By contrast, universality for {any-dimensional} models remains comparatively underdeveloped.
The closest related results are \cite{bueno2021representation,zweig2021functional} for sets and \cite{keriven2019universal,herbst2025higher} for graphs.
Our results and strategy highlight a more general perspective based on the transferability framework developed in \cite{levin2025transferring}, applicable beyond these concrete scenarios. Let us briefly comment on more specific differences.
First, \cite{zweig2021functional} studies lower bounds on approximation for certain RKHS classes; these classes are not our focus. Our DeepSets results extend \cite{bueno2021representation} by covering a broader class of compact sets. On the graph side, \cite{keriven2020convergence} proves universality of graphon neural networks in simplified regimes, while \cite{herbst2025higher} establishes universality with respect to the $\delta_p$ metric. In contrast, we work with the arguably more natural and widely studied cut metric $\delta_\square$, which induces a coarser topology.

\section{Preliminaries}
\label{sec: preliminary}
In this section, we introduce the notation and technical background necessary for the paper.

\ifdefined\isarxiv
  \paragraph{Notation.}
  \else
    \noindent\textbf{Notation.}
    \fi
We use $\RR$ and $\NN$ to denote the reals and positive integers, respectively. Define $\NNz=\NN\cup\{0\}$ and $[n]\coloneqq\{1,\dots,n\}$ for $n\in\NN$. We write $\Orth(k)$ for the orthogonal group in dimension $k$ and $\mathrm{S}_n$ for the symmetric group on $n$ letters.
For a given metric space $(\cX,d)$, we denote the open ball centered at $x$ of radius $\eta$ by $B(x,\eta) \coloneqq \{ u \in \cX \mid d(u, x) < \eta \}$.
Let $C(\mathcal{X},\mathcal{Y})$ denote the space of continuous maps from a topological space $\mathcal{X}$ to $\mathcal{Y}$, and write $C(\mathcal{X})$ when $\mathcal{Y}=\RR$. Let $\mathcal{B}(\mathcal{X},\mathcal{Y})$ be the space of bounded linear operators between normed spaces $\mathcal{X}$ and $\mathcal{Y}$.
Given an arbitrary norm $\|\cdot\|_{\RR^k}$ on $\RR^k$, the symbol $\ell_p(\RR^k)$ denotes the space of sequences $X=(X_{j:} \in \RR^k)_{j\in\NN}$ such that $\sum_{j=1}^\infty \|X_{j:}\|_{\RR^k}^p<\infty$. Similarly, for a given measure space $(\RR^m,\mu)$, %
 let $L^p(\RR^m;\RR^k)$ be the space of measurable functions $X\colon \RR^m\to\RR^k$ with $\|X\|_p \coloneqq \left(\int \|X(t)\|_{\RR^k}^p d\mu(t)\right)^{1/p}<\infty$. The symbol $\|\cdot\|_p$ refers to either the  $\ell^p$- or $L^p$-norms, as appropriate. Finally, let $\cP(\RR^k)$ be the set of probability measures on $\RR^k$, and let $\cP_p(\RR^k)\subseteq \cP\left(\RR^k\right)$ denote those with finite $p$th moments. We use the symbol $W_p$ to denote the Wasserstein-$p$ distance. %

\ifdefined\isarxiv
  \paragraph{Any-dimensional learning.}
\else
\noindent \textbf{Any-dimensional learning.}
\fi
Next, we recall a few basic notions pertaining to any-dimensional models and their generalization across dimensions. Our exposition here follows~\cite{levin2025transferring}; we refer the interested reader to that reference for additional details. We focus exclusively on real-valued functions, and accordingly streamline several definitions, though all notions in this section extend to more general codomains.
The key idea of this section is that a sequence of functions $(f_n\colon V_n\to\RR)$ with growing-sized inputs can be identified with a single extension $f_\infty\colon \overline{V_\infty}\to\RR$, defined on an infinite-dimensional space $\overline{V_\infty}$ that `contains' each $V_n$ as a subspace. This viewpoint lets us pose questions such as universality on one common domain, rather than across a sequence of domains.
To state these ideas precisely, we start from a family of vector spaces encoding inputs of growing size, together with maps and relations linking them.

\begin{definition}
\label{def: consistent sequence}
A \textbf{consistent sequence} is a triple $\VV=\{\left(V_n\right)_{n \in \NN},\left(\varphi_{N, n}\right)_{n \preceq N},\left(\mathrm{G}_n\right)_{n \in \NN}\}$
    indexed by a directed poset $(\NN, \preceq)$,\footnote{A directed poset is a partial order $\preceq$ on $\NN$ such that every two elements have a \revised{common} upper bound.}
of finite dimensional vector spaces $V_n$ , maps $\varphi_{N, n}$ and groups $\mathrm{G}_n$ acting linearly on $V_n$ such that, for all $n \preceq N$, $(i)$ the group $\mathrm{G}_n$ is embedded into $\mathrm{G}_N$ and $(ii)$ $\varphi_{N, n}\colon V_n \hookrightarrow V_N$ is a linear, $\mathrm{G}_n$-equivariant embedding.
\end{definition}
The elements of different vector spaces in a consistent sequence represent objects of different sizes, while the groups and embeddings between them represent their symmetries and relations between objects of different sizes, respectively.
To crystallize this notion, we provide two examples of consistent sequences that will play a crucial role in this work. %
\begin{example}
\label{exp: consistent sequence}
Let $V_n=\RR^n$ and  $\mathrm{G}_{n} = \mathrm{S}_n$ be the symmetric group acting by coordinate permutation. One might endow this sequence with two different types of orderings and embeddings.
\ifdefined\isarxiv
\else
    \vspace{-10pt}
\fi
    \begin{enumerate}
        \item (\textbf{Zero-padding embedding}) Index by $\NN$ with the standard ordering $\leq$. Define $\varphi_{N, n}: V_n \hookrightarrow V_N$ by
        $$\varphi_{N, n}\left(x_1,\ldots,x_n\right)\coloneqq(x_1, \ldots, x_n, \underbrace{0, \ldots, 0}_{(N-n)  \text{ zeros}}). $$
        We use the symbol $\Vzero$ to denote this sequence.
        \item (\textbf{Duplication embedding}) Index by $\NN$ with $n \preceq N$ if $n$ divides $N$. Define $\varphi_{N, n}\colon V_n \hookrightarrow V_N$ by repeating each coordinate $N/n$ times,
$$\varphi_{N, n}\left(x_1,\ldots,x_n\right)=(\underbrace{x_1, \ldots, x_1}_{N / n \text { copies}}, \ldots, \underbrace{x_n, \ldots, x_n}_{N / n \text{ copies}}).$$
Similarly, we use $\Vdup$ to denote this sequence.
    \end{enumerate}
\end{example}
To define and study continuous any-dimensional models, we further define a single, infinite-dimensional space containing inputs of all finite sizes. This is done in the following definition.
\begin{definition}
    Define $V_{\infty}$ as the disjoint union $\bigsqcup V_n$ modulo an equivalence relation
    $$V_{\infty}\coloneqq \bigsqcup_n V_n / \sim,$$
    where $v \sim \varphi_{N, n}(v)$ whenever $n \preceq N$, and denote by $[v]\in V_\infty$ the equivalence class of $v\in V_n$. The limiting group $G_\infty$ and its equivalence classes $[g]$ are defined analogously.
  \end{definition}
In turn, this allows us to identify any finite-dimensional element $v \in V_{n}$ with its infinite-dimensional counterpart $[v] \in V_{\infty}.$ The next definition will allow us to extend a sequence of functions to this infinite dimensional space.
\begin{definition}\label{def:compatible}
    Let $\VV=\left\{\left(V_n\right),\left(\varphi_{N, n}\right),\left(\mathrm{G}_n\right)\right\}$ be a consistent sequence indexed by $(\NN, \preceq)$. A sequence $\left(f_n: V_n \rightarrow \RR\right)$ is \textbf{compatible} with respect to $\VV$ if $f_N \circ \varphi_{N, n}=f_n$ for all $n \preceq N$, and each $f_n$ is $\mathrm{G}_n$-equivariant.
\end{definition}
In other words, a compatible sequence of functions is one that takes the same value on ``equivalent'' inputs, i.e., those related by symmetries and embeddings. A sequence of functions $(f_n)$ is compatible if and only if there exists a unique $G_{\infty}$-equivariant map $f_{\infty}\colon V_{\infty} \to \RR$ such that $f_{\infty}|_{V_{n}} = f_{n}$ for all $n \in \NN$~\cite{levin2025transferring}.
Thus, we can identify any compatible sequence of functions with a unique extension that matches the functions when restricted to finite dimensions.

In order to reason about continuity of $f_\infty$, we will endow $V_{\infty}$ with a metric. To do so, we consider a sequence of \emph{compatible invariant norms} $(\|\cdot\|_{V_n}\colon V_n \to \RR)$. Compatibility yields the existence of a $G_\infty$-invariant norm $\|\cdot\|_{V_{\infty}}$ on $V_{\infty}$. %
This construction allows us to define a limit space that contains not only (equivalent classes of) finite-dimensional objects, but also their limits.

\begin{definition}
\label{def:LimitSpace}
    The \textbf{limit space} is the pair $\left(\Vinf, \mathrm{G}_{\infty}\right)$ where $\Vinf$ denotes the completion of $V_{\infty}$ with respect to $\|\cdot\|_{V_{\infty}}$, endowed with the symmetrized metric
\begin{equation}\label{eq:symMetric}
\overline{\mathrm{d}}(x, y)\coloneqq\inf _{g \in \mathrm{G}_{\infty}}\|g \cdot x-y\|_{V_{\infty}} \quad \text { for } x, y \in \overline{V_{\infty}}.
\end{equation}
The \textbf{orbit space} is the metric space $(\VmG, \overline{\mathrm{d}})$ with
$$\VmG \coloneqq \overline{\left\{G_\infty \cdot x \mid x \in \Vinf \right\}},$$
where the outer completion is taken with respect to $\overline{\mathrm{d}}$.
\end{definition}

This symmetrized metric is a pseudometric on
$\Vinf$ and a metric on the space of orbit closures of $\overline{V_{\infty}}$ under the action of $G_\infty$ \cite{levin2025transferring}. We slightly abuse notation, by denoting this set $\VmG$ and calling it ``orbit space.'' Two elements $x , y \in \Vinf$ are in the same orbit (closure) if, and only if, $\overline{\mathrm{d}}(x, y) = 0.$
 Let us provide a quick example that will be used recurrently.

\begin{example}[Duplication sequence with $\ell_p$ norms]\label{ex:Lp}
Consider the duplication embedding from Example \ref{exp: consistent sequence}. Endow each $V_n = \RR^n$ with the normalized $\ell_p $ norm for $p \in [1, \infty)$, i.e., $\|x\|_{V_n} = (\frac{1}{n}\sum_i^n x_i^p)^{1/p}$. Then, the limit space $V_\infty$ can be identified with $L^p([0, 1])$, with permutations $\mathrm{S}_n$ acting by permuting the $n$ consecutive intervals of length $1/n$ in the domain $[0,1]$ of functions in $L^p([0,1])$. The orbit space $\VmG$ corresponds to the space of probability distributions $\mathcal{P}_p(\mathbb{R})$. Furthermore, the symmetrized distance $\bard$ coincides with the Wasserstein-$p$ metric. Details appear in Appendix~\ref{app: details duplication embedding consistent sequence}.
\end{example}

Armed with these notions, we are now ready to define continuity across dimensions. %

\begin{definition}
\label{def: transferability}
     Let $\VV$ be a consistent sequence endowed with a compatible invariant norm $(\|\cdot\|_{V_n})$. A sequence of invariant functions $(f_n$ : $V_n \rightarrow \RR)$ is \textbf{continuously transferable} if $(i)$ there exists an extension $f\colon \Vinf \to \RR$, i.e., $f_n=\left.f_{\infty}\right|_{V_n}$ for all $n$, and $(ii)$ the extension is continuous with respect to $\|\cdot \|_{V_\infty}.$ %
\end{definition}

We remark that if $\left(f_n\right)$ is continuously transferable, then it must be compatible. Further, an extension $f_\infty$ is continuous in the limit space with respect to  $\|\cdot\|_{V_\infty}$ %
if, and only if, it is continuous in the orbit space $\VmG$ with respect to $\overline{\mathrm{d}}$; see Lemma \ref{lem:ContinuityEqual} in Appendix \ref{app: detailed preliminary}. The term `transferable' is motivated by the results in \cite{levin2025transferring}, which show that if a high-dimensional model is transferable, then it can be trained on low-dimensional inputs and generalize to higher-dimensional ones. %

\section{How to establish any-dimensional invariant universality?}

\label{sec: Any-dimensional universality}

In this section, we
outline a general recipe for establishing any-dimensional invariant universality. We begin by formally stating our goal. Consider a continuously transferable, consistent sequence of invariant functions $(f_n:V_n\rightarrow \RR)$. Building upon the foundational concepts from Section~\ref{sec:
  preliminary}, our primary focus is on approximating its $G_\infty$-invariant extension $f_\infty$. To approximate this function we consider a class of continuously transferable any-dimensional models $\cF$. We identify each of these models with its limiting extension $\widehat f_\infty$ and our goal is to show that: for any fixed accuracy $\varepsilon > 0$ and compact set $K$ in the domain of $f_\infty$, there exists $\widehat f_\infty \in \cF$ such that
\begin{equation}\label{eq:target}
  \sup_{x \in K} |f_\infty (x) - \widehat f_{\infty} (x) | \leq \varepsilon.
  \end{equation}
  We make two remarks concerning domains and topology.
\ifdefined\isarxiv
  \paragraph{Domain of the extension.}
\else
  \textbf{Domain of the extension.}
\fi
  Since invariant functions are constant on orbits, they admit two equivalent viewpoints: as functions on the limit space $f_\infty
  \colon \Vinf \to \RR$, or as functions on the orbit space $f_{\infty} \colon \VmG
  \to \RR$. Thus, the question of approximation can be posed over any of these domains. Any consistent norm $\|\cdot\|_{V_\infty}$ induces a topology over the limit space, the corresponding topology on the orbit space is given by the symmetrized metric \eqref{eq:symMetric}. As it turns out, compact sets in the orbit space are `richer' than those in the limit space. %
  The underlying reason for this phenomenon is that by taking orbits, we collapse `big' subsets (potentially noncompact) into single orbits. To illustrate this point, consider the setting in Example~\ref{ex:Lp} where $\Vinf = L^p([0,1])$ are integrable functions and $\VmG = \cP_p(\RR)$ are probability distributions with $p$th moments. While the set $\widetilde K = \{X \in L^p([0,1]) \mid \rangee{X} \subseteq [0, 1]\}$ is not compact in $L^p([0,1]),$ the collection of its equivalence classes $K = \{\mu \in \cP_p(\RR) \mid \supp(\mu) \subseteq [0, 1]\}$ is compact in $\cP_p(\RR);$ we defer details to Appendix~\ref{app:details-recipe}. In Section~\ref{sec: instantiations}, we will see more examples of this phenomena. In what follows, we focus on universality on the orbit space equipped with the symmetrized metric, i.e., we prove~\eqref{eq:target} for compact subsets $K$ of the orbit space.

\ifdefined\isarxiv
  \paragraph{Induced topology and activation functions.}
\else
  \textbf{Induced topology and activation functions.}
\fi

\revised{In finite dimensional settings, all norms induce equivalent topologies, which means that continuity with respect to one norm guarantees continuity under any other. In sharp contrast,}
  the topology of the limit space $\Vinf$ (and correspondingly that of the orbit space) is heavily dependent on the choice of the consistent norm $\|\cdot\|_{V_\infty}.$ This choice is often dictated by the learning task, i.e., the map $f_\infty$ we aim to approximate (or in practice learn) is naturally continuous with respect to certain norms but not others. To illustrate this point, consider again the setting in Example~\ref{ex:Lp} with $p = 2$, and the sequence of `second moment' functions
  $$
  f_n (x) = \frac{1}{n} \sum_{j=1}^n x_j^2 \,\,\, \text{with extension}\, \,\, f_\infty(\mu_X) = \EE_{Y\sim \mu_X}[Y^2]%
  $$
where $\mu_X\in\cP_2(\RR)$ denotes the probability law associated with $X\in L^2([0,1])$. In this case, $f_\infty$ is continuous with respect to $W_2$, but discontinuous with respect to $W_1$.
As we will see in Section~\ref{sec: instantiations}, several existing any-dimensional architectures are not continuous in the natural topologies induced by norms of interest. A recurring theme is that the chosen topology constrains which activation functions and architectural primitives can be used.

\ifdefined\isarxiv
  \paragraph{Desiderata and a recipe.}
\else
  \textbf{Desiderata and a recipe.}
\fi
We now outline our strategy for proving universality. Our proof strategy is based on the Stone--Weierstrass theorem, which we recall for convenience; see \cite{rudin1991functional}[Theorem 5.7] for a proof.

\begin{theorem}[Stone-Weierstrass]
    \label{prop: stone weierstrass}
     Let $C(K)$ be the set of continuous real-valued functions on a compact metric space $K$ endowed with the norm $\|f\|_{\infty} = \sup_{x \in K} |f(x)|$. Let $\cF$ be a subalgebra of $C(K)$ which contains a non-zero constant function.\footnote{A subalgebra of functions is a set closed under addition, scalar multiplication, and products.} Then $\cF$ is dense in $C(K)$ if, and only if, it separates points.\footnote{The function class $\cF$ separates points on $K$ if for all $x, y \in K$ and $x \neq y$, there is an $f \in \cF$ such that $f(x) \neq f(y).$}
\end{theorem}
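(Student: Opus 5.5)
The necessity direction is immediate. If $\cF$ is dense in $C(K)$ and $x\neq y$ in $K$, the function $u\mapsto d(u,y)$ is continuous and takes different values at $x$ and $y$. Approximating it uniformly within $d(x,y)/3$ by some $f\in\cF$ gives $f(x)\neq f(y)$.

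For sufficiency, I will follow the classical lattice argument. The first step is to pass to the uniform closure $\overline{\cF}$, which is again a subalgebra containing the constants. Since density of $\cF$ is the same as $\overline{\cF}=C(K)$, it suffices to show that any closed subalgebra $\mathcal{A}$ that contains constants and separates points equals $C(K)$.

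The key step is to show that $\mathcal{A}$ is closed under $|\cdot|$, and hence under $\max$ and $\min$ via $\max(f,g)=\tfrac12(f+g+|f-g|)$. Take $f\in\mathcal{A}$ with $\|f\|_\infty\le 1$. By the Weierstrass approximation theorem on $[-1,1]$, or more elementarily by the explicit binomial series for $\sqrt{1-s}$ applied to $t^2$, there are polynomials $p_k$ with $p_k\to|t|$ uniformly on $[-1,1]$. Replacing $p_k$ by $p_k-p_k(0)$, we may assume $p_k(0)=0$; since constants are available, this normalization is not even needed. Then $p_k\circ f\in\mathcal{A}$ converges uniformly to $|f|$, and closedness gives $|f|\in\mathcal{A}$.

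Next comes two-point interpolation. Given $x\neq y$ and reals $a,b$, pick $h\in\mathcal{A}$ with $h(x)\neq h(y)$ and set
\[
g=a+(b-a)\,\frac{h-h(x)}{h(y)-h(x)}\in\mathcal{A}.
\]
Then $g(x)=a$ and $g(y)=b$.

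The last step is a compactness argument, which I expect to be the only real obstacle, and it is standard. Fix $F\in C(K)$, $\varepsilon>0$ and $x\in K$. For each $y$, take $g_{y}\in\mathcal{A}$ with $g_y(x)=F(x)$ and $g_y(y)=F(y)$. The set $U_y=\{g_y<F+\varepsilon\}$ is an open neighbourhood of $y$. Finitely many $U_{y_i}$ cover $K$, so $\phi_x=\min_i g_{y_i}\in\mathcal{A}$ satisfies $\phi_x<F+\varepsilon$ everywhere and $\phi_x(x)=F(x)$. The set $W_x=\{\phi_x>F-\varepsilon\}$ is an open neighbourhood of $x$; again choose a finite subcover and set $\psi=\max_j\phi_{x_j}$. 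This gives $\|\psi-F\|_\infty<\varepsilon$. Since $\mathcal{A}$ is closed, $F\in\mathcal{A}$.
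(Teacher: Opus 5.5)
Your proof is correct. It is the classical lattice proof: pass to the uniform closure, obtain closure under $|\cdot|$ by composing with polynomials approximating $|t|$ on $[-1,1]$, interpolate at two points, then use compactness twice to form a finite $\min$ and a finite $\max$. The only steps left implicit are routine:
\begin{itemize}
\item $\cF$ contains all constants, because it contains one nonzero constant and is closed under scaling.
\item A general $f$ must be rescaled to $\|f\|_\infty\le 1$ before composing with $p_k$.
\item For $y=x$ one takes $g_x\equiv F(x)$.
\end{itemize}

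The paper gives no proof of its own; it defers to Theorem 5.7 of Rudin's \emph{Functional Analysis}. As I recall that reference, the classical statement is obtained there as a special case of Bishop's generalized Stone--Weierstrass theorem, which is proved by duality. The argument applies Krein--Milman to the weak$^*$-compact unit ball of the annihilator of the algebra in the space of measures. The key fact is that the support of an extreme point of that ball is an antisymmetric set; for a real, point-separating algebra containing the constants, such a set is a single point. Hahn--Banach and the Riesz representation theorem then convert the vanishing of annihilating measures into density.

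Your route is elementary and self-contained. It needs only the one-variable Weierstrass theorem, or the binomial series for $\sqrt{1-s}$. The sufficiency direction works verbatim on any compact Hausdorff space, and the argument is essentially constructive. The functional-analytic route presupposes considerably more machinery. In exchange it gives the complex self-adjoint version and Bishop's localization to maximal antisymmetric sets. The paper never needs that generality, since it only applies the real-valued statement on compact metric orbit spaces.
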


Density in the sup-norm is equivalent to~\eqref{eq:target}. This discussion suggests two desiderata for practical model classes $\cF$:
    $(i)$ functions in $\cF$ should be continuous with respect to the natural topology induced by the learning task; and
    $(ii)$ the class $\cF$ should form a subalgebra that separates points.
Somewhat surprisingly, many existing architectures fail to meet one (or both) of these requirements, likely because most prior work studies approximation on a fixed dimension $V_n$ rather than on the orbit space.
Guided by these desiderata, we propose the following three-step recipe.
\ifdefined\isarxiv
\begin{enumerate}[leftmargin=10pt]
\item[] \textbf{Step 1 (Compact sets).} Given a consistent norm $\|\cdot\|_{\infty}$ dictated by the learning task, characterize compact sets in the orbit space $\VmG$.

\item[] \textbf{Step 2 (Continuity).} Construct a collection $\cF$ of invariant any-dimensional models that are continuously transferable with respect to the symmetrized metric \eqref{eq:symMetric} on the orbit space $\VmG$.

\item[] \textbf{Step 3 (Universality).} Establish universality of $\cF$ via the Stone–Weierstrass theorem by showing that $\cF$ forms a {subalgebra} that separates {points}.
\end{enumerate}
\else

\textbf{Step 1 (Compact sets):} Given a consistent norm $\|\cdot\|_{\infty}$ dictated by the learning task, characterize compact sets in the orbit space $\VmG$.

\textbf{Step 2 (Continuity):} Construct a collection $\cF$ of invariant any-dimensional models that are continuously transferable with respect to the symmetrized metric \eqref{eq:symMetric} on the orbit space $\VmG$.

\textbf{Step 3 (Universality):} Establish universality of $\cF$ via the Stone–Weierstrass theorem by showing that $\cF$ forms a {subalgebra} that separates {points}.
\fi

\section{Instantiations of the recipe}
\label{sec: instantiations}
In this section, we instantiate our general recipe for any-dimensional models on three representative domains: sets (Section~\ref{sec:set}), graphs (Section~\ref{sec: graph}), and point clouds (Section~\ref{sec: point cloud}).
For brevity, we work directly with the corresponding infinite-dimensional extensions of our models, rather than the underlying sequences. Across all domains, we find that standard architectures are either discontinuous, or fail to be universal. We therefore either introduce minor, principled modifications to these existing architectures or propose new architectures, and we prove that the resulting models are both continuous and universal.
\subsection{Set functions}
\label{sec:set}
We start with models for sets of any size.
Since fully connected feedforward neural networks appear repeatedly throughout, we define the family
\begin{equation}
\label{eq:FCNN}
    \NNphi_{k,m} \coloneqq \bigcup _{r=1}^\infty \big\{ L_2 \circ \phi \circ L_1 \mid  L_1 \in \cL_{r, m}, L_2 \in \cL_{k, r}\big\} ,
\end{equation}
where $
\cL_{m, n}\coloneqq \left\{x \mapsto W x+\theta \mid W \in \RR^{m \times n}, \theta \in \RR^m\right\}
$
is the collection of affine maps from $\RR^n$ to $\RR^m$
and $\phi \colon \RR \to \RR$ is an activation function applied component-wise. Observe that these networks can be arbitrarily wide.

We consider two models whose input spaces parallel the consistent sequences in Example~\ref{exp: consistent sequence}. Although both sequences share the same finite-dimensional spaces $V_n=\RR^{n\times k}$, they use different embeddings and consistent norms, and therefore induce substantially different orbit spaces. In both settings we start from the DeepSets architecture \cite{zaheer2017deep}, which maps
\begin{equation}\label{eq:deep_sets}
    \operatorname{DeepSets}^{\rho,\sigma}(X)
=
\sigma\left(\sum_{i=1}^{\infty}\rho(X_{i:})\right),
\end{equation}
where $\rho\in \NNphi_{r,k}$, $\sigma\in \NNphi_{1,r}$, $ r\in \NN$.
For each of the two orbit spaces we consider, we introduce a mild modification of DeepSets that restores continuity and yields universality.

\subsubsection{Universality over sequences}
\label{sec: set zero-padding}
Additional details about the following constructions can be found in Appendix~\ref{app: details Zero-padding consistent sequence}.
For the first example, we consider the $k$-fold direct sum of the zero-padding consistent sequence $\Vzero^{\oplus k}=\{(\RR^{n \times k}),(\varphi_{N, n}^{\oplus k}),(\mathrm{S}_n)\}$. %
Given an arbitrary norm $\|\cdot\|_{\RR^k}$ in $\RR^k$, and a real number $p\in [1,\infty)$, we endow $\Vzero^{\oplus k}$ with the compatible sequence of $\ell_p$ norm $\|X\|_{V_n} = (\sum_{i= 1}^n \|X_{i:}\|^p_{\RR^k})^{1/p}$, which are unchanged under zero-padding.
The resulting limit space is the space of sequences
$\Vinf=\ell_p(\RR^k)$, with the group $G_\infty$ acting by permuting finitely many entries of these sequences. %

\ifdefined\isarxiv
\paragraph{\textbf{Step 1 (Compact sets).}}
\else
\textbf{Step 1 (Compact sets).}
\fi
For this example, we consider orbit closures of compact sets $K \subseteq \ell_p(\RR^k)$, which we denote by $K/G_\infty$. These are compact subsets of $\VmG$ by the definition of the quotient topology on $\VmG$.
Compact sets in $\ell_p(\RR^k)$ are well-understood.
\begin{proposition}[Compactness in $\ell_p(\RR^k)$, \cite{diestel2012sequences}]%
\label{prop:lpCompact}
    For $p\in [1,\infty)$, a set $K\subseteq \ell_p(\RR^k)$ is compact if, and only if, it satisfies that $(i)$ it is  closed; $(ii)$ it is bounded, i.e., $\sup_{X \in K}\|X\|_p%
    < \infty;$ and $(iii)$ it exhibits uniform tail decay, i.e., $\lim _{N \rightarrow \infty} \sup _{X \in K}(\sum_{i \geq N}\|X_{i:}\|_{\RR^k}^p)^{1 / p}=0.$
\end{proposition}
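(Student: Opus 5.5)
We argue via total boundedness. Recall that $\ell_p(\RR^k)$ is a Banach space, and in a complete metric space a set is compact if and only if it is closed and totally bounded. The plan is to show that, for closed sets, total boundedness is equivalent to conditions (ii) and (iii) together. Throughout we use the truncation operators $P_N X \coloneqq (X_{1:},\dots,X_{N-1:},0,0,\dots)$ and the tail $T_N X \coloneqq X - P_N X$. For these, $\|T_N X\|_p = (\sum_{i\ge N}\|X_{i:}\|_{\RR^k}^p)^{1/p}$. Since all norms on $\RR^k$ are equivalent, $\mathrm{range}(P_N)$ is a finite-dimensional normed space in which bounded sets are totally bounded.

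\textbf{Necessity.} Suppose $K$ is compact. Then it is closed and bounded, because $X\mapsto \|X\|_p$ is continuous and compact sets in metric spaces are closed. For (iii), fix $\varepsilon>0$ and cover $K$ by finitely many balls $B(Y^{(j)},\varepsilon)$, $j=1,\dots,m$. Each $Y^{(j)}$ has summable $p$-th powers, so there is a common $N$ with $\|T_N Y^{(j)}\|_p<\varepsilon$ for all $j$. Since $T_N$ is a contraction, any $X\in K$ lies within $\varepsilon$ of some $Y^{(j)}$, and therefore
\[
\|T_N X\|_p\le \|T_N(X-Y^{(j)})\|_p+\|T_N Y^{(j)}\|_p<2\varepsilon .
\]
Finally, $\|T_N X\|_p$ is nonincreasing in $N$, so the supremum over $K$ tends to $0$.

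\textbf{Sufficiency.} This is the main step. Given $\varepsilon>0$, use (iii) to pick $N$ with $\sup_{X\in K}\|T_N X\|_p<\varepsilon/3$. By (ii), $P_N(K)$ is a bounded subset of the finite-dimensional space $\mathrm{range}(P_N)$, so it is totally bounded and admits a finite $\varepsilon/3$-net $\{Z^{(j)}\}$. For $X\in K$, choose $j$ with $\|P_N X - Z^{(j)}\|_p<\varepsilon/3$. Then
\[
\|X-Z^{(j)}\|_p\le \|P_NX-Z^{(j)}\|_p+\|T_NX\|_p<\varepsilon,
\]
so $K$ is totally bounded. Being closed in a complete space, $K$ is complete, and hence compact.

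The only subtlety is the choice of norm $\|\cdot\|_{\RR^k}$, which is handled by equivalence of norms in finite dimensions. The completeness of $\ell_p(\RR^k)$ is standard: it is the usual $\ell_p$ argument applied coordinatewise.
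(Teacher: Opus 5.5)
Your proof is correct and complete. The paper gives no proof of its own for this proposition; it quotes it from \cite{diestel2012sequences}, so there is no competing argument to compare against. Your total-boundedness argument is the standard proof behind that citation, and it makes the paper self-contained on this point. It runs in two directions. For necessity, a finite $\varepsilon$-net has uniformly small tails, and $T_N$ is a contraction. For sufficiency, uniform tail decay reduces $K$, up to $\varepsilon$, to the truncation $P_N(K)$ in a finite-dimensional space, where boundedness gives total boundedness.

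The direction the paper actually relies on later is necessity: a compact $K$ is bounded and has uniformly decaying tails. This is used in the proofs of Theorem~\ref{thm:ZeropaddingContinuous}, Claim~\ref{claim:SeparateMultiset} and Lemma~\ref{lem:ZeropaddingFDSdenseFcont}, and your first half establishes it directly. Sufficiency is only needed for the paper's example of a compact set.
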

For instance, the set $\{(x_i)_{i=1}^\infty: |x_i|\leq 2^{-i} \textrm{ for all } i\in \NN\}$ is compact in $\ell_p(\RR)$ for all $p\in[1,\infty)$. We now turn to constructing a family of continuous models on $\VmG$.  %

\ifdefined\isarxiv
\paragraph{\textbf{Step 2 (Continuity).} }
\else
\textbf{Step 2 (Continuity).}
\fi
Our architecture is based on DeepSets \eqref{eq:deep_sets}. To ensure that our architecture is compatible with respect to our consistent sequence structure, i.e., that it is unchanged under zero-padding, we assume that $\rho\left(\mathbf{0}_k\right)=\mathbf{0}_r$.
In general, \eqref{eq:deep_sets} is not a continuous model as the infinite sum may diverge since the function $\rho$ can decay arbitrarily slowly.
Motivated by this observation, we propose
\begin{equation}
\label{equ: zeropadding Deepset form}
    \operatorname{DeepSets}_\infty^{\rho,\sigma}(X)
=
\sigma\left(\sum_{i=1}^{\infty}\|X_{i:}\|^p_{\RR^k}\cdot\rho(X_{i:})\right),
\end{equation}
where $\rho\in \NNphi_{r,k}$ and $\sigma\in \NNphi_{1,r}$ with $ r\in \NN$. %
The following proposition shows that $\DS^{\rho,\sigma}$ is indeed continuous on the space $\VmG$ of orbit closures when both $\rho$ and $\sigma$ are continuous.
The proof is deferred to Appendix~\ref{app: proof of zeropadding set continuous}.

\begin{theorem}[Continuity on sequences]
    \label{thm:ZeropaddingContinuous}
Suppose $\rho\in C(\RR^k,\RR^r)$ and $\sigma\in C(\RR^r,\RR)$ are continuous functions. Then, the map $\operatorname{DeepSets}_\infty^{\rho,\sigma}\colon \VmG \to \RR$ defined in~\eqref{equ: zeropadding Deepset form} is continuous with respect to the symmetrized metric \eqref{eq:symMetric}.

\end{theorem}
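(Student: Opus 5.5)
The plan is to show that the inner map $\Psi(X)\coloneqq\sum_{i=1}^\infty \|X_{i:}\|_{\RR^k}^p\,\rho(X_{i:})$ is well defined on $\ell_p(\RR^k)$, invariant under $G_\infty$, and continuous with respect to $\|\cdot\|_p$. Composing with the continuous $\sigma$ then makes $\operatorname{DeepSets}_\infty^{\rho,\sigma}$ continuous on $\Vinf$. By Lemma~\ref{lem:ContinuityEqual}, continuity on $\Vinf$ for a $G_\infty$-invariant function is equivalent to continuity on the orbit space $\VmG$ with respect to $\bard$, so this suffices. Invariance is immediate: $G_\infty$ only permutes finitely many entries, and the series converges absolutely (shown below), so a finite rearrangement does not change its value.

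\textbf{Well-definedness.} Fix $X\in\ell_p(\RR^k)$. Since $\sum_i\|X_{i:}\|^p<\infty$, the rows satisfy $\|X_{i:}\|\le \|X\|_p$ for all $i$. Hence every row lies in the compact ball $B_R=\{x:\|x\|_{\RR^k}\le R\}$ with $R=\|X\|_p$. Continuity of $\rho$ gives $M_R\coloneqq\sup_{x\in B_R}\|\rho(x)\|<\infty$. Then
\[
\sum_i \|X_{i:}\|^p\|\rho(X_{i:})\|\le M_R\|X\|_p^p<\infty,
\]
so the series converges absolutely. (The normalization $\rho(\mathbf 0_k)=\mathbf 0_r$ is not even needed here, because the weight $\|X_{i:}\|^p$ vanishes on zero rows anyway; that normalization is only relevant for compatibility.)

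\textbf{Continuity of $\Psi$.} This is the main step. Take $X^{(m)}\to X$ in $\ell_p$ and let $R=\sup_m\|X^{(m)}\|_p\vee\|X\|_p<\infty$, so all rows live in $B_R$. Set $h(x)\coloneqq\|x\|^p\rho(x)$; it is uniformly continuous on $B_R$ with $\|h(x)\|\le M_R\|x\|^p$. Given $\varepsilon>0$, split the sum into the head $i<N$ and the tail $i\ge N$.

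For the tail, uniform tail decay of the convergent sequence is needed. Choose $N$ so that $(\sum_{i\ge N}\|X_{i:}\|^p)^{1/p}<\varepsilon$. By Minkowski,
\[
\Bigl(\sum_{i\ge N}\|X^{(m)}_{i:}\|^p\Bigr)^{1/p}\le \varepsilon+\|X^{(m)}-X\|_p<2\varepsilon \quad\text{for large } m.
\]
Both tails of $\Psi$ are then bounded by $M_R(2\varepsilon)^p$.

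For the head, there are finitely many terms, and $X^{(m)}_{i:}\to X_{i:}$ for each fixed $i$. Continuity of $h$ makes $\sum_{i<N}\|h(X^{(m)}_{i:})-h(X_{i:})\|<\varepsilon$ for large $m$.

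Combining the two pieces gives $\|\Psi(X^{(m)})-\Psi(X)\|\lesssim \varepsilon+ M_R\varepsilon^p$, hence $\Psi(X^{(m)})\to\Psi(X)$.

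Finally, $\sigma\circ\Psi$ is continuous and $G_\infty$-invariant on $\Vinf$. Invariance makes it constant on orbits, and continuity makes it constant on orbit closures (the points with $\bard=0$), so it descends to a well-defined function on $\VmG$. That function is continuous by Lemma~\ref{lem:ContinuityEqual}. The only delicate point I expect is the tail estimate above, where one needs the uniform bound $M_R$ combined with the tail control obtained from convergence in $\ell_p$.
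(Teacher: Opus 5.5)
Your proof is correct and follows essentially the same route as the paper. Both write the model as $\sigma$ composed with the inner series, show the series is continuous on $\ell_p(\RR^k)$ by combining a uniform bound on $\rho$ over a ball, uniform tail control, and continuity of the finitely many head terms, and then pass to $\VmG$ via Lemma~\ref{lem:ContinuityEqual}. The difference is only in packaging: you argue sequentially and obtain the tail control from Minkowski's inequality, whereas the paper fixes an arbitrary compact $K\subseteq\ell_p(\RR^k)$, takes the tail decay from Proposition~\ref{prop:lpCompact}, applies the uniform limit theorem to the partial sums, and uses the fact that continuity on a metric space is determined on compact subsets.
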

As an immediate consequence, we get that all functions in
\ifdefined\isarxiv
$$ \cFDS \coloneqq  \left\{\DS^{ \rho,  \sigma}\mid  r \in \NN, \rho \in \NNphi_{r,k},
\sigma \in \NNphi_{1,r}\right\} \text{ are continuous.}$$
\else
$$ \cFDS \coloneqq  \left\{\DS^{ \rho,  \sigma}\mid  r \in \NN, \rho \in \NNphi_{r,k},
\sigma \in \NNphi_{1,r}\right\}$$
are continuous.
\fi

\ifdefined\isarxiv
\paragraph{\textbf{Step 3 (Universality).} }
\else
\textbf{Step 3 (Universality).}
\fi
Finally, we prove that the above variation of DeepSets is universal for sequences.
The proof of the following result is deferred to Appendix~\ref{app:ProofZeropaddingUniversality}.

\begin{theorem}[Universality of $\DS$]
\label{thm:ZeropaddingUniversality}
Suppose that the activation $\phi$ is continuous and non-polynomial. %
Let $K \subseteq \ell_p(\RR^k)$ be any compact set.
Then, $\cFDS$ is dense in
$C(K/G_\infty)$.
\end{theorem}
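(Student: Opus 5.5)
Following the recipe of Section~\ref{sec: Any-dimensional universality}, I would apply the Stone--Weierstrass theorem (Theorem~\ref{prop: stone weierstrass}) on the compact metric space $K/G_\infty$. The obstruction is that $\cFDS$ is not itself a subalgebra, since products of networks $\sigma$ are not networks. So I would first work with the auxiliary algebra $\mathcal{A}$ generated by constants and the ``moment'' features
\[
\Psi_h(X)\coloneqq\sum_{i=1}^\infty \|X_{i:}\|_{\RR^k}^p\, h(X_{i:}),\qquad h\in C(\RR^k),
\]
that is, all polynomials in finitely many $\Psi_{h_1},\dots,\Psi_{h_m}$. By Theorem~\ref{thm:ZeropaddingContinuous} with $\sigma$ the identity or a monomial, each such polynomial is continuous on $\VmG$. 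The plan is then to show in turn that $\mathcal{A}$ is dense in $C(K/G_\infty)$ and that every element of $\mathcal{A}$ is uniformly approximated on $K/G_\infty$ by elements of $\cFDS$.

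\textbf{Separation.} Suppose the orbit closures of $X,Y\in K$ are distinct. I would look at the finite positive Borel measures $\mu_X=\sum_i \|X_{i:}\|^p\delta_{X_{i:}}$ restricted to $\RR^k\setminus\{0\}$, whose total mass is $\|X\|_p^p$. Then $\Psi_h(X)=\int h\,d\mu_X$, which is insensitive to the zero entries. If $\Psi_h(X)=\Psi_h(Y)$ for all $h\in C_c(\RR^k\setminus\{0\})$, then $\mu_X=\mu_Y$. Hence the multisets of nonzero rows agree with multiplicities, each nonzero vector occurring finitely often by summability. Two $\ell_p$ sequences with the same multiset of nonzero entries lie in the same orbit closure: permute finitely many entries to match the first $N$ nonzero rows, and use the tail estimate to make the distance arbitrarily small. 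Thus $\bard(X,Y)=0$, and so $\mathcal{A}$ separates points and contains constants. Stone--Weierstrass then gives density of $\mathcal{A}$.

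\textbf{Approximation by $\cFDS$.} Fix $F=P(\Psi_{h_1},\dots,\Psi_{h_m})\in\mathcal{A}$. I would proceed in three steps.
\begin{enumerate}
\item By Proposition~\ref{prop:lpCompact}, the rows of elements of $K$ lie in a compact set $B\subseteq\RR^k$, and $\sup_{X\in K}\|X\|_p^p\le M$.
\item Approximate each $h_j$ on $B$ by a network $\rho_j\in\NNphi_{1,k}$ to accuracy $\delta$, using the classical universality for non-polynomial $\phi$ (Leshno et al.). This gives $|\Psi_{h_j}-\Psi_{\rho_j}|\le M\delta$ on $K$. Stacking gives $\rho=(\rho_1,\dots,\rho_m)\in\NNphi_{m,k}$, since concatenating widths keeps us in \eqref{eq:FCNN}.
\item The vector $\Psi_\rho(X)$ ranges in a bounded set $D\subseteq\RR^m$. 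Approximate $P$ uniformly on a neighborhood of $D$ by $\sigma\in\NNphi_{1,m}$, again by universality. Uniform continuity of $P$ on compacts then yields $|F-\DS^{\rho,\sigma}|<\varepsilon$ on $K/G_\infty$.
\end{enumerate}

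\textbf{Main obstacle.} The subtle point is the constraint $\rho(\mathbf{0}_k)=\mathbf{0}_r$ required for compatibility. I would handle it by replacing $\rho$ with $\rho-\rho(\mathbf{0})$, which is still in $\NNphi$ since only the bias changes. Because of the weight $\|X_{i:}\|^p$, this shift changes $\Psi_\rho$ by the constant vector $\|X\|_p^p\,\rho(\mathbf{0})$, which is small on $K$ since $|\rho(\mathbf{0})|\lesssim\delta$ when $h_j(0)=0$. Alternatively, the shift can be absorbed into $\sigma$ by adding $\|X\|_p^p=\Psi_1$ as an extra feature. A related subtlety in the separation step is that the weight vanishes at $0$, which is exactly why zero rows are invisible, consistent with zero-padding invariance. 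I must also justify that $\Psi_h$ with $h\in C_c(\RR^k\setminus\{0\})$ reduces to continuous $h$ on $\RR^k$, which it does.
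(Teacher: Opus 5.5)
Your proposal is correct and follows essentially the paper's route. Both arguments apply Stone--Weierstrass to an intermediate class of continuous features $\sum_i\|X_{i:}\|^p_{\RR^k}h(X_{i:})$, then use Leshno-type approximation of $\rho$ and $\sigma$ on compacts; the paper's class is $\cFcont$, with arbitrary continuous $\sigma$ rather than polynomials, and it separates points with an explicit bump at a row of differing multiplicity rather than by Riesz uniqueness. Your ``main obstacle'' is moot: since $\|\mathbf{0}\|^p_{\RR^k}\rho(\mathbf{0})=0$, zero rows contribute nothing whatever $\rho(\mathbf{0})$ is, and the paper's proof does not enforce the constraint. Your shift fix is harmless anyway once you restrict generators to $h(0)=0$, which your separation step already permits.
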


\subsubsection{Universality over measures}

Additional details about the constructions in this section are deferred to Appendix~\ref{app: details duplication embedding consistent sequence}. We consider the $k$-fold direct sum of the duplication embedding consistent sequence $\Vdup^{\oplus k}\coloneqq\{(\RR^{n \times k}),(\varphi_{N, n}^{\oplus k}),(\mathrm{S}_n)\}$ defined as in Example \ref{exp: consistent sequence}. Given an arbitrary norm $\|\cdot\|_{\RR^k}$ on $\RR^k$, and a real number $p\in[1,\infty)$, we endow each $\RR^{n\times k}$ with the normalized $\ell_p$ norm, i.e., $\|X\|_{\bar p}=( \frac{1}{n}\sum_{i=1}^n \|X_{i:}\|_{\RR^k}^p)^{1/p}$. As explained in Example~\ref{ex:Lp}, the resulting limit space is the space of functions $L^p([0,1],\RR^k)$, and the space of orbit closures $\Vinf/G_\infty$ can be identified with the Wasserstein space $\cP_p(\RR^k)$ equipped with the Wasserstein-$p$ distance $W_p$. 

\ifdefined\isarxiv
\paragraph{\textbf{Step 1 (Compact sets).}}
\else
\textbf{Step 1 (Compact sets).}
\fi
We begin with a characterization of compact sets in the orbit space.%

\begin{proposition}[Compacta in $\cP_p(\RR^k)$~\cite{ambrosio2005gradient}%
]
\label{prop:Compact Prob}
For $p\in[1,\infty)$, a subset $Q \subseteq \cP_p(\RR^k)$ is compact if, and only if, it is $(i)$ closed; $(ii)$ tight, i.e., for any $ \varepsilon>0$, there is a compact  $K_{\varepsilon}\subseteq \RR^k$ such that $ \sup_{\mu \in Q}\mu\left(\RR^k \backslash K_{\varepsilon}\right) \leq \varepsilon ;$
and $(iii)$ $p$-uniformly integrable, i.e.,
\ifdefined\isarxiv
    $$\lim _{R \rightarrow \infty} \sup _{\mu \in Q} \int_{\|x\|_{\RR^k}>R}\|x\|_{\RR^k}^p\mathrm{~d} \mu(x)=0.$$
  \else
    $\lim _{R \rightarrow \infty} \sup _{\mu \in Q} \int_{\|x\|_{\RR^k}>R}\|x\|_{\RR^k}^p\mathrm{~d} \mu(x)=0.$
    \fi
\end{proposition}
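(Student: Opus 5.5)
The plan is to reduce everything to two classical facts: Prokhorov's theorem (tightness is equivalent to relative compactness for weak convergence of probability measures) and the characterization of $W_p$-convergence as weak convergence plus $p$-uniform integrability. The second fact I would prove via Skorokhod's representation and Vitali's convergence theorem. Since $\cP_p(\RR^k)$ with $W_p$ is a metric space, compactness is the same as sequential compactness, and I will work with sequences throughout.

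\textbf{Sufficiency.} Let $Q$ satisfy $(i)$–$(iii)$ and take a sequence $(\mu_n)\subseteq Q$.
\begin{enumerate}
\item By tightness $(ii)$ and Prokhorov's theorem, a subsequence (not relabeled) converges weakly to some $\mu\in\cP(\RR^k)$.
\item Condition $(iii)$ gives $\sup_n\int\|x\|_{\RR^k}^p\,d\mu_n<\infty$. Applying Fatou's lemma to the lower semicontinuous function $\|x\|_{\RR^k}^p$ then shows $\mu\in\cP_p(\RR^k)$.
\item By Skorokhod's representation theorem, there are random variables $X_n\sim\mu_n$ and $X\sim\mu$ on a common probability space with $X_n\to X$ almost surely.
\item Condition $(iii)$ says exactly that the family $\{\|X_n\|_{\RR^k}^p\}$ is uniformly integrable. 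Since $\|X_n-X\|^p\le 2^{p-1}(\|X_n\|^p+\|X\|^p)$, the family $\{\|X_n-X\|^p\}$ is also uniformly integrable, and it tends to $0$ almost surely.
\item Vitali's theorem gives $\EE\|X_n-X\|^p\to 0$. The joint law of $(X_n,X)$ is a coupling of $\mu_n$ and $\mu$, so $W_p(\mu_n,\mu)^p\le\EE\|X_n-X\|^p\to0$.
\end{enumerate}
Finally, closedness $(i)$ gives $\mu\in Q$, so $Q$ is sequentially compact.

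\textbf{Necessity.} Let $Q$ be compact in $W_p$.
\begin{enumerate}
\item Compact subsets of a metric space are closed, which gives $(i)$.
\item For any coupling of $\mu$ and $\nu$, Jensen's inequality gives $\EE\|X-Y\|\le(\EE\|X-Y\|^p)^{1/p}$, so $W_1\le W_p$. Also, $W_1(\mu_n,\mu)\to0$ forces $\int f\,d\mu_n\to\int f\,d\mu$ for bounded Lipschitz $f$, which is enough for weak convergence. Hence the identity map from $(\cP_p,W_p)$ into $\cP(\RR^k)$ with the weak topology is continuous. The image of $Q$ is therefore weakly compact, and the converse direction of Prokhorov's theorem yields tightness $(ii)$.
\item For $(iii)$, fix $\varepsilon>0$ and take a finite $\varepsilon$-net $\nu_1,\dots,\nu_m\in Q$ in $W_p$.
\item For $\mu\in Q$, pick $\nu_j$ with $W_p(\mu,\nu_j)<\varepsilon$ and an optimal coupling $(X,Y)$ of $(\mu,\nu_j)$. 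On $\{\|X\|>R\}$ I split into two cases.
\begin{itemize}
\item If $\|Y\|\le R/2$, then $\|X-Y\|\ge\|X\|/2$, so $\|X\|^p\le 2^p\|X-Y\|^p$.
\item If $\|Y\|>R/2$, then $\|X\|^p\le2^{p-1}(\|X-Y\|^p+\|Y\|^p)$.
\end{itemize}
\item Combining the two cases,
\[
\int_{\|x\|>R}\|x\|^p\,d\mu\le 2^{p}\,\varepsilon^p+2^{p-1}\max_{j\le m}\int_{\|y\|>R/2}\|y\|^p\,d\nu_j(y).
\]
\item Each $\nu_j$ has a finite $p$th moment, so by dominated convergence the last term vanishes as $R\to\infty$. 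Since $\varepsilon$ was arbitrary, this gives $(iii)$.
\end{enumerate}

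The main obstacle is the sufficiency direction: weak convergence on its own does not control $W_p$. The key step is that uniform integrability turns almost-sure convergence into $L^p$ convergence through Vitali's theorem, and Skorokhod's representation is what makes that step available. If one prefers to avoid Skorokhod, one could instead take the weak limit of optimal couplings between $\mu_n$ and $\mu$ and truncate at radius $R$. I expect that route to need more bookkeeping. Two further remarks: tightness $(ii)$ actually follows from $(iii)$ via Markov's inequality, so it is redundant, but it is harmless to keep it as stated. Also, the norm $\|\cdot\|_{\RR^k}$ is arbitrary, but all norms on $\RR^k$ are equivalent, so none of the constants above matter.
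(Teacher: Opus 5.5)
Your proof is correct. The paper gives no argument of its own for this proposition; it simply imports it from Ambrosio--Gigli--Savar\'e, so the relevant comparison is with the standard route in that reference.

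As I recall the cited monograph, the statement is read off from Prokhorov's theorem together with the characterization of $W_p$-convergence as narrow convergence plus uniformly integrable $p$-th moments. There, the implication ``narrow convergence $+$ uniform integrability $\Rightarrow$ $W_p$-convergence'' is obtained by passing to narrow limits of optimal plans between $\mu_n$ and $\mu$. This is essentially the alternative you sketch at the end, rather than Skorokhod's representation followed by Vitali's theorem.

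The two routes trade off as follows:
\begin{itemize}
\item Your route is short, and it makes transparent that condition $(iii)$ is literally uniform integrability of the family $\|X_n\|_{\RR^k}^p$.
\item The coupling route stays inside optimal transport and yields the two-sided convergence characterization for sequences in one stroke.
\item Your necessity argument for $(iii)$ proves the uniform-over-$Q$ statement directly, using a finite $W_p$-net and the two-case split on $\{\|X\|>R\}$. Any coupling with cost below $\varepsilon^p$ suffices there, so you do not need existence of optimal couplings.
\end{itemize}

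Two small points are worth writing out explicitly.
\begin{itemize}
\item In sufficiency step 2, say why $(iii)$ bounds the moments. Pick $R_0$ with $\sup_{\mu\in Q}\int_{\|x\|>R_0}\|x\|^p\,d\mu\le 1$; then every $p$-th moment in $Q$ is at most $R_0^p+1$.
\item Your remark that $(ii)$ is redundant is right in $\RR^k$, because closed balls there are compact. In the general (separable) metric setting of the cited reference, tightness is a genuinely separate hypothesis, which is why it appears in the statement.
\end{itemize}
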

For example, the collection of all measures supported on a closed ball of radius $R>0$, $Q =
\bigl\{\mu\in\mathcal P_p(\RR^k)\mid\supp(\mu)\subseteq \overline{B(0,R)}\bigr\},$ is compact in this space.
We now turn to constructing a family of continuous models on $\cP_p(\RR^k)$ that can approximate any continuous function on a compact subset to arbitrary precision.

\ifdefined\isarxiv
  \paragraph{Step 2 (Continuity).}
\else
    \textbf{Step 2 (Continuity).}
\fi We consider the normalized DeepSets architecture proposed in \cite{bueno2021representation},
\begin{equation}
\label{equ: normalized deepset form}
    \NDS^{ \rho,  \sigma}(\mu)= \sigma{\left( \int  \rho d\mu \right)},
\end{equation}
where $\rho\in \NNphi_{r,k}$ and $\sigma\in \NNphi_{1,r}$ with $ r\in \NN$.
To establish the continuity of this architecture we require that the entries of $\rho$ do not grow too fast. In particular, we denote by $\cF_p(\RR^k, \RR^r)$ the class of continuous functions $\rho = (\rho_1, \ldots, \rho_r)^\top$ such that each component $\rho_j$ satisfies a $p$-th order growth condition; specifically, for each $j \in [r]$, there exists a constant $M_j > 0$ such that
$$|\rho_j(x)| \leq M_j(1 + \|x\|^p_{\RR^k}), \quad \forall x \in \RR^k.$$
The following theorem establishes the continuity of $\NDS^{ \rho, \sigma}$ under this growth condition. We defer its proof to Appendix \ref{app:proof duplicationContinuity}.

 \begin{theorem}[Continuity on probability measures]
 \label{thm:duplicationContinuity}
 Suppose
  $\rho \in \cF_p(\RR^k, \RR^r)$ and $\sigma \in C(\RR^r, \RR)$ are continuous functions. Then, the map $\NDS^{\rho,\sigma} \colon \cP_p \left(\RR^k\right)\rightarrow\RR$, defined in \eqref{equ: normalized deepset form}, is  continuous with respect to $W_p$.
 \end{theorem}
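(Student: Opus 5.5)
Since $\sigma$ is continuous on $\RR^r$, it suffices to show that the map $\mu \mapsto \int \rho \, d\mu \in \RR^r$ is continuous from $(\cP_p(\RR^k), W_p)$ to $\RR^r$. We can treat each component $\rho_j$ separately, so we fix a scalar continuous function $\rho_j$ with $|\rho_j(x)| \le M_j(1+\|x\|^p_{\RR^k})$. We must show that $W_p(\mu_n,\mu)\to 0$ implies $\int \rho_j\, d\mu_n \to \int \rho_j\, d\mu$. Sequential continuity is enough because $W_p$ is a metric.

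The key input is the standard characterization of $W_p$ convergence (e.g.\ \cite{ambrosio2005gradient}, Villani). On $\cP_p(\RR^k)$, $W_p(\mu_n,\mu)\to 0$ holds if and only if $\mu_n \to \mu$ weakly and the $p$-th moments are uniformly integrable:
\[
\lim_{R\to\infty}\sup_n \int_{\|x\|_{\RR^k}>R}\|x\|^p_{\RR^k}\,d\mu_n(x)=0 .
\]
Equivalently, the convergent sequence $\{\mu_n\}\cup\{\mu\}$ is compact, so Proposition~\ref{prop:Compact Prob} supplies tightness and $p$-uniform integrability directly. Given this, we truncate. For $R>0$, pick a continuous cutoff $\chi_R\colon\RR^k\to[0,1]$ that equals $1$ on $\overline{B(0,R)}$ and vanishes outside $B(0,R+1)$. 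Then split
\[
\rho_j = \rho_j\chi_R + \rho_j(1-\chi_R).
\]
The first term is bounded and continuous, so weak convergence gives $\int \rho_j\chi_R\, d\mu_n\to\int\rho_j\chi_R\, d\mu$. The second term is supported on $\{\|x\|>R\}$, where $|\rho_j(x)|\le M_j(1+\|x\|^p)\le 2M_j\|x\|^p$ once $R\ge1$. Hence
\[
\sup_n\Bigl|\int\rho_j(1-\chi_R)\,d\mu_n\Bigr| \le 2M_j\sup_n\int_{\|x\|>R}\|x\|^p\,d\mu_n,
\]
and the same bound holds for $\mu$. Both are small for large $R$ by uniform integrability. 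A standard $\varepsilon/3$ argument then gives convergence of $\int\rho_j\,d\mu_n$.

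The only step needing care is the implication from $W_p$ convergence to weak convergence plus uniform integrability. We could cite it, or argue directly. Take optimal couplings $\pi_n$ of $(\mu_n,\mu)$, equivalently random variables $X_n\sim\mu_n$ and $X\sim\mu$ with $\EE\|X_n-X\|^p\to0$. Then $X_n\to X$ in probability, which gives weak convergence. Moreover $\|X_n\|^p\le 2^{p-1}(\|X_n-X\|^p+\|X\|^p)$, and a sequence that converges in $L^1$ plus a fixed integrable function form a uniformly integrable family, so $\{\|X_n\|^p\}$ is uniformly integrable. With this representation we can even skip the cutoff. Since $\rho_j(X_n)\to\rho_j(X)$ in probability and $|\rho_j(X_n)|\le M_j(1+\|X_n\|^p)$ is uniformly integrable, Vitali's convergence theorem gives $\EE\rho_j(X_n)\to\EE\rho_j(X)$ directly. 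This is the route I would take.

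Finally, we compose with the continuous $\sigma$ to conclude that $\NDS^{\rho,\sigma}$ is continuous.
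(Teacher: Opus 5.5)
Your proposal is correct and follows the same route as the paper: reduce to continuity of $\mu\mapsto\int\rho\,d\mu$ and use the characterization of $W_p$-convergence. The only difference is that the paper cites, from Villani, the form ``$\int\varphi\,d\mu_i\to\int\varphi\,d\mu$ for every continuous $\varphi$ with $|\varphi(x)|\le C(1+\|x\|^p_{\RR^k})$'' directly, whereas you derive it, either by truncation from weak convergence plus $p$-uniform integrability or from couplings and Vitali. In the coupling version, one step is left implicit: realizing all $X_n$ together with a single $X\sim\mu$ on one probability space, which is standard by disintegrating each $\pi_n$ against $\mu$ with independent randomizations, and which your truncation argument avoids altogether.
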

As an immediate consequence of this theorem, we obtain that all functions in
$$ \cFNDS \coloneqq  \left\{\NDS^{ \rho,  \sigma}\mid  r \in \NN, \rho \in \NNphi_{r,k},
\sigma \in \NNphi_{1,r}\right\}$$
are continuous provided that $\phi$ is Lipschitz, since it ensures that $\NNphi_{r,k} \subseteq \cF_p(\RR^k,\RR^r)$, for $p\in [1,\infty)$.

\ifdefined\isarxiv
  \paragraph{Step 3 (Universality)}
\else
    \textbf{Step 3 (Universality).}
\fi
Finally, we establish the universality of the normalized DeepSets architecture in \eqref{equ: normalized deepset form}. We defer the proof of this result to Appendix \ref{app: proof of duplication embedding set universality}.

\begin{theorem}[Universality of $\NDS$]
\label{thm:duplicationUniversality}
Suppose that $\phi$ is a Lipschitz continuous, non-polynomial activation that is asymptotically polynomial at $\pm\infty$.\footnote{\revised{We say that $\phi(t)$ is asymptotically polynomial at $\pm\infty$ if there exist polynomials $P_+$ and $P_-$ (possibly constant) such that $\phi(t) / P_{\pm}(t) \rightarrow 1$ as $t \rightarrow \pm \infty$, respectively.}} Let $Q \subseteq \cP_p(\RR^k)$ be a compact set.
Then, $\cFNDS$ is dense in
$C(Q)$.
\end{theorem}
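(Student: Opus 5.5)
We follow the recipe of Section~\ref{sec: Any-dimensional universality}, but apply Stone--Weierstrass to an intermediate algebra rather than to $\cFNDS$, which is not closed under products. Let $\mathcal{A}$ be the algebra generated by the linear functionals $\mu\mapsto\int\rho\,d\mu$ with $\rho\in\NNphi_{1,k}$. Each generator is continuous on $Q$ by Theorem~\ref{thm:duplicationContinuity}, since a Lipschitz $\phi$ gives linear growth and hence $p$-th order growth. Moreover $\mathcal{A}$ consists of polynomials in such functionals and contains the constants, because $\int 1\,d\mu=1$ and a constant is realized by an affine layer $L_2$ with zero weight.

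The first task is to show that $\mathcal{A}$ separates points of $Q$. Take $\mu\neq\nu$ in $\cP_p(\RR^k)$. Since $\phi$ is non-polynomial, we will use the density of one-hidden-layer networks in $C(K)$ for compact $K$ (Leshno et~al.) to show that $\int\rho\,d\mu=\int\rho\,d\nu$ for all $\rho\in\NNphi_{1,k}$ forces $\mu=\nu$. Fix a bounded continuous $g$ with compact support for which $\int g\,d\mu\neq\int g\,d\nu$. We must approximate $g$ by networks $\rho_n$ in $L^1(\mu+\nu)$ with uniformly controlled tails, and this is the main obstacle: uniform approximation on a ball does not by itself control growth outside the ball.

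Here the asymptotic-polynomial hypothesis enters. A cleaner route is to first separate by ridge functions. By Cramér--Wold, distinct measures differ in the law of $w^\top x$ for some $w$, so $\int h(w^\top x)\,d\mu\neq\int h(w^\top x)\,d\nu$ for some bounded continuous $h\colon\RR\to\RR$. It then suffices to approximate such $h$ by univariate networks $\sum_i a_i\phi(b_i t+c_i)$ in a way that is bounded by $C(1+|t|)$ uniformly and converges pointwise. Asymptotic polynomiality together with the Lipschitz property forces $\phi$ to grow at most linearly. Using differences $\phi(t+c)-\phi(t)$ or combinations that cancel the linear tails, we will build bounded sigmoid-like units $\psi$ with limits at $\pm\infty$. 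Their linear combinations approximate bounded continuous $h$ uniformly on compacts while remaining uniformly bounded, and we then conclude by dominated convergence, using finite first moments ($p\ge1$). If this tail-cancellation construction fails, the fallback is a direct approximation argument using uniform integrability on $Q$ (Proposition~\ref{prop:Compact Prob}).

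With separation established, Stone--Weierstrass gives density of $\mathcal{A}$ in $C(Q)$. Next we show that every element of $\mathcal{A}$ is approximated by $\cFNDS$. An element of $\mathcal{A}$ has the form $P(\int\rho_1\,d\mu,\dots,\int\rho_r\,d\mu)$ for a polynomial $P$, and stacking the $\rho_j$ gives a single $\rho\in\NNphi_{r,k}$ (concatenate the hidden layers). The image $\{\int\rho\,d\mu:\mu\in Q\}$ is compact in $\RR^r$ because the map is continuous, so the universal approximation theorem provides $\sigma\in\NNphi_{1,r}$ with $|\sigma-P|<\varepsilon$ on that image. Hence $\sup_Q|\NDS^{\rho,\sigma}-P(\cdots)|<\varepsilon$, and chaining the two approximations finishes the proof.
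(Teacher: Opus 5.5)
Your overall architecture is valid and genuinely different from the paper's. The paper applies Stone--Weierstrass to the larger class $\mu\mapsto\sigma(\int\rho\,d\mu)$ with arbitrary continuous $\rho$ of $p$-th order growth, where separation is immediate because a bounded Urysohn function suffices. It pays for this in the approximation step. There it truncates $\rho$ to a $C_0$ function using tightness and $p$-uniform integrability of $Q$, and approximates the truncation uniformly on all of $\RR^k$ by the non-compact universal approximation theorem of \cite{van2024noncompact}. You instead generate your algebra from exact network moments $\int\rho\,d\mu$ with $\rho\in\NNphi_{1,k}$. As a result, your second half needs only the classical compact-set theorem for the outer network $\sigma$, and it is correct as written. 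The price is that the whole difficulty moves into the separation step, and there your argument does not close.

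Your plan to build bounded sigmoid-like units with distinct limits at $\pm\infty$ from differences of $\phi$ fails for admissible activations. Take $\phi(t)=t+(1+t^2)^{-1}$, which is Lipschitz, non-polynomial and asymptotically polynomial (indeed $\phi(t)-t\to0$). A finite combination $\sum_i a_i\phi(s_it+b_i)+\theta$ is bounded only if $\sum_i a_is_i=0$. In that case it equals a constant plus $\sum_{s_i\neq0}a_i\,(1+(s_it+b_i)^2)^{-1}$, so its limits at $+\infty$ and $-\infty$ coincide. The Cybenko-type step you rely on is therefore unavailable, and recovering it would need an additional limiting (approximate-identity or Tauberian) argument that you do not supply. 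Your fallback via uniform integrability on $Q$ does not help either. The obstruction is the uncontrolled growth of the approximating networks outside a ball, not the tails of the two measures.

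The missing ingredient is exactly what the asymptotic-polynomial hypothesis is for. By \cite{van2024noncompact}, $\NNphi_{1,k}$ is uniformly dense in $C_0(\RR^k)$. So given $g\in C_c(\RR^k)$ with $\int g\,d\mu\neq\int g\,d\nu$, you can choose $\rho\in\NNphi_{1,k}$ with $\sup_{\RR^k}|\rho-g|<\frac{1}{3}\left|\int g\,d\mu-\int g\,d\nu\right|$. Since $\mu$ and $\nu$ are probability measures, this gives $\int\rho\,d\mu\neq\int\rho\,d\nu$. With this one line in place of the Cram\'er--Wold/sigmoid construction, your proof is complete, and it then uses neither tightness nor uniform integrability of $Q$.
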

The special case of Theorem~\ref{thm:duplicationUniversality} with $Q$ consisting of all measures supported on a fixed compact set was proved in~\cite{bueno2021representation}.
The requirement that $\phi$ is asymptotically polynomial at $\pm\infty$ guarantees the universality of neural networks on non-compact domains~\cite{van2024noncompact}. Representative activation functions meeting these conditions include ReLU, Softplus, and Sigmoid, among others.

\subsection{Graph functions}
\label{sec: graph}

Additional details about the constructions in this section can be found in Appendix \ref{app: detail graph consistent sequence}.
Recall the divisibility ordering from the duplication embedding in Example~\ref{exp: consistent sequence}. We consider a consistent sequence indexed with this ordering and given by $\Vdup^G=\{(\RR_\text{sym}^{n\times n}),(\varphi_{N, n}),(\mathrm{S}_n)\},$ where $\RR^{n\times n}_{\text{sym}}$ denotes the space of $n\times n$ symmetric matrices with real entries, and $\varphi_{N, n}$ converts each entry into $N/n \times N/n$ blocks with the same entry. We endow $\RR_\text{sym}^{n\times n}$ with the cut norm, $\|A\|_{\square} = \tfrac{1}{n^2}\max_{I, J \subseteq [n]} |\sum_{i \in I, j \in J} A_{ij}|$. The limit space $\Vinf$ can be identified with the space of kernels $\mathcal{W}$, i.e., measurable symmetric functions $W \colon [0,1]^2\rightarrow \RR$ equipped with the cut norm $\|\cdot\|_\square$. The symmetrized metric \eqref{eq:symMetric} corresponds to the so-called cut metric $\delta_\square.$ This limit space has been widely studied in the literature on large graphs or graphons \cite{lovasz2012large}. Intuitively, the functions $W$ model adjacency matrices of infinite-node graphs.%

\ifdefined\isarxiv
\paragraph{\textbf{Step 1 (Compact sets).}}
\else
\textbf{Step 1 (Compact sets).}
\fi
As is standard in the graphon literature, we consider graphs with bounded edge weights; in particular, we restrict them to the unit interval $[0,1]$. %
With this restriction, we focus on the subset of orbits corresponding to graphons, that is,
\begin{equation} \label{eq:omgraphon}\Gphn\coloneqq  \{W \in \cW \mid \rangee{W} \subseteq[0,1]\} \ / \sim,\end{equation}
where $W_1\sim W_2$ whenever $\delta_\square(W_1,W_2)=0$~\cite{lovasz2012large}. This is the collection of all limits of simple graphs, i.e., graphs with edge weights in $\{0,1\}$. This entire graphon space %
is compact.

\begin{proposition}[Compactness of $\Gphn$, \cite{lovasz2012large}%
]
The space $\Gphn$ endowed with the cut metric $\delta_{\square}$ is compact.
\end{proposition}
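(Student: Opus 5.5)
The plan is to follow the classical Lovász--Szegedy argument: show that $(\Gphn,\delta_\square)$ is sequentially compact. Since it is a metric space, this is equivalent to compactness. Fix an arbitrary sequence of graphons $(W_n)$ with values in $[0,1]$. The goal is a subsequence and a limit graphon $U$ with $\delta_\square(W_{n_j},U)\to 0$.

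The first step is a uniform discretization via the weak (Frieze--Kannan) regularity lemma. For every $W\in\Gphn$ and every $k$ there is a partition $\mathcal{P}$ of $[0,1]$ into at most $2^{O(k^2)}$ measurable classes such that the stepping $W_{\mathcal{P}}$, obtained by averaging $W$ over each product of classes, satisfies $\|W-W_{\mathcal{P}}\|_\square\le C/\sqrt{k}$. I would prove this by the standard energy-increment argument: if the approximation is poor, a cut witness gives sets $S,T$ whose refinement increases $\|W_{\mathcal{P}}\|_2^2$ by at least $1/k$. Since $\|W_{\mathcal P}\|_2^2\le 1$, there are at most $k$ refinements. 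I would also arrange that the partitions are successively refining, with $\mathcal{P}_{n,k+1}$ refining $\mathcal{P}_{n,k}$, and that the class counts $m_k$ are independent of $n$, padding with empty or null classes if needed. After applying a measure-preserving rearrangement, which does not change $\delta_\square$, each class can be taken to be an interval.

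The second step is diagonal extraction. For fixed $k$, the step graphon $W_{n,\mathcal{P}_{n,k}}$ is determined by finitely many parameters: the $m_k$ class measures and the $m_k^2$ block values, all lying in $[0,1]$. By Bolzano--Weierstrass and a diagonal argument, I would pass to a subsequence along which, for every $k$, these parameters converge. Then $W_{n,k}\to U_k$ almost everywhere, hence in $L^1$, for step functions $U_k$. The refinement structure makes $(U_k)_k$ a martingale: $U_k$ is the conditional expectation of $U_{k+1}$ onto the $\sigma$-algebra generated by the level-$k$ partition. This requires choosing the interval rearrangements consistently across levels, which nesting allows. Bounded by $[0,1]$, the martingale convergence theorem gives $U_k\to U$ almost everywhere and in $L^1$, with $U$ a $[0,1]$-valued symmetric graphon.

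The third step is the limit estimate:
$$\delta_\square(W_n,U)\le \delta_\square(W_n,W_{n,k})+\|W_{n,k}-U_k\|_1+\|U_k-U\|_1\le \tfrac{C}{\sqrt{k}}+\|W_{n,k}-U_k\|_1+\|U_k-U\|_1.$$
Here I use $\|\cdot\|_\square\le\|\cdot\|_1$. Given $\varepsilon>0$, first choose $k$ so that the first and third terms are small, then choose $n$ large so that the middle term is small. This gives convergence along the subsequence, and closedness of $\Gphn$ under this limit is automatic because $U\in\Gphn$.

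I expect the main obstacle to be the bookkeeping in step two. The partitions must be nested across $k$, and the identification of classes with intervals must be compatible with both $n$ and $k$, so that the limits $U_k$ genuinely form a martingale rather than unrelated step functions. Step one, the regularity lemma with a refining sequence, is standard but must be stated carefully. In particular, when refining a given partition, the class count must stay bounded independently of $W$.
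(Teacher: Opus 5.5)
The paper gives no proof of its own and simply cites \cite{lovasz2012large}. Your outline is exactly the Lov\'asz--Szegedy proof given there (nested weak-regularity partitions with $n$-independent class counts, a diagonal subsequence on the finitely many step parameters, martingale convergence of the limiting step functions $U_k$, and the three-term $\delta_\square$ estimate), so it is correct and takes the same route.

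On the bookkeeping you flag, rearrange only the step functions $(W_n)_{\mathcal{P}_{n,k}}$, not $W_n$ itself. Assign nested intervals of length $\lambda(P)$ recursively and keep the block averages. This gives $\delta_\square$-distance zero and an exact martingale relation across $k$, which is all your third step uses since it only involves $\delta_\square(W_n,W_{n,k})$. A single measure-preserving bijection making every class at every level of $W_n$'s partitions an interval need not exist, e.g.\ when the classes shrink in measure yet generate only a proper sub-$\sigma$-algebra.
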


\ifdefined\isarxiv
\paragraph{\textbf{Step 2 (Continuity).}}
\else
\textbf{Step 2 (Continuity).}
\fi
Achieving continuity in the cut metric is subtle. In particular, pointwise nonlinearities are not continuous in cut metric: for any pointwise map $\phi$, the operator $W\mapsto \phi(W)$ is continuous with respect to $\|\cdot\|_\square$ if and only if $\phi$ is linear \cite{herbst2025higher}. Since continuity with respect to $\|\cdot\|_\square$ on the limit space is equivalent to continuity with respect to $\delta_\square$ on the orbit space (Lemma~\ref{lem:ContinuityEqual}), this precludes using pointwise activations. As it turns out, many existing graph architectures are discontinuous in $(\Gphn,\delta_\square)$~\cite{maron2019universality,keriven2019universal}.
One way to circumvent this issue is to coarsen the topology to make pointwise nonlinearities continuous, as in~\cite{herbst2025higher}. In contrast, we work directly with the cut distance $\delta_\square$, which metrizes a standard notion of graph limits~\cite{lovasz2012large}.  %

 The continuity with respect to this distance is closely tied to \emph{homomorphism densities}. Intuitively, these are functionals that quantify how frequently a fixed motif (a finite graph) appears in a graphon. Formally, given a motif $F=(V,E)$ and a graphon $W\in\Gphn$, the homomorphism density is
\begin{equation}\label{eq:homo}
    t(F,W)=\int_{[0,1]^{|V|}} \prod_{i j \in E} W\left(x_i, x_j\right) \prod_{i \in V} d x_i~.
\end{equation}
In turn, the linear span of homomorphism densities of simple graphs, $\mathcal{HD} \coloneqq \operatorname{span}\{\, t(F,\cdot) \mid F \;\text{is simple}\,\} ,$ is dense in the space of continuous functions $C\left(\Gphn,\delta_\square\right)$ \cite{lovasz2012large}.
Thus, it suffices to find a model class that can be expressed as linear combinations of homomorphism densities of simple graphs.
Importantly, the desired model class must also not contain any homomorphism densities of non-simple graphs, as these are discontinuous in the cut metric~\cite{lovasz2012large}.
With these considerations in mind, we propose the following family of models
\begin{equation}
\label{equ:CompleteGraphModel}
\hk^{a,b}(W)=\int_{[0,1]^m}\prod_{1\leq i<j\leq m}\left( a_{ij}W(x_i,x_j)+b_{ij}\right)\prod_{\ell=1}^m dx_\ell,
\end{equation}
where $a_{ij}, b_{ij} \in \RR$ are parameters. In particular, the restriction $i < j$ ensures that only simple motifs can be parameterized. Based on this, we define the function class
\begin{equation}
\label{equ:SpanCompleteGraphModel}
\cFW \coloneqq \mathrm{span} \{ \hk^{a,b}(\cdot) \mid m \in \mathbb{N},\, m \geq 2,\, a, b \in \RR^{m\times m}_{\text{sym}} \}.
\end{equation}

As we detail in Appendix~\ref{app:deepGraphModel}, this family can be represented by a neural-network-like deep architecture, making it directly amenable to gradient-based optimization. This property is informally stated as follows.
\begin{theorem}[Informal]
The family of functions $\cFW$ can be parameterized using a neural network architecture.
\end{theorem}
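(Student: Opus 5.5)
The plan is to build an explicit layered architecture that evaluates $\hk^{a,b}$ exactly, and then to realize linear combinations of such blocks with a final linear readout. The key observation is that the integrand in \eqref{equ:CompleteGraphModel} is a product over all edges $ij$ of the complete graph $K_m$ of affine functions of $W(x_i,x_j)$. It can therefore be computed by a sequence of ``message-passing''-type operations acting on functions of $m$ variables.

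First I would define the elementary layers that act on functions $U\colon[0,1]^m\to\RR$, all initialized at the constant function $U_0\equiv 1$.
\begin{itemize}
\item An \emph{edge-lifting layer} $\LE{2}{m}$ takes the kernel $W$ and a pair $(i,j)$, and produces the function $(x_1,\dots,x_m)\mapsto a_{ij}W(x_i,x_j)+b_{ij}$. This is a linear (affine) equivariant map from kernels to $m$-variable functions, of the kind used in tensor-based invariant networks.
\item A \emph{multiplicative update layer} sends $U$ to $U\cdot\bigl(a_{ij}W(x_i,x_j)+b_{ij}\bigr)$. Here the product plays the role of the nonlinearity. It is a bilinear, not pointwise, nonlinearity, which matters because pointwise activations are discontinuous in $\delta_\square$.
\item A final \emph{pooling layer} integrates over $[0,1]^m$.
\end{itemize}
Iterating the update over the $\binom{m}{2}$ pairs $i<j$ in a fixed order yields exactly $\hk^{a,b}(W)$.

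Second, I would check that the construction is compatible with the consistent sequence $\Vdup^G$. On a finite graph $A\in\RR^{n\times n}_{\text{sym}}$ viewed as a step kernel, each layer reduces to a finite computation on tensors in $(\RR^{n})^{\otimes m}$: the lifting is a broadcast of $A$ into coordinates $(i,j)$, the update is an entrywise product, and the integral becomes the normalized sum $n^{-m}\sum$. Each of these commutes with block duplication and with $\mathrm{S}_n$. Hence the finite-dimensional networks form a compatible sequence whose extension is $\hk^{a,b}$. The parameters are the entries of $a$ and $b$ together with the outer linear weights, so gradient-based training applies. Taking a width-$R$ parallel stack of such blocks, possibly with different $m$, and summing them with learned coefficients realizes every element of $\cFW$ in \eqref{equ:SpanCompleteGraphModel}.

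The main obstacle is making ``neural-network architecture'' precise enough while keeping the restriction $i<j$. Repeated factors $W(x_i,x_j)^2$ would create non-simple motifs, whose homomorphism densities are discontinuous in $\delta_\square$. Two further checks are needed:
\begin{itemize}
\item Depth must be allowed to grow with $m$, about $m^2/2$ multiplicative layers, or equivalently the edge factors must be grouped into a balanced product tree.
\item The order of tensor features grows to $m$, so the architecture must be of higher-order (IGN-like) type rather than standard message passing.
\end{itemize}
Since the statement is informal, I would settle for this explicit constructive representation rather than a full formal definition of the layer class.
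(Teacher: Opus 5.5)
Your construction is correct and gives an exact parameterization of every $\hk^{a,b}$, but it is organized differently from the paper's.

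In Appendix~\ref{app:deepGraphModel}, the paper fuses multiplication and integration into one parameter-free nonlinearity, the contraction $T(P,G_1,\dots,G_k)(y_1,\dots,y_k)=\int_{[0,1]} P(x,y_1,\dots,y_k)\prod_{i} G_i(x,y_i)\,dx$. Starting from $P_1\equiv 1\in\cG_m$, layer $j$ passes $P_j$ and the $m-j$ affine edge channels $a_j^uW+b_j^u$ to $T$. This multiplies in every edge from vertex $x_j$ to the later vertices and then integrates $x_j$ out. It is a vertex-elimination scheme: the hidden tensor order drops from $m$ to $0$ in exactly $m$ layers. Identifying the output with $\hk^{a,b}$ needs the Fubini-type induction~\eqref{equ:GraphDeepInductionHypo} and the reindexing $a_{ij}=a_i^{j-i}$, $b_{ij}=b_i^{j-i}$.

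You instead keep every hidden feature at order $m$, form Hadamard products of the $\binom{m}{2}$ lifted edge factors, and pool once at the end.

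What your route buys is transparency. The integrand is built literally, so no induction is needed, and the product-tree variant gives depth logarithmic in the number of edges.

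What the paper's route buys is economy. After the first contraction, its nontrivial hidden tensors have order at most $m-1$. On an $n$-node graph this means roughly $mn^m$ operations and $n^{m-1}$ stored entries. Your scheme must materialize an $n^m$-entry tensor before pooling and performs about $\binom{m}{2}n^m$ multiplications. The paper's layers also fit a uniform ``affine map followed by a fixed contraction'' template, with $W$ re-injected at every layer.

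Your explicit check that each finite-dimensional layer commutes with block duplication and with $\mathrm{S}_n$ on $\Vdup^G$ is a useful addition. The paper leaves this implicit, since it defines its deep model directly on graphons.
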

The neural network architecture in this result is somewhat involved; to streamline the presentation, we defer to Appendix~\ref{app:deepGraphModel}. We mention in passing that the nonlinearities correspond to tensor contractions, which have been proposed in the past to enhance the expressivity of invariant architectures \cite{finzi2021practical}.
Our discussion thus far yields the following result; its formal proof appears in Appendix~\ref{app:proofGraphContinue}.
\begin{theorem}[Continuity of $\cFW$]
\label{thm:graphContinue}
All functions in $\cFW$ are continuous with respect to %
$\delta_\square$.
\end{theorem}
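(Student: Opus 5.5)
The plan is to expand each $\hk^{a,b}$ into a finite linear combination of homomorphism densities $t(F,\cdot)$ of simple graphs. Continuity of $\cFW$ with respect to $\delta_\square$ will then follow from the classical continuity of $t(F,\cdot)$ for simple $F$ on $(\Gphn,\delta_\square)$ \cite{lovasz2012large}. Since $\cFW$ is the linear span of the $\hk^{a,b}$, and finite linear combinations of continuous functions are continuous, it suffices to treat a single $\hk^{a,b}$ with fixed $m\ge 2$ and symmetric $a,b\in\RR^{m\times m}_{\text{sym}}$.

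\emph{Step 1 (expansion).} Expand the product over the $\binom{m}{2}$ pairs $1\le i<j\le m$ by choosing, for each pair, either the term $a_{ij}W(x_i,x_j)$ or the constant $b_{ij}$. This gives
\begin{equation*}
\prod_{1\le i<j\le m}\bigl(a_{ij}W(x_i,x_j)+b_{ij}\bigr)=\sum_{E\subseteq \binom{[m]}{2}} \Bigl(\prod_{ij\in E}a_{ij}\Bigr)\Bigl(\prod_{ij\notin E}b_{ij}\Bigr)\prod_{ij\in E}W(x_i,x_j).
\end{equation*}
Integrate term by term over $[0,1]^m$; this is legitimate since each term is bounded because $W$ takes values in $[0,1]$. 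The term indexed by $E$ integrates exactly to $t(F_E,W)$, where $F_E=([m],E)$. Isolated vertices contribute a factor $\int dx_\ell=1$, consistent with \eqref{eq:homo}. Each $F_E$ is a simple graph: $E$ consists of unordered pairs $\{i,j\}$ with $i<j$, so there are no loops, and each pair appears at most once, so there are no multi-edges. This is exactly where the restriction $i<j$ in \eqref{equ:CompleteGraphModel} enters. Hence $\hk^{a,b}=\sum_E c_E\, t(F_E,\cdot)\in\mathcal{HD}$, with constants $c_E$ depending only on $a,b$.

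\emph{Step 2 (well-definedness on orbits).} Each $t(F,\cdot)$ is invariant under measure-preserving relabelings and depends only on the $\delta_\square$-class of $W$ \cite{lovasz2012large}. Therefore $\hk^{a,b}$ is well defined on $\Gphn$.

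\emph{Step 3 (continuity).} Invoke the counting lemma: for simple $F$ and $W,U$ with values in $[0,1]$, $|t(F,W)-t(F,U)|\le |E(F)|\,\delta_\square(W,U)$. If I had to justify it, I would first prove the bound for $\|W-U\|_\square$ by telescoping over edges, replacing one factor $W$ by $U$ at a time. Each difference is bounded by $\|W-U\|_\square$, since after fixing the other variables it is an integral of $(W-U)(x,y)$ against a product $f(x)g(y)$ with $f,g\in[0,1]$. This requires that no edge is repeated or a loop, i.e. that $F$ is simple. I would then take the infimum over relabelings. So each $\hk^{a,b}$ is Lipschitz with constant $\sum_E|c_E||E|$.

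The main obstacle is Step 3, specifically the simplicity requirement in the counting lemma. Step 1 is what guarantees that requirement is met, so the substantive point is that the parametrization never produces repeated or diagonal factors such as $W(x_i,x_j)^2$ or $W(x_i,x_i)$.
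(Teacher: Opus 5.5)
Your proposal is correct and follows the paper's proof: the paper also expands the product over the edges $E_m$ of the complete graph on $m$ nodes into $\sum_{S\subseteq E_m}\bigl(\prod_{e\in S}a_e\prod_{e\notin S}b_e\bigr)t(S;W)$. It then invokes continuity of homomorphism densities of simple graphs in $\delta_\square$ by citation rather than reproving the counting lemma. Your additional details (term-by-term integration, well-definedness on orbits, and the telescoping argument giving an explicit Lipschitz constant) are sound and simply make that cited step self-contained.
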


\ifdefined\isarxiv
\paragraph{\textbf{Step 3 (Universality).}}
\else
\textbf{Step 3 (Universality).}
\fi
Separating points in the graphon space $(\Gphn, \delta_\square)$ corresponds to distinguishing graphs up to weak isomorphism---a fundamental task in the study of graph neural networks. Prior work on the expressive power of standard GNN architectures \cite{xu2018powerful,maron2019provably,morris2019weisfeiler} has linked their expressivity to the Weisfeiler--Lehman (WL) graph isomorphism test \cite{leman1968reduction}, which fails to distinguish certain non-isomorphic graphs. In contrast, the following theorem establishes that our model class \eqref{equ:SpanCompleteGraphModel} is universal;
its proof is deferred to Appendix \ref{app:proofGraphUniversal}.
\begin{theorem}[Universality of $\cFW$]
\label{thm:graphUniversal}
 The class of functions $\cFW$ in~\eqref{equ:SpanCompleteGraphModel} is dense in $C\left(\Gphn,\delta_\square\right)$.
\end{theorem}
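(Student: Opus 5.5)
The plan is to reduce the claim to the density of $\mathcal{HD}$ in $C(\Gphn,\delta_\square)$ quoted above from \cite{lovasz2012large}, rather than to run Stone--Weierstrass from scratch. Since $\Gphn$ is compact and every element of $\cFW$ is continuous (Theorem~\ref{thm:graphContinue}), it suffices to show that $\mathcal{HD}\subseteq \overline{\cFW}$. In fact we aim for the stronger containment $\mathcal{HD}\subseteq\cFW$: every homomorphism density $t(F,\cdot)$ of a simple graph $F$ is itself one of the models $\hk^{a,b}$. Density of $\cFW$ then follows immediately from density of $\mathcal{HD}$.

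For the key step, fix a simple graph $F=(V,E)$ with $m=|V|$ vertices, labelled $1,\dots,m$. If $m\ge 2$, set $a_{ij}=a_{ji}=1$ and $b_{ij}=b_{ji}=0$ for $ij\in E$, and set $a_{ij}=a_{ji}=0$ and $b_{ij}=b_{ji}=1$ for $ij\notin E$ with $i<j$. The diagonal entries are irrelevant because the product in \eqref{equ:CompleteGraphModel} only runs over $i<j$.

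With these choices, each factor equals $W(x_i,x_j)$ if $ij\in E$ and equals $1$ otherwise. Hence $\hk^{a,b}(W)=\int_{[0,1]^m}\prod_{ij\in E}W(x_i,x_j)\prod_\ell dx_\ell=t(F,W)$, which matches \eqref{eq:homo}. Because $F$ is simple, each edge appears exactly once with $i\neq j$, so no loops or repeated factors occur.

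Two degenerate cases remain.
\begin{itemize}
\item The one-vertex graph (and more generally any edgeless graph) has $t\equiv 1$. We obtain it by taking $m=2$, $a=0$ and $b_{12}=1$.
\item The empty graph, if it is included, also has $t\equiv 1$ and is handled the same way.
\end{itemize}
Since $\cFW$ is a linear span, linear combinations come for free, so $\mathcal{HD}\subseteq\cFW$.

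As a sanity check, or as an alternative route if one prefers to use Theorem~\ref{prop: stone weierstrass} directly, we can verify the three algebraic conditions on $\cFW$.
\begin{itemize}
\item \emph{Constants.} These come from the case $a=0$.
\item \emph{Products.} Expanding $\prod_{i<j}(a_{ij}W+b_{ij})$ shows that each $\hk^{a,b}$ is a linear combination of $t(F,\cdot)$ over simple graphs $F$ on $m$ vertices, so $\cFW=\mathcal{HD}$. Moreover $t(F_1,W)\,t(F_2,W)=t(F_1\sqcup F_2,W)$, and the disjoint union of simple graphs is simple.
\item \emph{Point separation.} If $\delta_\square(W_1,W_2)>0$, then some simple $F$ has $t(F,W_1)\neq t(F,W_2)$. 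This is the standard fact that homomorphism densities of simple graphs determine a graphon up to weak isomorphism \cite{lovasz2012large}.
\end{itemize}

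The only real obstacle is this last characterization, namely that simple-graph densities determine $\delta_\square$-classes, which is a deep theorem of graph limit theory. We take it, or equivalently the density of $\mathcal{HD}$, as given from \cite{lovasz2012large} as stated earlier in the paper. Everything else is the explicit parameter choice above.
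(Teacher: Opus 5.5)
Your proposal is correct and follows the paper's proof. Both reduce the claim to the density of $\mathcal{HD}$ in $C(\Gphn,\delta_\square)$ and then show $\mathcal{HD}\subseteq\cFW$ by taking $a_{ij}$ and $b_{ij}$ to be the edge and non-edge indicators of $F$, so that $h_m^{a,b}=t(F,\cdot)$. The paper covers graphs with fewer than $m$ vertices by padding with isolated vertices, which amounts to your treatment of the degenerate cases, and your extra Stone--Weierstrass check is a valid but unneeded alternative.
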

\revised{The proof of Theorems~\ref{thm:graphContinue} and~\ref{thm:graphUniversal} essentially proceed by expanding out the product in the definition of our model~\eqref{equ:CompleteGraphModel} and observing that it is a linear combination of homomorphism densities. By appropriately choosing the parameters $a,b$ in~\eqref{equ:CompleteGraphModel}, we further show that all homomorphism densities lie within our parametric family $\cFW$. 

We emphasize at this point that there are two key advantages to working with the parametrization~\eqref{equ:CompleteGraphModel} over directly learning coefficients in a linear combination of homomorphism densities, as done in~\cite{maehara2019simple,nguyen2020graph} for example. The first advantage pertains to the number of parameters needed to represent all homomorphism densities of simple graphs on $m$ nodes. While the number of such densities scales as $\cO\left(2^{\binom{m}{2}} / m!\right)$ and these densities are all linearly independent, our model is able to represent all of them using only $m(m-1)$ parameters.
The second advantage pertains to the implementation and training of our model. Learning coefficients in a linear combination of homomorphism densities requires computing each homomorphism density separately for every motif and every input graph, which is costly. Our parameterization avoids this by encoding graph patterns implicitly through the learnable coefficients $a,b$ in~\eqref{equ:CompleteGraphModel}. Moreover, our model is directly amenable to gradient-based optimization thanks to its neural network-like decomposition in Appendix~\ref{app:deepGraphModel}.
}

The key enabler of universality here is the use of high-order tensors as hidden layers; the order of these tensors---as well as the width and depth of our networks---grows with $m$. Whether universal approximation is achievable with fixed width or depth remains an interesting open question.

\subsection{Point cloud functions}
\label{sec: point cloud}

Details of the constructions in this section appear in Appendix~\ref{app: detail point cloud consistent sequence}. We consider a consistent sequence analogous to the one used for sets (measures), namely $\Vdup^P=\{(\RR^{n\times k}),(\varphi_{N, n}^{\oplus k}),(\mathrm{S}_n \times \Orth(k))\}$, where $\RR^{n \times k}$ represents sets of $n$ points in $\RR^k$ for fixed $k \in \NN$, and the maps $\varphi_{N,n}$ are duplication embeddings. The key difference is that the group is now $\mathrm{S}_n \times \Orth(k)$, where $\mathrm{S}_n$ is the permutation group and $\Orth(k)$ is the orthogonal group, acting via
\(
(g, h) \cdot X = g X h^\top.
\)
We endow $\RR^k$ with the Euclidean norm $\|\cdot\|_{\RR^k}$, and equip each $V_n$ with the normalized $\ell_2$ norm; the limit space can then be identified with $\overline{V_\infty} = L^2([0,1]; \RR^k).$%

  Since equivariant operators appear repeatedly throughout, we define the family
\begin{equation}
\label{eq:LE}
    \LE{k}{l}
    \coloneqq\left\{L \in \mathbb{B}(k,l) \mid L \text{ is } S_{[0,1]}\text{-equivariant}\right\},
    \end{equation}
where $\mathbb{B}(k,l) = \mathcal{B}\left(L^2([0,1]^k), L^2([0,1]^l)\right)$ denotes the space of bounded linear operators and $S_{[0,1]}$ is the group of measure-preserving bijections $\varphi\colon[0,1]\to[0,1]$. We say that $L$ is $S_{[0,1]}$-equivariant if, for every $\varphi\in S_{[0,1]}$ and every $W\in L^2([0,1]^k)$, one has $L(W^\varphi)=L(W)^\varphi$ almost everywhere, where $W^\varphi(x_1,\dots,x_k)\coloneqq W(\varphi(x_1),\dots,\varphi(x_k))$.

\ifdefined\isarxiv
\paragraph{Step 1 (Compact sets).}
\else
\textbf{Step 1 (Compact sets). }
\fi
For this setting, we consider  the set of orbit closures arising from a set of bounded functions in the limit space $\Vinf$, namely
$$K_R\coloneqq\{X\in L^2\left([0,1] ; \RR^k\right)\mid \|X\|_{L^\infty}\leq R  \},$$
where $R>0$ is a positive constant.  The next proposition establishes that the space of orbit closures of $K_R$, which we denote by $K_R/G_\infty$, is compact in $\VmG$. The proof is deferred to Appendix \ref{app:proofPointcloudCompact}.
\begin{proposition}[Compactness of $K_R/G_\infty$]
\label{prop:pointcloudCompact}
    The set of orbit closures $K_R/G_\infty$ is a compact subset of $\VmG$.
\end{proposition}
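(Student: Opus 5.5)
The plan is to identify the orbit space with a space of measures, modulo rotations, and transfer compactness from Proposition~\ref{prop:Compact Prob}. Each $X \in L^2([0,1];\RR^k)$ induces its law $\mu_X = X_\# \mathrm{Leb} \in \cP_2(\RR^k)$. As in Example~\ref{ex:Lp}, the symmetrized metric for the permutation part alone is $W_2(\mu_X,\mu_Y)$. With the extra $\Orth(k)$ factor acting on the right, I expect
\[
\bard(X,Y)=\inf_{h\in \Orth(k)} W_2\big(\mu_X, (h)_\#\mu_Y\big).
\]
This holds because $(g,h)\cdot X = gXh^\top$: the permutation part is absorbed into the infimum defining $W_2$, exactly as in the duplication-sequence appendix, while $h$ commutes with the permutation action. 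I would verify this formula first, or at least the inequality $\bard(X,Y)\le \inf_h W_2(\mu_X,h_\#\mu_Y)$ plus equality of zero sets. For compactness, it actually suffices to have a continuous surjection onto $K_R/G_\infty$.

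Concretely, let $Q_R=\{\mu\in\cP_2(\RR^k)\mid \supp(\mu)\subseteq \overline{B(0,R)}\}$. This set is compact by Proposition~\ref{prop:Compact Prob}: it is tight with the single compact ball, $2$-uniformly integrable because the integrand vanishes for $R'>R$, and closed under weak convergence by the Portmanteau theorem. The set $Q_R\times \Orth(k)$ is then compact. Define $\Psi\colon Q_R\to \VmG$ by sending $\mu$ to the orbit closure of any $X$ with $\mu_X=\mu$. Such an $X$ exists with $\|X\|_{L^\infty}\le R$ via a measurable transport from Lebesgue measure on $[0,1]$, for instance by composing with a Borel isomorphism. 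So $\Psi$ maps $Q_R$ onto $K_R/G_\infty$; conversely, every $X\in K_R$ has $\mu_X\in Q_R$.

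It remains to show that $\Psi$ is well defined and $1$-Lipschitz, i.e.\ $\bard(\Psi\mu,\Psi\nu)\le W_2(\mu,\nu)$. This is where I expect the main work to lie. The needed fact is that for $X,Y\in L^2([0,1];\RR^k)$ we have $\inf_{g\in G_\infty}\|g\cdot X - Y\|_2 \le W_2(\mu_X,\mu_Y)$, using only the permutation part with $h=I$. I would prove it by approximation. First approximate $X$ and $Y$ in $L^2$ by step functions on a dyadic partition into $n$ intervals. For step functions, i.e.\ empirical measures with $n$ equally weighted atoms, Birkhoff's theorem says the optimal coupling is a permutation, so the claim holds exactly there. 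Then pass to the limit using the continuity of $W_2$ in $L^2$ ($W_2(\mu_X,\mu_{X'})\le\|X-X'\|_2$) and the triangle inequality for $\bard$. This is presumably already the content of the appendix behind Example~\ref{ex:Lp}, so I would cite it. The same inequality shows that $\Psi$ is well defined: two $X$ with the same law are at $\bard$-distance $0$.

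Given these facts, $K_R/G_\infty=\Psi(Q_R)$ is the continuous image of a compact set, hence compact in $\VmG$. The closure in the definition of orbit closures causes no trouble, since a compact image is automatically closed. The $\Orth(k)$ action is never needed for the argument, because it only shrinks distances. The main obstacle is therefore the permutation-realizability of $W_2$ couplings in the limit space, which the step-function and Birkhoff approximation handles.
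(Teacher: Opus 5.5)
Your proposal is correct and follows essentially the same route as the paper. In both arguments you pass to the laws $Q_R=\cP_2(\overline{B(0,R)})$, realize each $\mu\in Q_R$ by a Borel map into the closed ball, get compactness from Proposition~\ref{prop:Compact Prob}, and obtain $K_R/G_\infty$ as a continuous image of $Q_R$. The one difference is that the paper simply invokes the identification of $\VmG$ with $\cP_2(\RR^k)/\Orth(k)$, whereas you prove the only inequality actually needed, $\bard(X,Y)\le W_2(\mu_X,\mu_Y)$, via dyadic step-function approximation and Birkhoff's theorem; this makes your version more self-contained.
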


\ifdefined\isarxiv
\paragraph{Step 2 (Continuity).}
\else
\textbf{Step 2 (Continuity). }
\fi
Next, we introduce a class of continuous models on the space of orbit closures. The architecture decomposes into two stages: given a point cloud $X$, we first form its Gram (inner-product) matrix $X^\top X$, and then apply a graph neural network (GNN) to this matrix. Intuitively, the Gram map enforces invariance to $\Orth(k)$, while the GNN provides invariance to permutations $S_n$. Closely related architectures were first proposed in~\cite{villar2021scalars}.

 Let us formally define the architecture in the orbit space. For the first stage, we map each
$X \in K_R/G_\infty$ to a graphon $W_X\colon[0,1]^2\to[0,1]$ defined by
\begin{equation}
\label{eq:covarianceMap}
    W_X(x,y)\coloneqq \frac{1}{2R^2}\langle X(x), X(y)\rangle+\frac{1}{2}~,
\end{equation}
where we interpret $X$ as any representative in its equivalence class. Notice that $W_X$ is a symmetric measurable function, and so can be seen as a graphon. We endow the space of graphons with the $\delta_2$ metric given by
$$
\delta_2(W, \widetilde W)\coloneqq \inf _{\varphi \in S_{[0,1]}}\|W-\widetilde W^\varphi\|_2.
$$
For the second stage, we use Invariant Graphon Networks (IGNs)~\cite{herbst2025higher}, i.e., functions $\IGN\colon \mathcal{W}_0\to\RR$ of the form
\begin{equation}\label{eq:what}
    \IGN(W) \coloneqq \sum_{m=1}^M L_m^{(2)}\!\left(\varrho\!\left(L_m^{(1)}(W)+b_m^{(1)}\right)\right)+b^{(2)},
\end{equation}
where $M\in\NNz$, $L_m^{(1)}\in \LE{2}{k_m}$, $L_m^{(2)}\in \LE{k_m}{0}$, $b_m^{(1)}, b^{(2)}\in\RR$, and $k_m\in\NN$ for each $m\in\{1,\ldots,M\}$. Here $\varrho\colon\RR\to\RR$ is a nonlinearity
acting pointwise.

Altogether, we consider the family of models given by
\begin{equation}\label{eq:almost-there}
    \cFPC^{\varrho} \coloneqq \{ X \mapsto \IGN(W_X)\}.
\end{equation}
Here the architectural hyperparameters and parameters (i.e., $M$, the operators $L$, and the biases $b$) range over the admissible sets described after~\eqref{eq:what}; we omit the full specification for brevity.
The next theorem shows that this architecture is indeed continuous with respect to the symmetrized distance. The proof is in Appendix \ref{app: proof of point cloud continuous}.
\begin{theorem}[Continuity of $\cFPC^{\varrho}$]
\label{thm: point cloud continuous}
    Let $\varrho$ be continuous. Then, all models in $\cFPC^{\varrho}$ are continuous with respect to the symmetrized metric.
\end{theorem}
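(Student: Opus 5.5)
The proof factors each model as the composition $X \mapsto W_X \mapsto \IGN(W_X)$ and shows both maps are continuous. The first map goes from $(K_R/G_\infty,\bard)$ to graphons with $\delta_2$; the second goes from $(\mathcal{W}_0,\delta_2)$ to $\RR$. Here $\bard$ is the symmetrized metric $\bard(X,Y)=\inf_{\varphi\in S_{[0,1]},\,h\in\Orth(k)}\|X - (Y\circ\varphi)h^\top\|_2$. This uses the identification of the orbit space in Appendix~\ref{app: detail point cloud consistent sequence}, where the permutation action extends to measure-preserving rearrangements.

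\textbf{Step 1 (well-definedness and Lipschitz bound for the Gram map).} The Gram map is invariant under the group. For $h\in\Orth(k)$ we have $\langle X(x)h^\top, X(y)h^\top\rangle=\langle X(x),X(y)\rangle$, and composing $X$ with $\varphi$ gives $W_{X\circ\varphi}=(W_X)^\varphi$. Hence $\delta_2(W_X,W_{X'})=0$ whenever $X, X'$ lie in the same orbit. Since $\|X\|_{L^\infty}\le R$, Cauchy--Schwarz gives $|\langle X(x),X(y)\rangle|\le R^2$, so $W_X$ takes values in $[0,1]$ and is a graphon.

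For continuity, fix representatives $X,Y\in K_R$. Then
\begin{equation*}
\langle X(x),X(y)\rangle-\langle Y(x),Y(y)\rangle=\langle X(x)-Y(x),X(y)\rangle+\langle Y(x),X(y)-Y(y)\rangle .
\end{equation*}
Integrating the square over $[0,1]^2$ and using the $L^\infty$ bound $R$ yields
\begin{equation*}
\|W_X-W_Y\|_{L^2([0,1]^2)}\le \tfrac{1}{2R^2}\cdot 2R\,\|X-Y\|_{L^2}=\tfrac1R\|X-Y\|_{L^2}.
\end{equation*}
Now replace $Y$ by $(Y\circ\varphi)h^\top$, which is still in $K_R$, and take the infimum. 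This gives $\delta_2(W_X,W_Y)\le R^{-1}\bard(X,Y)$, so the map descends to orbit closures and is Lipschitz.

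One subtlety is that orbit closures of $K_R$ may only be limits of orbits. However, $K_R$ is closed under $L^2$ limits, because $L^2$ convergence gives an a.e.\ convergent subsequence. The Lipschitz estimate then extends by density, so the map is well defined on $K_R/G_\infty$.

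\textbf{Step 2 (continuity of IGNs in $\delta_2$).} This step is the main obstacle. I would invoke the result of~\cite{herbst2025higher} that IGNs with continuous $\varrho$ are continuous on $(\mathcal{W}_0,\delta_2)$. To argue it directly, it suffices by linearity to treat a single term $L^{(2)}(\varrho(L^{(1)}(W)+b))$.

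First, $L^{(1)}\in\LE{2}{k}$ is bounded linear and $S_{[0,1]}$-equivariant. Therefore
\begin{equation*}
\|L^{(1)}W-(L^{(1)}\widetilde W)^\varphi\|_2=\|L^{(1)}(W-\widetilde W^\varphi)\|_2\le\|L^{(1)}\|\,\|W-\widetilde W^\varphi\|_2 .
\end{equation*}

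Second, the pointwise map $\varrho$ needs care. Continuity of $\varrho$ alone does not make the Nemytskii operator continuous on $L^2$ without a growth bound. I would use the fact that graphons are bounded in $[0,1]$ together with the explicit finite-dimensional structure of equivariant linear layers in~\cite{herbst2025higher}: such $L^{(1)}$ are combinations of averaging and diagonal-type operators. This structure keeps $L^{(1)}(W)$ uniformly bounded in $L^\infty$ by a constant $C$. On $[-C-|b|,\,C+|b|]$ the function $\varrho$ is uniformly continuous and bounded. Combining this with dominated convergence, or a truncation argument, shows that $U\mapsto\varrho(U+b)$ is continuous in $L^2$ on this bounded set.

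Finally, $L^{(2)}\in\LE{k}{0}$ is bounded and invariant, because equivariance into order-$0$ tensors means invariance. Composing the three estimates gives $|\IGN(W)-\IGN(\widetilde W)|\to0$ as $\|W-\widetilde W^\varphi\|_2\to0$. This makes $\IGN$ continuous in $\delta_2$.

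Composing Step 1 with Step 2 proves the theorem.
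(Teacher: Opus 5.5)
Your proposal is correct and follows the paper's proof. Both factor each model through the Gram map and prove $\delta_2(W_X,W_Y)\le R^{-1}\bard(X,Y)$ using the same splitting $\langle X(x)-Y(x),X(y)\rangle+\langle Y(x),X(y)-Y(y)\rangle$, the bound $\|X\|_{L^\infty},\|Y\|_{L^\infty}\le R$, and the $\Orth(k)$-invariance of $W_X$ (you use Minkowski where the paper uses Young's inequality, with the same constant). Both then cite \cite{herbst2025higher} for the $\delta_2$-continuity of IGNs; your extra sketch of that step, where $L^\infty$-boundedness of $L^{(1)}(W)$ on graphons lets a merely continuous $\varrho$ suffice, is a sound supplement that the paper does not spell out.
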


\ifdefined\isarxiv
  \paragraph{Step 3 (Universality).}
\else
\textbf{Step 3 (Universality). }
\fi
 To close, we show that our  model class for point clouds \eqref{eq:almost-there} is universal; the proof is deferred to Appendix \ref{app: proof of point clouds universality}.

\begin{theorem}[Universality of $\cFPC^{\varrho}$]
\label{thm: point clouds universality}
Let $\varrho$ be continuous and non-polynomial. Then, $\cFPC^{\varrho}$ is dense in $C\left(K_R/G_\infty\right)$.
\end{theorem}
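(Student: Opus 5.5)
We will prove Theorem~\ref{thm: point clouds universality} by factoring the model class through the Gram map $X\mapsto W_X$ and transferring a known universality result for IGNs. The argument has three ingredients: (a) the map $\Psi\colon K_R/G_\infty\to(\mathcal{W}_0,\delta_2)$, $X\mapsto W_X$, is well defined, continuous and injective; (b) its image $\Psi(K_R/G_\infty)$ is compact in $\delta_2$; and (c) IGNs with a continuous non-polynomial $\varrho$ are dense in $C(\mathcal{K})$ for every $\delta_2$-compact set $\mathcal{K}$ of graphons. We take (c) from \cite{herbst2025higher}, which establishes universality of IGNs with respect to $\delta_p$. Granting these, the conclusion is immediate. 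Given $f\in C(K_R/G_\infty)$, the map $\Psi$ is a continuous bijection from a compact space (Proposition~\ref{prop:pointcloudCompact}) onto a Hausdorff space, hence a homeomorphism onto its image. So $f\circ\Psi^{-1}\in C(\Psi(K_R/G_\infty))$, and by (c) it can be approximated uniformly within $\varepsilon$ by some $\IGN$. Composing with $\Psi$ gives $\sup_{X}|f(X)-\IGN(W_X)|\le\varepsilon$.

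For (a), well-definedness holds because for $(\varphi,h)\in S_{[0,1]}\times\Orth(k)$ we have $\langle X(\varphi x)h^\top,X(\varphi y)h^\top\rangle=\langle X(\varphi x),X(\varphi y)\rangle$. Hence $W_{g\cdot X}=W_X^\varphi$, which has $\delta_2$-distance zero from $W_X$; passing to orbit closures uses the continuity below. Since $\|X\|_{L^\infty}\le R$, Cauchy--Schwarz gives $W_X\in[0,1]$. For continuity we estimate
\[
\|W_X-W_Y\|_{L^2([0,1]^2)}\le \frac{1}{2R^2}\bigl(\|X\|_{L^\infty}+\|Y\|_{L^\infty}\bigr)\|X-Y\|_{L^2}\le R^{-1}\|X-Y\|_{L^2},
\]
which is the same estimate as in the proof of Theorem~\ref{thm: point cloud continuous}. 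Taking the infimum over the group shows $\delta_2(W_X,W_Y)\le R^{-1}\bar d(X,Y)$.

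Injectivity is the main obstacle. We must show that if $\delta_2(W_X,W_Y)=0$ then $\bar d(X,Y)=0$. First, a $\delta_2$-distance zero means $W_X$ and $W_Y$ have equal homomorphism densities in every multigraph, and in particular in every simple graph. Next, we use the kernel-trick moment identity
\[
\int\prod_{ij\in E}\langle X(x_i),X(x_j)\rangle\,dx=\int p_F(X(x_1),\dots,X(x_{|V|}))\,dx,
\]
where $p_F$ runs over products of inner products. By the first fundamental theorem of invariant theory for $\Orth(k)$, these polynomials span all $\Orth(k)$-invariant polynomials in the joint variables. Multigraph densities of $W_X$, after expanding the affine shift, therefore give all $\Orth(k)$-invariant moments of the product measure $\mu_X^{\otimes n}$, where $\mu_X$ is the pushforward of Lebesgue measure under $X$, compactly supported in $\overline{B(0,R)}$. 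We then argue that these moments determine $\mu_X$ up to an orthogonal transformation. On compact support, invariant polynomials are dense in $C(\overline{B(0,R)}^n)^{\Orth(k)}$, so the moments determine the $\Orth(k)$-invariant integrals of $\mu_X^{\otimes n}$ for every $n$. Since $\Orth(k)$ is compact, orbits of $\mu_X$ under the action on measures are separated by such invariants: for example, one compares $\int\!\int g(\langle x,y\rangle,|x|,|y|)$-type statistics, or cites the standard result that the Gram-matrix distribution of i.i.d.\ samples determines the measure up to $\Orth(k)$. Hence $\mu_Y=h_\#\mu_X$ for some $h\in\Orth(k)$, and then, as in Example~\ref{ex:Lp}, $\bar d(X,Y)=\inf_h W_2(h_\#\mu_X,\mu_Y)=0$.

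For (b), compactness of the image follows from the continuity of $\Psi$ and Proposition~\ref{prop:pointcloudCompact}. If the version of (c) in \cite{herbst2025higher} is stated only for the whole graphon space rather than for compact subsets, we instead extend $f\circ\Psi^{-1}$ to the full space using the Tietze extension theorem, which applies because $\Psi(K_R/G_\infty)$ is closed in a metric space.
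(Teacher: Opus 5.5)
Your top-level strategy is sound and differs from the paper's. The paper runs Stone--Weierstrass on the span of $X\mapsto t(F,W_X)$ and then approximates each density by an IGN. You instead show that $\Psi\colon X\mapsto W_X$ is a continuous injection of the compact space $K_R/G_\infty$, hence a homeomorphism onto a $\delta_2$-compact image, and pull back IGN universality directly. That is cleaner and avoids homomorphism densities entirely. Your well-definedness check, the range check $W_X\in[0,1]$, and the bound $\delta_2(W_X,W_Y)\le R^{-1}\overline{\mathrm{d}}(X,Y)$ are correct. However, both routes rest on the same injectivity statement, $\delta_2(W_X,W_Y)=0\Rightarrow\overline{\mathrm{d}}(X,Y)=0$, and your argument for it has a genuine gap.

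The gap is the diagonal of the Gram kernel. $\delta_2$ sees $W_X$ only as an element of $L^2([0,1]^2)$, so the homomorphism densities that are well defined and constant on $\delta_2$-classes are those of \emph{loopless} multigraphs. A loop evaluates $W$ on the null set $\{x=y\}$, and $\delta_2(W_X,W_Y)=0$ says nothing about it. Loopless multigraphs only produce monomials in $\langle x_i,x_j\rangle$ with $i\neq j$. The first fundamental theorem for $\Orth(k)$, however, needs all $\langle x_i,x_j\rangle$ with $i\le j$, including the squared norms $|x_i|^2$. So the family you actually control does not span the invariant polynomials. Every later step uses exactly the diagonal entries you lack: the density in $C(\overline{B(0,R)}^n)^{\Orth(k)}$, the $g(\langle x,y\rangle,|x|,|y|)$ statistics, and the cited Gram-matrix-law result.

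Concretely, $\int|x|^2\,d\mu_X$ is not in the linear span of the loopless densities of $W_X$. Rescaling $X\mapsto tX$ shows that any such combination must reduce to a multiple of the single-edge density $|\int x\,d\mu_X|^2$, which vanishes for symmetric $\mu_X$. The orbit is in fact recoverable from the off-diagonal data, but that is precisely what has to be proved.

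The paper proves it with the coupling theorem \cite[Cor.~10.35]{lovasz2012large}. From $\delta_2(W_X,W_Y)=0$ it obtains measure-preserving maps $\varphi,\psi$ with $W_X^\varphi=W_Y^\psi$ a.e., i.e.\ $\langle X^\varphi(x),X^\varphi(y)\rangle=\langle Y^\psi(x),Y^\psi(y)\rangle$ for a.e.\ $(x,y)$. The integral operators $\Phi^*\Phi$ with these kernels then coincide, and an SVD argument gives $h\in\Orth(k)$ with $Y^\psi=hX^\varphi$ a.e. Hence $\mu_Y=h_\#\mu_X$ and $\overline{\mathrm{d}}(X,Y)=0$. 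If you replace your invariant-theory paragraph with this lemma, the rest of your proof goes through.
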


\revised{The above theorem gives universality in the space of continuous functions of point clouds that are invariant under rotations of the cloud. If we center the input point cloud before applying our proposed architecture, we also obtain universality in the smaller space of continuous functions invariant under all rigid motions of their inputs.} 

\section{Conclusions and future work}
\label{sec: discussion}
To summarize, we have (i) formalized universality for any-dimensional invariant architectures, (ii) proposed a general recipe for constructing and certifying any-dimensional universal model classes, and (iii) used this recipe to prove universality for several concrete families. A key limitation of our framework is that it applies only to scalar-valued outputs. Many practically relevant any-dimensional architectures have both input and output sizes growing and  are naturally equivariant as opposed to invariant. In this setting, the problem no longer reduces to studying functionals on orbit spaces. Developing tools that exploit such equivariance to obtain universality results for more general architectures is an important direction for future work. \revised{Another limitation is that we only focus on dense graphs in our current instantiations. There are several notions of limits for sparse graphs including~\citep{borgs2019graphons,sparse_graphex,Backhausz_Szegedy_2022}, and applying our framework to these limits is another interesting direction.}

\bibliographystyle{unsrt}
\bibliography{references.bib}

\appendix
\changelocaltocdepth{1}

\section{Missing details from Section \ref{sec: preliminary}}
\label{app: detailed preliminary}

In this section, we present missing details from Section~\ref{sec: preliminary}. We start with a more detailed definition of a consistent sequence.
\begin{definition}[Detailed version of Def. \ref{def: consistent sequence}]
\label{def: detail consistent sequence}
    A \textbf{consistent sequence} of group representations over directed poset $(\NN, \preceq)$ is a triple $\VV=\{\left(V_n\right)_{n \in \NN},\left(\varphi_{N, n}\right)_{n \preceq N},\left(\mathrm{G}_n\right)_{n \in \NN}\}$ consisting of the following elements. 
\begin{enumerate}
    \item (\textbf{Groups}) A sequence of groups $\left(\mathrm{G}_n\right)$ indexed by $\NN$ that embed into each other. Specifically, whenever $n \preceq N$ there is an injective group homomorphism $\theta_{N, n}\colon \mathrm{G}_n \rightarrow \mathrm{G}_N$ with
        \begin{align*}
\theta_{i, i}&=\operatorname{id}_{\mathrm{G}_i} \quad \text { for all } i \in \NN, \\
\theta_{k, j} \circ \theta_{j, i}&=\theta_{k, i} \quad\  \text { whenever } i \preceq j \preceq k \text { in } \NN .
\
          \end{align*}

\item (\textbf{Vector spaces}) A sequence of finite-dimensional, real vector spaces $\left(V_n\right)$ indexed by $\NN$ such that each $V_n$ is a $\mathrm{G}_n$-representation. 

\item (\textbf{Embeddings}) A collection of embeddings $(\varphi_{N,n}\colon V_n\hookrightarrow V_N)_{n\preceq N}$ such that $\varphi_{N,n}$ is $\mathrm{G}_n$-equivariant, i.e.,
$$
\varphi_{N, n}(g \cdot v)=\theta_{N, n}(g) \cdot \varphi_{N, n}(v) \text { for all } g \in \mathrm{G}_n, v \in V_n .
$$ 
and such that
\begin{align*}
\varphi_{i, i}&=\operatorname{id}_{V_i} \quad \text { for all } i \in \NN, \\
\varphi_{k, j} \circ \varphi_{j, i}&=\varphi_{k, i} \quad \text { whenever } i \preceq j \preceq k \text { in } \NN .
\
\end{align*}

\end{enumerate}
\end{definition}
We can take direct sums of consistent sequences to obtain richer consistent sequences, as done in several of the examples of Section~\ref{sec: instantiations}.
\begin{definition}
    \label{def: direct sum consistent sequence}
 The $k$-fold direct sum of $\VV$ is defined as
$$
\VV^{\oplus k}\coloneqq\{(V_n^{\oplus k}),(\varphi_{N, n}^{\oplus k}),\left(\mathrm{G}_n\right)\}
$$
where $V_n^{\oplus d}$ denotes the direct sum of $d$ copies of $V_n$ and $\varphi_{N, n}^{\oplus d}: V_n^{\oplus d} \rightarrow V_N^{\oplus d}$ is defined by applying $\varphi_{N, n}$ to each component. The group $\mathrm{G}_n$ acts on $V_n^{\oplus d}$ by simultaneously acting on every copy of $V_n$, i.e. $g \cdot\left(v_1, \ldots, v_d\right)\coloneqq\left(g \cdot v_1, \ldots, g \cdot v_d\right)$.
\end{definition}

The next two propositions show that compatibility (Definition~\ref{def:compatible}) is equivalent to the existence of a limiting extension.
\begin{proposition}[{\citep[Prop.~B.6]{levin2025transferring}}]
\label{prop: compatible maps}
 Let $\VV=\left\{\left(V_n\right),\left(\varphi_{N, n}\right),\left(\mathrm{G}_n\right)\right\}$ and $\UU=\left\{\left(U_n\right),\left(\psi_{N, n}\right),\left(\mathrm{G}_n\right)\right\}$ be consistent sequences. A sequence of maps $\left(f_n: V_n \rightarrow U_n\right)$ is compatible if, and only if, it extends to the limit, i.e., there exists a $G_\infty$-equivariant map
$
f_{\infty}\colon V_{\infty} \rightarrow U_{\infty}
$
such that $f_n=\left.f_{\infty}\right|_{V_n}$ for all $n$.
\end{proposition}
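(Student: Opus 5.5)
The plan is to prove both directions directly from the definition of $V_\infty$ as the quotient $\bigsqcup_n V_n/\sim$. Throughout, I identify $V_n$ with its image $\{[v] \mid v\in V_n\}\subseteq V_\infty$. This identification is injective because each $\varphi_{N,n}$ is injective and the maps satisfy $\varphi_{k,j}\circ\varphi_{j,i}=\varphi_{k,i}$. The key consequence is that two elements $v\in V_n$ and $w\in V_m$ satisfy $[v]=[w]$ if and only if there is a common upper bound $N$ (which exists because the poset is directed) with $\varphi_{N,n}(v)=\varphi_{N,m}(w)$. I would first record this description of $\sim$ as a small lemma: the relation "agree after embedding into some common $N$" is already an equivalence relation, using directedness and the cocycle identities, so it coincides with the generated relation. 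The same argument gives $G_\infty=\bigsqcup_n G_n/\sim$, and its action on $V_\infty$ is $[g]\cdot[v]\coloneqq[\theta_{N,n}(g)\cdot\varphi_{N,m}(v)]$ for any common upper bound $N$ of $n$ and $m$. Well-definedness of this action follows from the equivariance $\varphi_{N,n}(g\cdot v)=\theta_{N,n}(g)\cdot\varphi_{N,n}(v)$.

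For the direction "compatible $\Rightarrow$ extends", I define $f_\infty([v])\coloneqq[f_n(v)]$ for $v\in V_n$. Well-definedness is the main point. Given $[v]=[w]$ with $v\in V_n$ and $w\in V_m$, choose $N$ with $\varphi_{N,n}(v)=\varphi_{N,m}(w)$. Compatibility, $f_N\circ\varphi_{N,n}=\psi_{N,n}\circ f_n$, then gives $\psi_{N,n}(f_n(v))=f_N(\varphi_{N,n}v)=f_N(\varphi_{N,m}w)=\psi_{N,m}(f_m(w))$, so $[f_n(v)]=[f_m(w)]$ in $U_\infty$. (In the real-valued setting of the main text, $U_n=\RR$ with identity maps, and this reduces to $f_N\circ\varphi_{N,n}=f_n$.) The restriction property $f_\infty|_{V_n}=f_n$ holds by construction. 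For equivariance, take $[g]$ with $g\in G_n$ and $[v]$ with $v\in V_m$, and lift both to a common $N$. Then $f_\infty([g]\cdot[v])=[f_N(\theta_{N,n}(g)\varphi_{N,m}v)]=[\theta_{N,n}(g)\cdot f_N(\varphi_{N,m}v)]=[g]\cdot f_\infty([v])$, using $G_N$-equivariance of $f_N$.

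For the converse, suppose $f_\infty$ exists. Then $f_n=f_\infty|_{V_n}$, and $[\varphi_{N,n}(v)]=[v]$ gives $f_N(\varphi_{N,n}v)=f_\infty([v])=f_n(v)$ after identifying $U_n\subseteq U_\infty$. Equivariance of each $f_n$ follows by restricting the $G_\infty$-equivariance of $f_\infty$ to $G_n\subseteq G_\infty$ acting on $V_n$. Injectivity of $U_n\to U_\infty$ is what lets us read these identities back in $U_N$ and $U_n$.

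I expect the main obstacle to be the bookkeeping in the first step: checking that $\sim$ is exactly the "common upper bound" relation and that the $G_\infty$-action is well defined, since everything else is formal once that is in place. Uniqueness of $f_\infty$ is immediate, because every element of $V_\infty$ lies in the image of some $V_n$.
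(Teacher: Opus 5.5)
Your proof is correct. Note that the paper does not prove this proposition at all. It imports it from \citep[Prop.~B.6]{levin2025transferring}, so there is no in-paper argument to compare against.

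What you give is the standard universal-property-of-direct-limits argument:
\begin{enumerate}
\item Identify the generated relation $\sim$ with ``agreement after embedding into a common upper bound,'' using directedness and the cocycle identities for both $\varphi$ and $\theta$.
\item Define $f_\infty([v])\coloneqq[f_n(v)]$.
\item Use the injections $V_n\hookrightarrow V_\infty$ and $U_n\hookrightarrow U_\infty$ to read identities in $U_\infty$ back in $U_N$ and $U_n$ for the converse.
\end{enumerate}
This is exactly what the statement requires, and your well-definedness and equivariance checks are complete.

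One point you handle correctly deserves to be said explicitly. The paper's Definition~\ref{def:compatible} is stated only for real-valued $f_n$, with the condition $f_N\circ\varphi_{N,n}=f_n$. For a general target sequence $\UU$ the right condition is $f_N\circ\varphi_{N,n}=\psi_{N,n}\circ f_n$, and the proposition is only meaningful under that reading. In the real-valued case, take $U_n=\RR$ with identity embeddings and trivial group action; then ``$G_n$-equivariant'' in the paper's definition means $G_n$-invariant.
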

In particular, a sequence of norms $\|\cdot\|_{V_n}$ is compatible if and only if they extend to a norm $\|\cdot\|_{V_\infty}$ preserved by the action of $G_\infty$.

Finally, we show that continuity of invariant functions in the limit space is equivalent to continuity of the corresponding induced function on the orbit space.
\begin{lemma}
    \label{lem:ContinuityEqual}
    Let $\|\cdot\|_{V_\infty}$ be the extension of a compatible sequence of norms $\|\cdot\|_{V_n}$. Let $f_\infty \colon \Vinf \to \RR$ be $G_\infty$-invariant function, and let $\bar{f}_\infty$ be the induced function on the space of orbit closures $\VmG$. Then, $f_\infty$ is continuous on $\left( \Vinf, \|\cdot\|_{V_\infty}\right)$ if, and only if, $\bar{f}_\infty$ is continuous on $\left( \VmG, \overline{\mathrm{d}}\right)$.
\end{lemma}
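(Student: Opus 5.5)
The plan is to reduce both directions to the two basic facts about the symmetrized metric. First, $\overline{\mathrm{d}}(x,y)\le \|x-y\|_{V_\infty}$, obtained by taking $g=e$ in \eqref{eq:symMetric}. Second, since $G_\infty$ acts by isometries, $\overline{\mathrm{d}}(gx,y)=\overline{\mathrm{d}}(x,y)$ for all $g \in G_\infty$. Let $\pi\colon \Vinf\to\VmG$ denote the map sending $x$ to its orbit closure, so that $f_\infty=\bar f_\infty\circ\pi$. The first fact says that $\pi$ is $1$-Lipschitz.

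\emph{($\Leftarrow$).} Assume $\bar f_\infty$ is continuous. Then $f_\infty=\bar f_\infty\circ\pi$ is a composition of continuous maps, since $\pi$ is $1$-Lipschitz. This direction is immediate.

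\emph{($\Rightarrow$).} Assume $f_\infty$ is continuous.

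1. I first check that $\bar f_\infty$ is well defined on orbit closures. If $\overline{\mathrm{d}}(x,y)=0$, choose $g_j\in G_\infty$ with $g_jx\to y$ in norm. Continuity and invariance give $f_\infty(y)=\lim_j f_\infty(g_jx)=f_\infty(x)$.

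2. Next I prove continuity at a point $x$. Fix $\varepsilon>0$ and pick $\eta>0$ with $|f_\infty(z)-f_\infty(x)|<\varepsilon$ whenever $\|z-x\|<\eta$. Now suppose $\overline{\mathrm{d}}(x,y)<\eta$. Then there is $g$ with $\|gy-x\|<\eta$, so $|f_\infty(y)-f_\infty(x)|=|f_\infty(gy)-f_\infty(x)|<\varepsilon$. Hence $\bar f_\infty$ is continuous at $\pi(x)$ on the image $\pi(\Vinf)$. The same argument works when $x$ is replaced by any representative, because $\overline{\mathrm{d}}$ is constant on orbit closures.

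3. The remaining subtlety is that the paper defines $\VmG$ as the $\overline{\mathrm{d}}$-completion of the set of orbits. So I must make sure that $\pi(\Vinf)$ is already complete, or else interpret $\bar f_\infty$ on the completion.

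I expect step 3 to be the main obstacle. To handle it, I would show directly that $\pi(\Vinf)$ is complete. Take a $\overline{\mathrm{d}}$-Cauchy sequence of orbits and pass to a subsequence with $\overline{\mathrm{d}}(x_j,x_{j+1})<2^{-j}$. Then inductively choose representatives $x_j'=g_jx_j$ so that $\|x_j'-x_{j+1}'\|<2^{-j}$: at step $j$, pick $h$ with $\|x_j'-hx_{j+1}\|<2^{-j}$, using the isometric action to move the already-fixed $x_j'$ rather than $x_{j+1}$. The resulting sequence $(x_j')$ is norm-Cauchy, so it converges in the complete space $\Vinf$ to some $x$. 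Then $\pi(x_j)\to\pi(x)$, so the limit orbit closure lies in $\pi(\Vinf)$.

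With this, $\VmG=\pi(\Vinf)$, and $\bar f_\infty$ is defined everywhere. Continuity on all of $\VmG$ then follows from step 2, which completes the equivalence.
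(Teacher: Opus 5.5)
Your proof is correct and is essentially the paper's argument: ($\Leftarrow$) follows because the quotient map $\pi$ is $1$-Lipschitz, and ($\Rightarrow$) uses the isometric action to turn $\overline{\mathrm{d}}(x,y)<\eta$ into a norm-close representative $gy$ and then applies invariance. Your steps~1 and~3 supply details the paper's proof takes for granted: that $f_\infty$ is constant on orbit closures, and that $\pi(\Vinf)$ is already $\overline{\mathrm{d}}$-complete, so every point of $\VmG$ has a representative. Your arguments for both are sound.
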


\begin{proof}[Proof of Lemma \ref{lem:ContinuityEqual}]
    Let $\pi \colon \Vinf \rightarrow \VmG$ be the quotient map $\pi(x)=[x]$. Because $f_\infty$ is $G_\infty$-invariant, we can write $\bar{f}_\infty \circ \pi$ for some $\bar f_\infty\colon\VmG\to\RR$. We prove the two implications.

\paragraph{($\Leftarrow$)} Suppose $\bar{f}_\infty \colon \VmG \rightarrow \RR$ is continuous with respect to $\bard$. Note that $\pi$ is continuous with respect to the norm $\|\cdot\|_{V_\infty}$ on $\Vinf$ and symmetrized metric $\bard$ on $\VmG$, since $\bard\left(\pi(x),\pi(y)\right)=\inf_{g\in G_\infty} \|g \cdot x-y\|_{V_\infty}\leq \|x-y\|_{V_{\infty}}$. Therefore, the composition $f_\infty=\bar{f}_\infty\circ\pi$ is continuous with respect to $\|\cdot\|_{V_\infty}$.

\paragraph{($\Rightarrow$)} Suppose  $f_\infty \colon \Vinf \rightarrow \RR$ is continuous with respect to $\|\cdot\|_{V_\infty}$. Then for any $\varepsilon>0$ and any $[x] \in \VmG$ there exists $\delta>0$ such that any $y \in \Vinf$ with $\|x-y\|_{V_{\infty}}< \delta$ satisfies $|f_\infty(x)-f_\infty(y)|<\varepsilon.$ Now, consider any $[y^\prime] \in \VmG$ satisfying $\bard \left( [x],[y^\prime]\right)< \delta$, then by the definition of $\bard$, there exists an element $g_0 \in G_\infty$ such that $\|g_0\cdot x-y^\prime\|_{V_\infty}<\delta$. Since for any $g \in G_\infty$, the map $x \mapsto g \cdot x$ is an isometry on $\Vinf$, we have $\| x-g_0^{-1}\cdot y^\prime\|_{V_\infty}=\|g_0\cdot x-y^\prime\|_{V_\infty}< \delta$, which implies $|f_\infty(x)-f_\infty(g_0^{-1}\cdot y^\prime)| < \varepsilon$. Since $f_\infty$ is $G_\infty$-invariant, we obtain $|\bar{f}_\infty([x])-\bar{f}_\infty([y^\prime])|=|f_\infty(x)-f_\infty(g_0^{-1}\cdot y^\prime)| < \varepsilon$, and conclude that $\bar{f}_\infty$ is continuous.
\end{proof}

\section{Missing details from Section~\ref{sec: Any-dimensional universality}}
\label{app:details-recipe}
In this section, we establish the claim that the set of functions in $L^p[0,1]$ with uniformly bounded range is not compact. This is a well known fact, but we prove it for completeness.
\begin{lemma}
\label{lem:noncmptLp}
    The set $\widetilde K = \{X\colon[0,1]\to[0,1]\}$ is not compact in $L^p([0,1])$ for any $p\in[1,\infty]$.
\end{lemma}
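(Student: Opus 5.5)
We will exhibit a sequence in $\widetilde K$ that has no convergent subsequence in $L^p([0,1])$, using the Rademacher-type functions
\[
X_n(t) \coloneqq \tfrac12\bigl(1+\operatorname{sign}\sin(2^n \pi t)\bigr) = \mathbbm{1}\{\lfloor 2^n t\rfloor \text{ is even}\}, \qquad n\in\NN .
\]
These are measurable and take values in $\{0,1\}\subseteq[0,1]$, so each $X_n\in\widetilde K$. Note that $\widetilde K$ is understood modulo a.e.\ equality. Since $L^p([0,1])$ is a metric space, compactness is equivalent to sequential compactness. It therefore suffices to show that the $X_n$ are uniformly separated, i.e.\ $\|X_n-X_m\|_p\ge c>0$ for all $n\neq m$.

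\textbf{Separation computation.} Fix $m<n$. On each dyadic interval of length $2^{-m}$, $X_m$ is constant. Inside such an interval, $X_n$ equals $1$ on exactly half of the measure and $0$ on the other half, because $2^{n-m}\ge 2$ sub-blocks of length $2^{-n}$ alternate. Hence $X_n$ and $X_m$ disagree on a set of measure exactly $1/2$. Since $|X_n-X_m|\in\{0,1\}$, we get
\[
\|X_n-X_m\|_p = 2^{-1/p}
\]
for $p<\infty$, and $\|X_n-X_m\|_\infty=1$.

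\textbf{Conclusion.} No subsequence of $(X_n)$ is Cauchy, so none converges. Hence $\widetilde K$ is not sequentially compact, and therefore not compact, for every $p\in[1,\infty]$.

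The only delicate point is the counting step: one must check that, within each $2^{-m}$ dyadic block, the finer indicator splits measure exactly in half. This holds because $2^{n-m}$ is even, so the number of sub-blocks is even and the alternating pattern contributes equal measure to $0$ and $1$. Everything else is routine. As an alternative, one could note that $\widetilde K$ is closed and bounded while $X_n\rightharpoonup 1/2$ weakly but not strongly; the direct separation argument is simpler.
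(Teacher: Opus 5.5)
Your proof is correct and is essentially the paper's argument. The paper uses the complementary square waves (indicators of the odd-indexed dyadic blocks of length $2^{-n}$) and likewise concludes from uniform pairwise separation that no subsequence can converge. Your constant $2^{-1/p}$ is the right one; the paper's stated $4^{-1/p}$ is an arithmetic slip that does not affect the conclusion. Your closing aside that $X_n \rightharpoonup 1/2$ holds only for $p<\infty$: in $L^\infty$ the convergence is merely weak-$*$.
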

\begin{proof}[Proof of Lemma~\ref{lem:noncmptLp}.]
It suffices to exhibit an infinite sequence $(X_n)\subseteq\widetilde K$ that has no convergent subsequence. To this end, let $X_n(t)=1$ if $t\in\bigcup_{i=1}^{2^{n-1}}\left[\frac{2i-1}{2^n},\frac{2i}{2^n}\right]$ and $X_n(t)=0$ otherwise. Note that $\|X_n-X_m\|_p=\frac{1}{4^{1/p}}$ for all $n\neq m$, so $(X_n)$ cannot have a convergent subsequence in $L^p$.

\end{proof}

\section{Missing proofs and additional details from Section \ref{sec:set}}
\label{app: set}
In this section, we present the proofs and additional details about the constructions for set problems.
\subsection{Consistent sequences on sets}
\label{app: set consistent sequence}
We start by elaborating on the consistent sequences from Section~\ref{sec:set}. These were studied in detail in~\citep[Appx.~F]{levin2025transferring}, and we refer the reader there for more details and proofs.
\subsubsection{Zero-padding consistent sequence $\Vzero$ with $\ell_p$ norm} 
\label{app: details Zero-padding consistent sequence}

The zero-padding consistent sequence
$
\Vzero=\left\{\left(V_n\right),\left(\varphi_{N, n}\right),\left(\mathrm{G}_n\right)\right\}
$
is defined as Example \ref{exp: consistent sequence}.
Here we set $\mathrm{G}_n=\mathrm{S}_n$ to be the permutation group acting on $\RR^n$ by permuting coordinates, i.e., $(g \cdot x)_i=x_{g^{-1}(i)}$ for $g \in \mathrm{~S}_n$. For $n \leq N$, the embedding of groups $\theta_{N, n}\colon  \mathrm{S}_n  \rightarrow \mathrm{~S}_N$ is given by
$$
\theta_{N, n}(g)= \left[\begin{array}{cc}
g & 0 \\
0 & I_{N-n}
\end{array}\right] \quad \text{for } g \in S_n.
$$
For $p \in [1,\infty)$, each $V_n$ is equipped with the $\ell_p$-norms $\|x\|_p=\left(\sum_{i=1}^n\left|x_i\right|^p\right)^{1 / p}$ which are permutation-invariant. By Proposition \ref{prop: compatible maps}, this induces a norm on $V_\infty$, also denoted as $\|\cdot\|_p$. Consequently, the limit space is identified with the classical sequence space $$\Vinf=\ell_p=\left\{x=\left(x_i\right)_{i=1}^{\infty}: \|x\|_p=\left(\sum_{i=1}^{\infty}\left|x_i\right|^p\right)^{1/p}<\infty\right\}.$$
 The symmetrized distance $\overline{\mathrm{d}}_p(x, y)=\inf _{g \in G_\infty}\|g \cdot x- y\|_p$ defines a metric on the space of orbit closures $\VmG$.

Next, consider the $k$-fold direct sum $\Vzero^{\oplus k}=\{(\RR^{n \times k}),(\varphi_{N, n}^{\oplus k}),(\mathrm{S}_n)\}$ of $\Vzero$, as defined in Definition \ref{def: direct sum consistent sequence}. Fix an arbitrary norm $\|\cdot\|_{\RR^k}$ on $\RR^k$, and define the $\ell_p$-norm on $\RR^{n\times d}$ as
$$\|X\|_p=\left(\sum_{i=1}^n\left\|X_{i:}\right\|_{\RR^k}^p\right)^{1 / p},$$ 
where $X_{i:}$ denotes the $i$-th row of $X$. The corresponding limit space can be represented as:
$$\Vinf=\ell_p(\RR^k)=\left\{X=\left(X_{i:}\right)_{i=1}^{\infty}: \|X\|_p=\left(\sum_{i=1}^{\infty}\left\|X_{i:}\right\|_{\RR^k}^p\right)^{1/p}<\infty\right\}.$$

\subsubsection{Duplication consistent sequence $\Vdup$ with normalized $\ell_p$-norms}
\label{app: details duplication embedding consistent sequence}
The duplication embedding consistent sequence $\Vdup=\left\{\left(V_n\right),\left(\varphi_{N, n}\right),\left(\mathrm{G}_n\right)\right\}$ is defined as 
Example \ref{exp: consistent sequence}.
The group $\mathrm{G}_n=\mathrm{S}_n$ again acts on $\RR^n$ by permuting coordinates. For $n|N$, the embedding of groups $\theta_{N, n}\colon  \mathrm{S}_n  \rightarrow \mathrm{~S}_N$ is given by $\theta_{N, n}(g)= g \otimes I_{N / n}$, where $\otimes$ denotes the Kronecker product.

The space $V_\infty$ can be identified with the space of step functions, whose discontinuity points are rational. More precisely, each $x \in \RR^n$ corresponds to a step function $f_x: [0,1] \rightarrow \RR$ given by
$$ f_x(t)=x_i \quad \text{ for } t \in \left(\frac{i-1}{n}, \frac{i}{n}\right],\, i\in [n] ,$$
and $f_x(0)=x_1$.
For $p \in [1,\infty)$, each $V_n$ is equipped with the normalized $\ell_p$-norms $\|x\|_{\overline{p}}=\left(\frac{1}{n} \sum_{i=1}^n\left|x_i\right|^p\right)^{1 / p}$, which are compatible. Under the identification with step functions, this corresponds to the $L^p$ norm of functions, given by $\|f_x\|_p=\left( \int_0^1 |f_x(t)|^p dt\right)^{1/p}$. By Proposition \ref{prop: compatible maps}, this induces a norm on $V_\infty$. Consequently, the limit space can be identified with the $L^p$ space, specifically, $$\Vinf \cong L^p([0,1])=\left\{f:[0,1] \rightarrow \RR \text { measurable : } \int_0^1|f(t)|^p d t<\infty\right\}.$$
The permutations in $\mathrm S_n$ act on functions on $L^p([0,1])$ by permuting consecutive intervals of length $1/n$. Formally, each $\sigma\in\mathrm S_n$ defines a measure-preserving bijection $\widetilde\sigma\colon[0,1]\to[0,1]$ via $\widetilde\sigma((i-1)/n + t)=(\sigma(i)-1)/n+t$ for $t\in[0,1/n)$ and $i\in[n]$ (and $\widetilde\sigma(1)=1$ for simplicity).

A function $f\in \Vinf$ gives rise to a probability measure $\mu_f$ defined as the distribution of $f (T )$ for $T$ uniformly sampled from $[0, 1]$. Equivalently, $\mu_f =f_\# \lambda$ is the pushforward under $f$ of the Lebesgue measure $\lambda$ on $[0,1]$. All elements in the orbit of $f$ correspond to the same measure $\mu_f$, and conversely, two functions are in the orbit-closures of each other if and only if they correspond to the same measure. Thus, the orbit space $\VmG$ can be identified with  $\cP_p(\RR)$. Furthermore, the symmetrized distance coincides with the Wasserstein-$p$ distance in this case. 

Likewise, for the $k$-fold direct sum of $\Vdup$ we fix an arbitrary norm on $\RR^k$. The limit space $\Vinf$ can be identified with $L^p\left( [0,1];\RR^k\right)$, and the orbit space can be identified with $\cP_p(\RR^k)$ endowed with the Wasserstein $p$-distance with respect to $\|\cdot\|_{\RR^k}$ in the same way.
More details and proofs can be found in \citep[Appx.~F]{levin2025transferring}.

\subsection{Proof of Theorem \ref{thm:ZeropaddingContinuous}}
\label{app: proof of zeropadding set continuous}

\begin{proof}[Proof of Theorem \ref{thm:ZeropaddingContinuous}]  We show that  the model $\mathrm{DeepSets}_\infty^{\rho,\sigma}$ can be decomposed as $\mathrm{DeepSets}_\infty^{\rho,\sigma} = \sigma \circ \tau$ with $\tau \colon \VmG \to \RR^r$ continuous. Since $\VmG$ is a metric space equipped with the symmetrized distance, continuity on $\VmG$ is equivalent to continuity on each of its compact subset \citep{willard2012general}.
Finally, by definition, $\sigma\colon \RR^r \to \RR$ is a continuous function, this decomposition implies continuity of the entire model. 

For any $n \in \NN$, define the maps $\tau \colon V_{\infty} \rightarrow \RR^r$ and $ \tau_n\colon V_{\infty} \rightarrow \RR^r$ by
$$\tau(X)= \sum_{i=1}^\infty \|X_{i:}\|^p_{\RR^k} \rho(X_{i:})\quad \text{and}\quad  
    \tau_n(X)= \sum_{i=1}^{n-1} \|X_{i:}\|^p_{\RR^k} \rho(X_{i:}),$$ respectively. 
We equip $\RR^r$ with a standard norm $\|\cdot\|_{\RR^r}$.
Fix an arbitrary compact subset $K \subseteq \ell_p(\RR^k)$. By Proposition \ref{prop:lpCompact}, there exists $M>0$ such that
$
\sup_{X\in K}\left(\sum_{j=1}^\infty \|X_{j:}\|_{\RR^k}^p \right)^{1/p} \leq M.
$
This implies
$
\sup_{X\in K}\sup_{i\in\NN} \|X_{i:}\|_{\RR^k}%
\leq M.
$
Since $\rho \colon \RR^k \rightarrow \RR^r$ is continuous, it maps compact sets to compact sets; thus, there exists $M_\rho>0$ such that
$
\sup_{X\in K}\sup_{i\in\NN} 
\|\rho(X_{i:})\|_{\RR^r}
\le M_\rho.
$
Each $\tau_n$ is continuous on $K$ as it is a finite sum of continuous functions (coordinate projections and $\rho$). According to the uniform tail decay property from Proposition \ref{prop:lpCompact}, we have
\begin{align*}
    \quad\lim_{n \rightarrow \infty} \sup_{X\in K} \|\tau_n(X)-\tau(X)\|_{\RR^r} 
    \leq M_\rho \lim_{n \rightarrow \infty} \sup_{X\in K} \sum_{i\geq n}  \|X_{i:}\|^p_{\RR^k} 
    =0.
\end{align*}
Hence, ${\tau}_n$ uniformly converges to $\tau$ on $K$. By invoking the uniform limit theorem \citep{rudin1976principles}, %
we conclude that $\tau$ is continuous on $K$. 
Since $K$ was an arbitrary compact subset, we conclude that $\tau$ is continuous on all of $\Vinf$. 
Furthermore, since $\tau$ is $G_\infty$-invariant, it induces a function on the orbit space $\VmG$, which we still denote by $\tau$. By Lemma \ref{lem:ContinuityEqual}, this induced map $\tau \colon \VmG \to \RR^r$ is continuous,
which implies that 
$\mathrm{DeepSets}_\infty^{\rho,\sigma} = \sigma \circ \tau$ is continuous. %
\end{proof}

\subsection{Proof of Theorem \ref{thm:ZeropaddingUniversality}}

\label{app:ProofZeropaddingUniversality}
To facilitate the proof, we introduce the set of invariant functions $\cFcont$ given by 
$$
  \cFcont
  \coloneqq
  \left\{
    f: K / G_\infty\to\RR,
    \; f(X)
    =\sigma\left(\sum_{i=1}^\infty\|X_{i:}\|_{\RR^k}^p\rho(X_{i:})\right)\,\Big|\;
    r\in\NN,\;
    \rho\in C\left(\RR^k,\RR^r\right),\;
    \sigma\in C\left(\RR^r,\RR\right)
\right\}.
$$
This class serves as an intermediate bridge between the target space $C(K/G_\infty)$ and the neural network class $\cFDS$.

The proof of Theorem \ref{thm:ZeropaddingUniversality} is split into two main components. We first invoke the Stone-Weierstrass Theorem to prove that $\cFcont$ is dense in $C(K/G_\infty)$. Next, we show that any element in $\cFcont$ can be uniformly approximated by $\cFDS$ through the universal approximation property of neural networks. Finally, the result follows by the transitivity of the density property. The details are organized into the three lemmas below, Lemma \ref{lem:ZeropaddingFcontSubalgebra}, \ref{lem:ZeropaddingFcontSeparate}, \ref{lem:ZeropaddingFDSdenseFcont}.

\begin{lemma}
\label{lem:ZeropaddingFcontSubalgebra}
The function class $\cFcont$ is a subalgebra of $C\left(K/G_\infty\right)$.
\end{lemma}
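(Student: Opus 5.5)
We must show three things: that $\cFcont$ sits inside $C(K/G_\infty)$, that it is closed under linear combinations and products, and that it contains a nonzero constant. The key device is that two members of $\cFcont$ can be merged into one by concatenating their inner maps. Write the inner map as $\tau_\rho(X)=\sum_{i}\|X_{i:}\|_{\RR^k}^p\rho(X_{i:})$, so that every member has the form $\sigma\circ\tau_\rho$.

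\textbf{Membership.} Each element of $\cFcont$ is continuous on $K/G_\infty$ by Theorem~\ref{thm:ZeropaddingContinuous}, since that result only assumes $\rho$ and $\sigma$ are continuous. It is well defined on orbit closures because $\tau_\rho$ is $G_\infty$-invariant. Invariance holds because a finitary permutation only reorders finitely many terms of an absolutely convergent series. Absolute convergence comes from boundedness of $\rho$ on the bounded set of rows, as in the proof of Theorem~\ref{thm:ZeropaddingContinuous}.

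\textbf{Closure.} Take $f_1=\sigma_1\circ\tau_{\rho_1}$ with $\rho_1\colon\RR^k\to\RR^{r_1}$, and $f_2=\sigma_2\circ\tau_{\rho_2}$ with $\rho_2\colon\RR^k\to\RR^{r_2}$. Define the concatenated map $\rho=(\rho_1,\rho_2)\colon\RR^k\to\RR^{r_1+r_2}$, which is continuous. Since the sum defining $\tau$ is taken componentwise, $\tau_\rho=(\tau_{\rho_1},\tau_{\rho_2})$. Now set $\sigma(u,v)=a\sigma_1(u)+b\sigma_2(v)$, or $\sigma(u,v)=\sigma_1(u)\sigma_2(v)$ for products. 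Both choices are continuous on $\RR^{r_1+r_2}$. This gives $af_1+bf_2=\sigma\circ\tau_\rho$ and $f_1f_2=\sigma\circ\tau_\rho$, both in $\cFcont$. Scalar multiplication is the special case $b=0$.

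\textbf{Constants.} Take any continuous $\rho$, for example $r=1$ and $\rho\equiv 0$, together with $\sigma\equiv 1$. Then $\sigma\circ\tau_\rho\equiv 1$, a nonzero constant, which is what Theorem~\ref{prop: stone weierstrass} will later require.

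\textbf{Main obstacle.} The only nontrivial point is that the infinite sum defining $\tau_\rho$ is finite and continuous on all of $K$, for arbitrary continuous $\rho$ and not only for neural networks. Theorem~\ref{thm:ZeropaddingContinuous} already covers this, using the bound $\sup_{X\in K}\sup_i\|X_{i:}\|_{\RR^k}\le M$ from Proposition~\ref{prop:lpCompact}. The concatenation step is purely algebraic once linearity of the sum in $\rho$ is noted.
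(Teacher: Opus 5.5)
Your proof is correct and follows essentially the same route as the paper. Like the paper, you concatenate the inner maps into $\rho=(\rho_1,\rho_2)$ and absorb the sum or product into a continuous outer map on $\RR^{r_1+r_2}$. The differences are organizational: you fold scalar multiplication into one linear-combination step, and you check here both membership in $C(K/G_\infty)$ via Theorem~\ref{thm:ZeropaddingContinuous} and the nonzero constant, which the paper handles in the proof of Theorem~\ref{thm:ZeropaddingUniversality} and in Lemma~\ref{lem:ZeropaddingFcontSeparate}. One small point: invariance alone only makes the map constant on orbits, so descending to orbit closures also uses the continuity you cite.
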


\begin{proof}[Proof of Lemma \ref{lem:ZeropaddingFcontSubalgebra}]

To establish that $\cFcont$ is a subalgebra of $C(K/G_\infty)$, we verify its closure under scalar multiplication, addition, and pointwise multiplication. Let $f_1, f_2 \in \cFcont$ with $f_j(X) = \sigma_j\left(\sum_{i=1}^{\infty} \|X_{i:}\|_{\RR^k}^p \rho_j(X_{i:})\right)$ for $j \in \{1, 2\}$, where $\rho_j \in C(\RR^k, \RR^{r_j})$ and $\sigma_j \in C(\RR^{r_j}, \RR)$.

\paragraph{Scalar Multiplication.} For any $\lambda \in \RR$, $(\lambda f_1)(X) = \sigma_\lambda\left(\sum_{i=1}^{\infty} \|X_{i:}\|_{\RR^k}^p \rho_1(X_{i:})\right)$ where $\sigma_\lambda \coloneqq \lambda \sigma_1$. Since $\sigma_\lambda$ is also continuous, $\lambda f_1 \in \cFcont$.

\paragraph{Addition.} Define the concatenated map $\rho_0\colon \RR^k \to \RR^{r_1+r_2}$ as $\rho_0(x) = (\rho_1(x), \rho_2(x))$. Since its components $\rho_1$ and $\rho_2$ are continuous, $\rho_0 \colon \RR^k \to \RR^{r_1+r_2}$ is continuous by the property of product topologies.  Define $\sigma_{\rm add} \colon \RR^{r_1+r_2} \to \RR$ by $\sigma_{\rm add}(u_1, u_2) = \sigma_1(u_1) + \sigma_2(u_2)$ for $u_1 \in \RR^{r_1}, u_2 \in \RR^{r_2}$.
$\sigma_{\rm add}$ is also continuous since the addition operator on $\RR$ is continuous. Consequently,
$$(f_1 + f_2)(X) =\sigma_1\left(\sum_{i=1}^{\infty} \|X_{i:}\|_{\RR^k}^p \rho_1(X_{i:})\right)+\sigma_2\left(\sum_{i=1}^{\infty} \|X_{i:}\|_{\RR^k}^p \rho_2(X_{i:})\right)= \sigma_{\rm add}\left( \sum_{i=1}^{\infty} \|X_{i:}\|_{\RR^k}^p \rho_0(X_{i:}) \right).$$
Thus, $\cFcont$ is closed under addition.

\paragraph{Pointwise Multiplication.} Similarly, define the concatenated map $\rho_0\colon \RR^k \to \RR^{r_1+r_2}$ as $\rho_0(V) = (\rho_1(V), \rho_2(V))$, which is continuous. Define $\sigma_{\rm mul} \colon \RR^{r_1+r_2} \to \RR$ by $\sigma_{\rm mul}(u_1, u_2) = \sigma_1(u_1) \cdot \sigma_2(u_2)$ for $u_1 \in \RR^{r_1}, u_2 \in \RR^{r_2}$, which is also continuous, because of the continuity of multiplication operator on $\RR$. We then have
$$(f_1 \cdot f_2)(X) =\sigma_1\left(\sum_{i=1}^{\infty} \|X_{i:}\|_{\RR^k}^p \rho_1(X_{i:})\right)\sigma_2\left(\sum_{i=1}^{\infty} \|X_{i:}\|_{\RR^k}^p \rho_2(X_{i:})\right)= \sigma_{\rm mul}\left( \sum_{i=1}^{\infty} \|X_{i:}\|_{\RR^k}^p \rho_0(X_{i:}) \right).$$
Thus, $\cFcont$ is closed under pointwise multiplication.
\end{proof}

\begin{lemma}
\label{lem:ZeropaddingFcontSeparate}
$\cFcont$ separates points in $K/G_\infty$, and contains a nonzero constant function.
\end{lemma}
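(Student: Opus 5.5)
The nonzero constant is immediate: take $r=1$, $\rho\equiv 0$ and $\sigma\equiv 1$. Then $f\equiv 1$ lies in $\cFcont$.

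For separation, I would work with the power-sum or moment-type invariants
\[
\tau_\rho(X)=\sum_{i=1}^\infty \|X_{i:}\|_{\RR^k}^p\,\rho(X_{i:}), \qquad \rho\in C(\RR^k,\RR),
\]
with $\sigma$ the identity (or the coordinate projection). These converge on $K$ by the argument in the proof of Theorem~\ref{thm:ZeropaddingContinuous}.

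Suppose $[X]\neq[Y]$ in $K/G_\infty$. I would encode each $X\in\ell_p(\RR^k)$ by the finite positive Borel measure
\[
\nu_X=\sum_{i}\|X_{i:}\|^p_{\RR^k}\,\delta_{X_{i:}}
\]
on $\RR^k\setminus\{0\}$. Its total mass is $\|X\|_p^p$, and $\tau_\rho(X)=\int\rho\,d\nu_X$.

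The key claim is that $\nu_X=\nu_Y$ implies $\overline{\mathrm d}(X,Y)=0$. The atoms of $\nu_X$ determine the multiset of nonzero rows: each atom $v\neq 0$ has mass $\|v\|^p$ times its multiplicity, and that multiplicity is finite because $\|X\|_p<\infty$. So $X$ and $Y$ have the same nonzero rows with the same multiplicities, differing only in their arrangement and in the positions of zero rows.

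From this I would build a finite permutation $g$ with $\|g\cdot X-Y\|_p<\varepsilon$. Choose $N$ with tails $\sum_{i>N}\|X_{i:}\|^p$ and $\sum_{i>N}\|Y_{i:}\|^p$ both small. Then find a finite permutation matching the first $N$ nonzero rows of $X$ to the identical rows of $Y$; the rows left unmatched are all in the tails, so the resulting error is $O(\varepsilon)$. This gives $\overline{\mathrm d}(X,Y)=0$, and hence the same orbit closure.

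So if $[X]\neq[Y]$, then $\nu_X\neq\nu_Y$. Two distinct finite Borel measures on $\RR^k$ must differ on some bounded continuous function, say $\int\rho\,d\nu_X\neq\int\rho\,d\nu_Y$. Here $\rho(0)$ is irrelevant because the weight $\|0\|^p$ vanishes. The function $f=\tau_\rho\in\cFcont$ then separates $[X]$ and $[Y]$.

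The main obstacle is the combinatorial step showing that equal weighted measures force $\overline{\mathrm d}=0$. The delicate points are that the orbit closure contains infinite rearrangements, so an approximation argument is needed, and that zero rows carry no mass in $\nu_X$. I would handle both using the tail decay from Proposition~\ref{prop:lpCompact}, or simply the fact that $X\in\ell_p$.

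It is also worth checking that $\tau_\rho$ is well defined on orbit closures. It is $G_\infty$-invariant and continuous in $\|\cdot\|_p$ on bounded sets, so it extends to orbit closures by Lemma~\ref{lem:ContinuityEqual}.
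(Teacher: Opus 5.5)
Your proposal is correct and is essentially the paper's proof. The step $\nu_X=\nu_Y\Rightarrow\overline{\mathrm d}(X,Y)=0$ is the paper's Claim (equal multisets of large rows, a finite permutation matching them, and a tail bound). Where the paper isolates an atom $v$ of differing multiplicity with an explicit Urysohn bump supported in $B(v,\eta)$, you instead invoke the general fact that distinct finite Borel measures differ on some bounded continuous function, which is the same mechanism the paper uses in the measure case.

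One small fix is needed in the matching step. If you match only the nonzero rows of $X$ with index at most $N$, the unmatched rows of $Y$ need not have index beyond $N$. Their total $p$-mass nevertheless equals that of the unmatched rows of $X$, because $\nu_X(\RR^k)=\nu_Y(\RR^k)$; alternatively, match every copy of each value occurring among the first $N$ rows of either sequence. Either way, the $O(\varepsilon)$ bound still holds.
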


To establish point separation on the quotient space $K/G_\infty$, we first characterize the conditions under which the metric $\overline{\mathrm{d}}(X, Y)$ vanishes. Specifically, we identify the properties that distinguish points in distinct orbits. We claim that points in different orbits must differ in their first finite coordinate indices, see Claim \ref{claim:SeparateMultiset}. Based on this, we can construct functions in $\cFcont$ to separate them.

\begin{claim}
\label{claim:SeparateMultiset}

For any compact $K \subseteq \ell_p(\RR^k)$ and $X, Y \in K$, if the multisets $\{X_{i:}: \|X_{i:}\|_{\RR^k}>\varepsilon\}$ and $\{Y_{i:}: \|Y_{i:}\|_{\RR^k}>\varepsilon\}$ are equal for all $\varepsilon > 0$, then $\overline{\mathrm{d}}(X, Y) = 0$.
\end{claim}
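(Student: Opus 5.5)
The plan is to show directly that for every $\eta>0$ there is a finite permutation $g\in G_\infty$ with $\|g\cdot X - Y\|_p < \eta$. Then $\overline{\mathrm{d}}(X,Y)=\inf_g\|g\cdot X-Y\|_p=0$. Compactness of $K$ is not essential; we only use $X,Y\in\ell_p(\RR^k)$. Their tails are small and only finitely many rows exceed any fixed threshold.

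\textbf{Step 1 (choose a threshold).} Fix $\eta>0$. Since $\sum_i\|X_{i:}\|^p<\infty$ and $\sum_i\|Y_{i:}\|^p<\infty$, the quantity
$$s(\varepsilon)\coloneqq\sum_{i:\|X_{i:}\|\le\varepsilon}\|X_{i:}\|^p+\sum_{i:\|Y_{i:}\|\le\varepsilon}\|Y_{i:}\|^p$$
tends to $0$ as $\varepsilon\downarrow 0$, by dominated convergence applied to the counting measure. Choose $\varepsilon>0$ with $2^{p-1}s(\varepsilon)<\eta^p$.

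Define the index sets $I=\{i:\|X_{i:}\|>\varepsilon\}$ and $J=\{j:\|Y_{j:}\|>\varepsilon\}$. Both are finite, because a row of norm above $\varepsilon$ contributes more than $\varepsilon^p$ to a convergent sum.

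\textbf{Step 2 (match the large rows).} By hypothesis the multisets $\{X_{i:}\}_{i\in I}$ and $\{Y_{j:}\}_{j\in J}$ coincide. Hence $|I|=|J|$, and there is a bijection $\beta\colon I\to J$ with $Y_{\beta(i):}=X_{i:}$. Take $N$ larger than every index in $I\cup J$. The sets $I^c\cap[N]$ and $J^c\cap[N]$ have the same cardinality, so $\beta$ extends to a permutation $g$ of $[N]$, which fixes all indices beyond $N$. Then $g\in\mathrm{S}_N\subseteq G_\infty$, and $(g\cdot X)_{\beta(i):}=Y_{\beta(i):}$ for every $i\in I$.

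\textbf{Step 3 (estimate the remaining rows).} On the rows indexed by $J$ the difference $g\cdot X-Y$ vanishes. Every other row $j$ has $\|Y_{j:}\|\le\varepsilon$, and it receives a row $X_{i:}$ with $i\notin I$, so $\|X_{i:}\|\le\varepsilon$. Since $g$ is a bijection, each small row of $X$ and each small row of $Y$ appears exactly once among these remaining rows. Using $\|a-b\|^p\le 2^{p-1}(\|a\|^p+\|b\|^p)$ we get
$$\|g\cdot X-Y\|_p^p\le 2^{p-1}s(\varepsilon)<\eta^p.$$
Since $\eta>0$ was arbitrary, $\overline{\mathrm{d}}(X,Y)=0$.

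The only delicate point is the bookkeeping in Step 2: the permutation must be finitary, which holds because $I$ and $J$ are finite and equal in size. The rest is a routine tail estimate.
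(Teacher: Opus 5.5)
Your proposal is correct and follows essentially the same argument as the paper: you match the finitely many rows of norm above $\varepsilon$ by a finitary permutation, bound the remaining rows using $\|a-b\|^p\le 2^{p-1}(\|a\|^p+\|b\|^p)$, and let $\varepsilon\to 0$. Beyond the paper's proof, you spell out how the matching extends to a permutation of $[N]$ and why the tail sums $s(\varepsilon)$ vanish, and you correctly note that compactness of $K$ is not needed beyond $X,Y\in\ell_p(\RR^k)$.
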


\begin{proof}[Proof of Claim \ref{claim:SeparateMultiset}]
Define the index sets by $I_x(\varepsilon)\coloneqq\left\{i \in \NN:\left\|X_{i:}\right\|_{\RR^k}>\varepsilon\right\}$, and $ I_y(\varepsilon)\coloneqq\left\{i \in \NN:\left\|Y_{i:}\right\|_{\RR^k}>\varepsilon\right\}$, respectively. Since $X,Y\in K$, by Proposition \ref{prop:lpCompact}, $\sup _{x \in K}\left(\sum_{i=1}^{\infty}\left\|X_{i:}\right\|_{\RR^k}^p\right)<\infty$. Therefore, $I_x(\varepsilon)$ and $I_y(\varepsilon)$ are finite and have equal cardinality.
Thus, there exists $\sigma_{\varepsilon} \in G_{\infty}$, such that $X_{\sigma_{\varepsilon}^{-1}(i):}=Y_{i:}$, for all $i \in I_y(\varepsilon)$. We then have
$$\overline{\mathrm{d}}(X, Y)^p \leq \|\sigma_\varepsilon \cdot X - Y\|_p^p = \sum_{i \notin I_y(\varepsilon)} \|X_{\sigma_\varepsilon^{-1}(i):} - Y_{i:}\|_{\RR^k}^p.$$
For $i \notin I_y(\varepsilon)$, we have $\|Y_{i:}\|_{\RR^k} \leq \varepsilon$. Furthermore, since $\sigma_\varepsilon$ is a bijection, the index set $\{\sigma_\varepsilon^{-1}(i) : i \notin I_y(\varepsilon)\}$ is exactly $\{j \in \NN: j \notin I_x(\varepsilon)\}$, where $\|X_{j:}\|_{\RR^k} \leq \varepsilon$. 
Since $p \geq 1$, apply the inequality $\|a-b\|^p \leq 2^{p-1}(\|a\|^p + \|b\|^p)$ and obtain
$$\overline{\mathrm{d}}(X, Y)^p \leq 2^{p-1} \left( \sum_{i \notin I_x(\varepsilon)} \|X_{i:}\|_{\RR^k}^p + \sum_{i \notin I_y(\varepsilon)} \|Y_{i:}\|_{\RR^k}^p \right).$$
Letting $\varepsilon\to0$ shows that $\overline{\mathrm{d}}(X, Y) = 0$, meaning $X$ and $Y$ represent the same point in $K/G_\infty$.
\end{proof}

We are ready to prove separation of points.
\begin{proof}[Proof of Lemma \ref{lem:ZeropaddingFcontSeparate}]
    
    First, $\cFcont$ contains a nonzero constant function by taking $\sigma \equiv 1$, which yields $f(X) = 1$ for all $X$.

    To establish point separation, consider two distinct points in $K/G_\infty$, represented by $X, Y \in K$ such that $\overline{\mathrm{d}}(X, Y) > 0$. By the contrapositive of Claim \ref{claim:SeparateMultiset}, there exists $\varepsilon > 0$ such that the finite multisets $M_X \coloneqq \{X_{i:} : \|X_{i:}\|_{\RR^k} > \varepsilon\}$ and $M_Y \coloneqq \{Y_{i:} : \|Y_{i:}\|_{\RR^k} > \varepsilon\}$ are distinct.
Thus, there exists some $v \in \RR^k$ with $\|v\|_{\RR^k} > \varepsilon$ such that their multiplicities differ: $m_X(v) \neq m_Y(v)$, where $m_X(v) \coloneqq \#\{i\in \NN : X_{i:} = v, X_{i:} \in M_X\}$ and $m_Y(v) \coloneqq \#\{i\in \NN : Y_{i:} = v,Y_{i:} \in M_Y\}$.
Let $Q \coloneqq (M_X\cup M_Y)\setminus\{v\}$, which is a finite set as both $M_X $ and $ M_Y$ are finite. We define the separation radius
$$\eta\coloneqq \min\left(\min_{u\in Q} \|u-v\|_{\RR^k}, \|v\|_{\RR^k}-\varepsilon \right).$$
Consider the ball $B(v, \eta)$ around $v$. For any $i \in \NN$, we distinguish the following two cases.

\paragraph{Case 1.} If $\|X_{i:}\|_{\RR^k} \leq \varepsilon$, then by the reverse triangle inequality, $\|X_{i:} - v\|_{\RR^k}  \geq \|v\|_{\RR^k}  - \|X_{i:}\|_{\RR^k}  \geq \|v\|_{\RR^k}  - \varepsilon \geq \eta$.

\paragraph{Case 2.} If $X_{i:} \in M_X$ and $X_{i:} \neq v$, then by the definition of $Q$, $\|X_{i:} - v\|_{\RR^k} \geq \min_{u \in Q} \|u - v\|_{\RR^k} \geq \eta$.

Combined, these cases imply that $X_{i:} \in B(v, \eta)$ if and only if $X_{i:} = v$. The same logic applies to the sequence $Y$.
By Urysohn's Lemma \citep{willard2012general}, %
 there exists a continuous function $\varphi\colon \RR^k \rightarrow [0,1]$, such that $\varphi\left(\overline{B(v,\frac{\eta}{2})}\right)=1$ and $\varphi\left(\overline{\RR^k \setminus B(v,\eta)}\right)=0$. Define $f \in \cFcont$ by setting $r=1$, $\sigma(u)=u$, and $\rho =\varphi$, then,
$$
f(X)=\sigma\left(\sum_{i=1}^{\infty} \|X_{i:}\|_{\RR^k}^p\rho\left(X_{i:}\right)\right)=\|v\|_{\RR^k}^p m_X(v) \neq \|v\|_{\RR^k}^p m_Y(v) =\sigma\left(\sum_{i=1}^{\infty} \|Y_{i:}\|_{\RR^k}^p\rho\left(Y_{i:}\right)\right)=f(Y).
$$
Therefore, we conclude that $\cFcont$ separates points in $K/G_\infty$.
\end{proof}

\begin{lemma}
\label{lem:ZeropaddingFDSdenseFcont}
$\cFDS$ is dense in $\cFcont$ on $K$.
\end{lemma}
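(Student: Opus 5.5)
Fix $f\in\cFcont$, $f(X)=\sigma\bigl(\tau_\rho(X)\bigr)$ with $\tau_\rho(X)\coloneqq\sum_{i}\|X_{i:}\|_{\RR^k}^p\rho(X_{i:})$, $\rho\in C(\RR^k,\RR^r)$ and $\sigma\in C(\RR^r,\RR)$, and fix $\varepsilon>0$. The plan is to approximate $\rho$ and $\sigma$ separately by networks in $\NNphi_{r,k}$ and $\NNphi_{1,r}$, using the classical universal approximation theorem for continuous non-polynomial $\phi$ \cite{leshno1993multilayer}, and then control the error of the composition uniformly on $K$. Proposition~\ref{prop:lpCompact} supplies the needed bounds: $M\coloneqq\sup_{X\in K}\|X\|_p<\infty$, hence $\sup_{X\in K}\sup_i\|X_{i:}\|_{\RR^k}\le M$, and $\sup_{X\in K}\sum_i\|X_{i:}\|_{\RR^k}^p\le M^p$.

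\textbf{Approximating $\rho$.} All rows of elements of $K$ lie in the compact ball $B_M\coloneqq\overline{B(0,M)}\subseteq\RR^k$. We pick $\hat\rho\in\NNphi_{r,k}$ with $\sup_{x\in B_M}\|\rho(x)-\hat\rho(x)\|_{\RR^r}\le\delta$. Then for every $X\in K$,
\[
\|\tau_\rho(X)-\tau_{\hat\rho}(X)\|_{\RR^r}\le \delta\sum_i\|X_{i:}\|^p_{\RR^k}\le\delta M^p .
\]
The weight $\|X_{i:}\|^p$ is what makes this summable. There is one subtlety: the architecture in Section~\ref{sec: set zero-padding} requires $\rho(\mathbf 0_k)=\mathbf 0_r$ for compatibility. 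The weight already kills the zero rows, so $\tau_{\hat\rho}$ is well defined regardless. If the constraint must hold literally, we replace $\hat\rho$ by $\hat\rho-\hat\rho(\mathbf 0)$, absorbing the constant into the output bias of $L_2$. This adds a term $\hat\rho(\mathbf 0)\sum_i\|X_{i:}\|^p$. We handle it by first reducing to the case $\rho(\mathbf 0)=\mathbf 0$, noting that the resulting term is then $O(\delta)M^p$.

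\textbf{Approximating $\sigma$.} The set $\tau_\rho(K)$ is bounded by $M^pM_\rho$, where $M_\rho\coloneqq\sup_{B_M}\|\rho\|_{\RR^r}$. So for $\delta\le1$, both $\tau_\rho(K)$ and $\tau_{\hat\rho}(K)$ lie in the compact set $C\coloneqq\overline{B(0,M^p(M_\rho+1))}\subseteq\RR^r$. Since $\sigma$ is uniformly continuous on $C$, we first choose $\delta$ so that $|\sigma(u)-\sigma(u')|\le\varepsilon/2$ whenever $u,u'\in C$ and $\|u-u'\|\le\delta M^p$. Next we choose $\hat\sigma\in\NNphi_{1,r}$ with $\sup_C|\sigma-\hat\sigma|\le\varepsilon/2$. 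The triangle inequality then gives
\[
|\sigma(\tau_\rho(X))-\hat\sigma(\tau_{\hat\rho}(X))|\le\varepsilon\quad\text{for all }X\in K.
\]
This bound is invariant, so it holds on $K/G_\infty$.

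The order of choices is the main point to get right: $\delta$ is fixed from the modulus of continuity of $\sigma$ on $C$, then $\hat\rho$ is chosen, and $\hat\sigma$ is chosen independently. I expect the only genuine obstacle to be the zero-preservation constraint on $\hat\rho$, since everything else is routine compactness. Combining this lemma with Lemmas~\ref{lem:ZeropaddingFcontSubalgebra} and~\ref{lem:ZeropaddingFcontSeparate} and Theorem~\ref{prop: stone weierstrass} then yields Theorem~\ref{thm:ZeropaddingUniversality}.
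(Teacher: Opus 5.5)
Your proof is correct and is essentially the paper's argument. Both use universal approximation on compacta for $\rho$ (on $\overline{B(0,M)}$) and for $\sigma$, the weight $\|X_{i:}\|_{\RR^k}^p$ to turn a uniform $\delta$-error in $\rho$ into a uniform $\delta M^p$-error in the pooled sum, and a two-term triangle inequality. The only difference is that you use the modulus of continuity of $\sigma$ on an explicit compact $C$ containing both $\tau_\rho(K)$ and $\tau_{\hat\rho}(K)$, while the paper uses that of $\hat\sigma$ and leaves the needed compact domain implicit.

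Your zero-preservation aside is not needed: the paper's proof never enforces $\rho(\mathbf 0)=\mathbf 0$, since the weight already makes zero rows irrelevant, as you note. If you did want it, ``first reducing to $\rho(\mathbf 0)=\mathbf 0$'' is not an exact reduction, because $\tau_{\rho-\rho(\mathbf 0)}=\tau_\rho-\rho(\mathbf 0)\|X\|_p^p$. You would instead subtract $\rho(\mathbf 0)\psi_\eta$ for a continuous bump $\psi_\eta$ with $\psi_\eta(\mathbf 0)=1$, supported in $B(0,\eta)$. Then use the bound $\sup_{X\in K}\sum_{i:\|X_{i:}\|_{\RR^k}<\eta}\|X_{i:}\|_{\RR^k}^p\le (N-1)\eta^p+\sup_{X\in K}\sum_{i\ge N}\|X_{i:}\|_{\RR^k}^p$. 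This is made small by uniform tail decay, choosing $N$ first and then $\eta$.
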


\begin{proof}[Proof of Lemma \ref{lem:ZeropaddingFDSdenseFcont}]
We invoke the Universal Approximation Theorem (UAT) \citep{leshno1993multilayer}: for any continuous non-polynomial activation function $\phi$, the set $\NNphi_{r,k}$ is dense in $C(\RR^k, \RR^r)$ under the supremum norm on compact sets. More formally, fix a compact set $Q\subseteq \RR^k$ and a norm $\|\cdot\|_{\RR^r}$ on $\RR^r$. For any continuous function $h \in C(\RR^k, \RR^r)$, and any approximation precision $\varepsilon >0$, there exists $\hat h \in \NNphi_{r,k}$ such that $\sup _{v\in Q }\|h(v)-\hat h(v)\|_{\RR^r} \leq \varepsilon$.

Given any function  $f\in\cFcont$, it admits the form 
$f(X)=\sigma \left(\sum_{i=1}^\infty \|X_{i:}\|_{\RR^k}^p\rho (X_{i:}) \right),$
for some $r\in \NN$, $\sigma \in C(\RR^r,\RR),\, \rho \in C(\RR^k,\RR^r).$  
Since $K \in \ell_p(\RR^k)$ is compact, by Proposition \ref{prop:lpCompact}, there exists $M>0$ such that
$
\sup_{X\in K}\left(\sum_{j=1}^\infty \|X_{j:}\|_{\RR^k}^p \right)^{1/p} \leq M.
$
This implies
$
\sup_{X\in K}\sup_{i\in\NN} \|X_{i:}\|_{\RR^k}%
\leq M.
$
Since $\rho \in C(\RR^k,\RR^r)$, it maps compact sets to compact sets, i.e., there exists $M_\rho>0$ such that
$
\sup_{X\in K}\sup_{i\in\NN} 
\|\rho(X_{i:})\|_{\RR^r}
\le M_\rho.
$
Thus for any $X \in K$, $\left\|\sum_{i=1}^\infty \|X_{i:}\|_{\RR^k}^p\rho (X_{i:})\right\|_{\RR^r}\leq M^pM_\rho.$ This means that the input domains of functions $\sigma$ and $\rho$ are both compact, which we denote by $Q_\sigma$ and $Q_\rho$, respectively.

\paragraph{Approximating $\sigma$. } For any $\varepsilon > 0$, UAT ensures there exists $ \hat{\sigma} \in \NNphi_{1,r}$ such that $\sup_{u\in Q_\sigma} |\sigma(u) - \hat{\sigma}(u)| \leq \frac{\varepsilon}{2}.$
Moreover, since $\hat{\sigma}$ is uniformly continuous on compact sets, there exists $ \delta > 0$ such that 
$\|u_1-u_2\|_{\RR^r} \leq \delta \Rightarrow |\hat{\sigma}(u_1) - \hat{\sigma}(u_2)| \leq \frac{\varepsilon}{2}.$

\paragraph{Approximating $\rho$. }Applying UAT to $\rho$, there exists $ \hat{\rho} \in \NNphi_{r,k}$ such that $\sup_{v\in Q_\rho} \|\rho(v) - \hat{\rho}(v)\|_{\RR^r} \leq \frac{\delta}{M^p}.$
Consequently, $$\sup_{X\in K} \left\| \sum_{i=1}^\infty \|X_{i:}\|_{\RR^k}^p (\rho(X_{i:})-\hat{\rho}(X_{i:}))\right\|_{\RR^r} \leq \sup_{X\in K}  \sum_{i=1}^\infty \|X_{i:}\|_{\RR^k}^p \sup_{X\in K} \left\| \rho(X_{i:})-\hat{\rho}(X_{i:})\right\|_{\RR^r} \leq M^p\frac{\delta}{M^p}=\delta.$$
We define the overall approximating function  $\hat f \in \cFDS$  by $\hat f(X)=\hat{\sigma}\left( \sum_{i=1}^\infty \|X_{i:}\|_{\RR^k}^p \hat{\rho}(X_{i:})  \right)$, then
\begin{align*}
    \sup_{X\in K} \left| f(X)-\hat f(X)\right|
    & \quad \leq \sup_{X\in K} \left| \sigma\left( \sum_{i=1}^\infty \|X_{i:}\|_{\RR^k}^p \rho(X_{i:})  \right)-\hat{\sigma}\left( \sum_{i=1}^\infty \|X_{i:}\|_{\RR^k}^p {\rho}(X_{i:})  \right)\right|  \\
      & \qquad \qquad \, \,+ \left| \hat\sigma\left( \sum_{i=1}^\infty \|X_{i:}\|_{\RR^k}^p \rho(X_{i:})  \right)- \hat{\sigma}\left( \sum_{i=1}^\infty \|X_{i:}\|_{\RR^k}^p {\hat\rho}(X_{i:})  \right)\right| \\
    &\quad \leq \frac{\varepsilon}{2}+ \frac{\varepsilon}{2} =\varepsilon,
\end{align*}
which shows that $\cF_\text{DS}$ uniformly approximates every function in $\cF_\text{cont}$ . By the definition of both of the function classes, $\cFDS \subseteq \cFcont$,
therefore,  $\cFDS$ is dense in $\cF_\text{cont}$ on $K$ with respect to the supremum norm.
\end{proof}

Finally, combining the above lemmas, we prove Theorem~\ref{thm:ZeropaddingUniversality}.
\begin{proof}[Proof of Theorem \ref{thm:ZeropaddingUniversality}]
Invoking the continuity property of our model, Theorem \ref{thm:ZeropaddingContinuous}, $\cFcont \subseteq C(K/G_\infty)$.
By Proposition \ref{prop: stone weierstrass}, the Stone–Weierstrass Theorem, and Lemmas \ref{lem:ZeropaddingFcontSubalgebra} and \ref{lem:ZeropaddingFcontSeparate}, we conclude that $\cFcont$ is dense in $C\left(K/G_\infty\right)$ with respect to the supremum norm. Combining this with Lemma \ref{lem:ZeropaddingFDSdenseFcont} and the transitivity of density, it follows that $\cFDS$ is dense in $C\left(K/G_\infty\right)$, completing the proof that the model is universal.
\end{proof}

\subsection{Proof of Theorem \ref{thm:duplicationContinuity}}

\begin{proof}[Proof of Theorem \ref{thm:duplicationContinuity}]
    This theorem follows directly from the fact that $W_p$-convergence is equivalent to weak convergence in $\cP_p$.
Consider any sequence of measures $(\mu_i)_{i\in \NN} \subseteq \cP_p(\RR^k)$ converging to $\mu$ in the Wasserstein $p$-metric, i.e., $W_p(\mu_i, \mu) \to 0$. According to the characterization of convergence in $\cP_p(\RR^k)$ \citep{villani2008optimal}, this is equivalent to $\int \varphi \, d\mu_i \to \int \varphi \, d\mu$ for all continuous functions $\varphi\colon \RR^k \rightarrow \RR$ satisfying the growth condition $|\varphi(x)| \leq C(1 + \|x\|^p_{\RR^k})$ for some $C > 0$.
    
    In our framework, $\rho \in \cF_p(\RR^k, \RR^r)$ is a continuous mapping whose components $\rho_j$ satisfy $|\rho_j(x)| \leq M_j(1 + \|x\|^p_{\RR^k})$. It follows that $\int \rho \, d\mu_i \to \int \rho \, d\mu$ as $i \to \infty$, establishing the continuity of the map $\mu \mapsto \int \rho \, d\mu$. Since $\sigma: \RR^r \to \RR$ is continuous, the composition $\NDS^{\rho,\sigma}(\mu) = \sigma(\int \rho \, d\mu)$ is continuous with respect to the $W_p$ topology.
\end{proof}

\label{app:proof duplicationContinuity}

\subsection{Proof of Theorem \ref{thm:duplicationUniversality}}

\label{app: proof of duplication embedding set universality}

We first define an intermediate class of functions, $\cFNcont$, to bridge $C(Q)$ and $\cFNDS$
$$\cFNcont\coloneqq\Bigl\{  f\colon Q\to\RR      ,  \; f(\mu)=\sigma\;\left(\int\rho \; d\mu\right) \; \Big|   \;r\in\NN,\;\rho \in \cF_p\left(\RR^k, \RR^r\right),\;   \sigma \in C\left(\RR^r, \RR\right)          \Bigr\}.$$
The proof strategy of Theorem \ref{thm:duplicationUniversality} is similar to Theorem \ref{thm:ZeropaddingUniversality} by showing that: $(i)$ $\cFNcont$ is dense in $C(Q)$ via the Stone-Weierstrass Theorem, and $(ii)$ $\cFNDS$ is dense in $\cFNcont$ via the universal approximation property.  We decompose the proof into three auxiliary Lemmas: \ref{lem:duplicationSubalgebra}, \ref{lem:duplicationSeparate}, and \ref{lem:duplicationFNDSdenseCont}.
 
\begin{lemma}
\label{lem:duplicationSubalgebra}
The function class $\cFNcont$ is a subalgebra of $C(Q).$ 
\end{lemma}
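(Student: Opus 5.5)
The plan is to mirror the proof of Lemma~\ref{lem:ZeropaddingFcontSubalgebra}. There are two things to show: every element of $\cFNcont$ lies in $C(Q)$, and $\cFNcont$ is closed under scalar multiplication, addition, and pointwise multiplication.

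For membership in $C(Q)$, I would invoke Theorem~\ref{thm:duplicationContinuity}. For any $r\in\NN$, $\rho\in\cF_p(\RR^k,\RR^r)$ and $\sigma\in C(\RR^r,\RR)$, that theorem says $\mu\mapsto\sigma(\int\rho\,d\mu)$ is continuous on $\cP_p(\RR^k)$ with respect to $W_p$. Its restriction to $Q$ is therefore continuous.

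For the algebra operations, take $f_j(\mu)=\sigma_j(\int\rho_j\,d\mu)$ with $\rho_j\in\cF_p(\RR^k,\RR^{r_j})$, $j=1,2$.
\begin{itemize}
\item \emph{Scalar multiplication.} Replace $\sigma_1$ by $\lambda\sigma_1$, which is still continuous.
\item \emph{Addition.} Form the concatenation $\rho_0=(\rho_1,\rho_2)\colon\RR^k\to\RR^{r_1+r_2}$. Since the integral of a vector-valued function is taken componentwise, $\int\rho_0\,d\mu=(\int\rho_1\,d\mu,\int\rho_2\,d\mu)$. Then $f_1+f_2=\sigma_{\rm add}(\int\rho_0\,d\mu)$ with $\sigma_{\rm add}(u_1,u_2)=\sigma_1(u_1)+\sigma_2(u_2)$.
\item \emph{Pointwise multiplication.} Use the same $\rho_0$ and take $\sigma_{\rm mul}(u_1,u_2)=\sigma_1(u_1)\sigma_2(u_2)$.
\end{itemize}
Both $\sigma_{\rm add}$ and $\sigma_{\rm mul}$ are continuous, as compositions of continuous maps with the coordinate projections.

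The only point that differs from the zero-padding case is checking that $\rho_0$ still belongs to $\cF_p(\RR^k,\RR^{r_1+r_2})$. This holds because the growth condition in $\cF_p$ is imposed componentwise. Each component of $\rho_0$ is a component of $\rho_1$ or of $\rho_2$, so it satisfies $|\rho_{0,j}(x)|\le M_j(1+\|x\|^p_{\RR^k})$ with the constant $M_j$ inherited from $\rho_1$ or $\rho_2$. Continuity of $\rho_0$ follows from continuity of its components.

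I expect no real obstacle here; the argument is bookkeeping. One subtle point is worth noting: all integrals $\int\rho\,d\mu$ are finite for $\mu\in\cP_p(\RR^k)$, because of the $p$-th order growth bound. Hence every expression above is well defined.
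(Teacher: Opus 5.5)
Your proposal is correct and follows the paper's proof essentially step for step. You get membership in $C(Q)$ from Theorem~\ref{thm:duplicationContinuity}, then show closure under the algebra operations via $\lambda\sigma_1$, the concatenation $\rho_0=(\rho_1,\rho_2)$, and $\sigma_{\rm add},\sigma_{\rm mul}$, including the componentwise check that $\rho_0\in\cF_p(\RR^k,\RR^{r_1+r_2})$.
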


\begin{proof}[Proof of Lemma \ref{lem:duplicationSubalgebra}]

  Invoking Theorem \ref{thm:duplicationContinuity}, $\cFNcont \subseteq C(Q)$. To show $\cFNcont$ is a subalgebra, it suffices to verify that it is closed under scalar multiplication, addition and pointwise multiplication. Let $f_1, f_2 \in \cFNcont$ with $f_j(\mu) = \sigma_j\left(\int \rho_jd\mu\right)$ for $j \in \{1, 2\}$, where $\rho_j \in \cF_p(\RR^k, \RR^{r_j})$ and $\sigma_j \in C(\RR^{r_j}, \RR)$.

  \paragraph{Scalar Multiplication.} For any $\lambda \in \RR$, $(\lambda f_1)(\mu) = \sigma_\lambda\left(\int \rho_1 d\mu\right)$ where $\sigma_\lambda \coloneqq \lambda \sigma_1$. Since $\sigma_\lambda$ is also continuous, $\lambda f_1 \in \cFNcont$.

\paragraph{Addition.} Define the concatenated map $\rho_0\colon \RR^k \to \RR^{r_1+r_2}$ as $\rho_0(x) = (\rho_1(x), \rho_2(x))$. Since its components $\rho_1$ and $\rho_2$ are continuous, $\rho_0 \colon \RR^k \to \RR^{r_1+r_2}$ is continuous by the property of product topologies. Furthermore, since $\rho_j \in \cF_p(\RR^k, \RR^{r_j})$ for $j \in \{1, 2\}$, each individual component function of $\rho_0 = (\rho_{1,1}, \dots, \rho_{1,r_1}, \rho_{2,1}, \dots, \rho_{2,r_2})^\top$ satisfies the $p$-th order growth condition. Consequently, we have $\rho_0 \in \cF_p(\RR^k, \RR^{r_1+r_2})$.
Define $\sigma_{\rm add} \colon \RR^{r_1+r_2} \to \RR$ by $\sigma_{\rm add}(u_1, u_2) = \sigma_1(u_1) + \sigma_2(u_2)$ for $u_1 \in \RR^{r_1}, u_2 \in \RR^{r_2}$. Observe that
$\sigma_{\rm add}$ is also continuous since the addition operator on $\RR$ is continuous. Consequently,
$$
\left(f_1+f_2\right)(\mu)=\sigma_{1}\left(\int \rho_1d\mu\right)+ \sigma_2\left(\int \rho_2d\mu\right)=\sigma_{\rm add}\left(\left[\int \rho_1d\mu, \; \int \rho_2d\mu\right]^\top\right)=\sigma_{\rm add}\left(\int \rho_0d\mu\right).
$$
Thus, $\cFNcont$ is closed under addition.

\paragraph{Pointwise Multiplication.} Similarly, define the concatenated map $\rho_0\colon \RR^k \to \RR^{r_1+r_2}$ as $\rho_0(V) = (\rho_1(V), \rho_2(V))$, which is continuous. Define $\sigma_{\rm mul} \colon \RR^{r_1+r_2} \to \RR$ by $\sigma_{\rm mul}(u_1, u_2) = \sigma_1(u_1) \times \sigma_2(u_2)$ for $u_1 \in \RR^{r_1}, u_2 \in \RR^{r_2}$, which is also continuous, because of the continuity of multiplication operator on $\RR$. We then have
$$
\left(f_1\cdot f_2\right)(\mu)=\sigma_{1}\left(\int \rho_1d\mu\right)\sigma_2\left(\int \rho_2d\mu\right)=\sigma_{\rm mul}\left(\left[\int \rho_1d\mu, \; \int \rho_2d\mu\right]^\top\right)=\sigma_{\rm mul}\left(\int \rho_0d\mu\right).
$$
Thus, $\cFNcont$ is closed under pointwise multiplication.
\end{proof}

\begin{lemma}
\label{lem:duplicationSeparate}
The function class $\cFNcont$ separates points in $Q$, and contains a nonzero constant function.
\end{lemma}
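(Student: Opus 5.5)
The constant function is immediate: take $r=1$, any $\rho\in\cF_p(\RR^k,\RR)$ (for instance $\rho\equiv 0$), and $\sigma\equiv 1$, so that $f(\mu)=1$ for all $\mu\in Q$. The substance of the lemma is point separation. The plan is to show that if $\mu\neq\nu$ in $Q\subseteq\cP_p(\RR^k)$, then there is a single bounded continuous test function $\rho\colon\RR^k\to\RR$ with $\int\rho\,d\mu\neq\int\rho\,d\nu$. We then take $r=1$, this $\rho$, and $\sigma(u)=u$. A bounded continuous $\rho$ trivially satisfies $|\rho(x)|\le \|\rho\|_\infty(1+\|x\|^p_{\RR^k})$, so $\rho\in\cF_p(\RR^k,\RR)$ and the resulting $f$ lies in $\cFNcont$.

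To produce such a $\rho$, we use the fact that finite Borel measures on a metric space are determined by their integrals against bounded continuous functions. Suppose instead that $\int\rho\,d\mu=\int\rho\,d\nu$ for every $\rho\in C_b(\RR^k)$. Fix a closed set $F\subseteq\RR^k$ and consider the bounded Lipschitz functions
$$\rho_n(x)=\max\{0,\,1-n\,\mathrm{dist}(x,F)\}.$$
These decrease pointwise to the indicator $\mathbbm{1}_F$. By dominated convergence, $\mu(F)=\nu(F)$ for every closed $F$. Closed sets form a $\pi$-system generating the Borel $\sigma$-algebra, so Dynkin's $\pi$–$\lambda$ theorem gives $\mu=\nu$. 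Taking the contrapositive, distinct $\mu,\nu$ admit a separating $\rho\in C_b(\RR^k)$, and in fact one of the $\rho_n$ above already works.

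It remains to check that distinct points of $Q$ genuinely correspond to distinct measures. This holds because, as recalled in Appendix~\ref{app: details duplication embedding consistent sequence}, the orbit space is identified with $\cP_p(\RR^k)$ and $\bard$ coincides with $W_p$, which is a metric. Continuity of the separating functions is already supplied by Theorem~\ref{thm:duplicationContinuity}. The only step needing care is the measure-uniqueness argument via closed sets and the $\pi$–$\lambda$ theorem, and that is standard. No compactness of $Q$ is used here; compactness enters only when Stone–Weierstrass is applied later.
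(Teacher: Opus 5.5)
Your proof is correct. Its outer structure matches the paper's: exhibit a bounded continuous $\rho$ with $\int\rho\,d\mu\neq\int\rho\,d\nu$, observe that boundedness gives the $p$-th order growth bound, and take $r=1$, $\sigma(u)=u$.

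The difference is in how you produce the separating function:
\begin{itemize}
\item The paper starts from an arbitrary Borel set $B$ with $\mu(B)>\nu(B)$. It uses regularity of Borel probability measures on metric spaces to find a closed $E\subseteq B$ and an open $O\supseteq B$ with $\mu(B)\le\mu(E)+\varepsilon$ and $\nu(O)\le\nu(B)+\varepsilon$. Urysohn's lemma then gives $\varphi$ with $\int\varphi\,d\mu-\int\varphi\,d\nu\ge\delta/3$.
\item You first reduce to closed sets with the $\pi$--$\lambda$ theorem, using that the sets on which two probability measures agree form a Dynkin system. You then use the explicit functions $\max\{0,1-n\,\mathrm{dist}(x,F)\}\downarrow\mathbbm{1}_F$.
\end{itemize}

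Your route needs neither regularity nor Urysohn, and the separating function you get is bounded and Lipschitz, slightly more than required. It is also more self-contained, since regularity on metric spaces is usually proved by essentially your closed-set approximation. The paper's route works directly from the given Borel set and yields a quantitative gap in terms of $\mu(B)-\nu(B)$.

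Your remark about the orbit-space identification is unnecessary: $Q$ is by definition a subset of $\cP_p(\RR^k)$, so distinct points of $Q$ are already distinct measures.
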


\begin{proof}[Proof of Lemma \ref{lem:duplicationSeparate}]
    
The fact that $\cFNcont$ contains a nonzero constant function is trivial. To show it separates points in $Q$, suppose $\mu,\nu \in Q \subseteq \cP_p(\RR^k)$, and $\mu \neq \nu$. There must exist a Borel set $B \subseteq \RR^k$ such that $\mu(B) \neq \nu(B)$. Without loss of generality, assume $\mu(B)>\nu(B)$, and let $\delta\coloneqq\mu(B)-\nu(B)>0$. Since $\RR^k$ is a metric space,  any measure in $\cP_p(\RR^k)$ is regular \citep{bogachev2007measure}. By the property of regularity, for any $\varepsilon>0$, there exists a closed set $E \subseteq B$ and an open set $O \supset B$ such that
$$
\mu(B) \leq \mu(E)+\varepsilon, \quad \nu(B) \geq \nu(O)-\varepsilon.
$$
Since $E$ and $\RR^k \setminus O$ are disjoint closed sets in $\RR^k$, by Urysohn’s lemma \citep{willard2012general} %
, there exists a continuous function $\varphi\colon \RR^k \rightarrow [0,1]$ such that $\varphi(E)=1$ and $\varphi(\RR^k \setminus O)=0$. It follows that
$$\int \varphi\; d\mu \geq \mu(E) \geq \mu(B)-\varepsilon, \quad 
\int \varphi\; d\nu \leq \nu(O) \leq \nu(B)+\varepsilon.$$
Set $\varepsilon = \frac{\delta}{3} > 0$, we obtain
$$\int \varphi\; d\mu-\int \varphi\; d\nu \geq \mu(B)-\nu(B)-2 \varepsilon=  \frac{\delta}{3} >0.$$

Recall that $\cFNcont\coloneqq\left\{  f \colon Q\to\RR ,  \; f(\mu)=\sigma\left(\int\rho  d\mu\right) \; \Big|   \;r\in\NN, \rho \in \cF_p\left(\RR^k, \RR^r\right),   \sigma \in C\left(\RR^r, \RR\right)  \right\}$. By choosing $r=1$, $\sigma(u)=u$, and $\rho=\varphi$, we observe that $\varphi$ is bounded and continuous, which implies $|\varphi(x)| \leq 1 \leq (1 + \|x\|^p_{\RR^k})$, and thus $\rho \in \mathcal{F}_p(\RR^k, \RR)$.
Therefore, it follows that for any $\mu, \nu \in Q \subseteq \cP_p(\RR^k)$ with $\mu \neq \nu$, there exists a function $f \in \cFNcont$ such that $f(\mu) \neq f(\nu)$.
\end{proof}

\begin{lemma}
\label{lem:duplicationFNDSdenseCont}
The function class $\cFNDS$ is dense in $\cFNcont$ on $Q$.
\end{lemma}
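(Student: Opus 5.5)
We must show that every $f(\mu)=\sigma(\int\rho\,d\mu)$ with $\rho\in\cF_p(\RR^k,\RR^r)$ and $\sigma\in C(\RR^r,\RR)$ can be uniformly approximated on $Q$ by some $\hat\sigma(\int\hat\rho\,d\mu)$ with $\hat\rho\in\NNphi_{r,k}$ and $\hat\sigma\in\NNphi_{1,r}$. The structure will mirror Lemma~\ref{lem:ZeropaddingFDSdenseFcont}: first approximate $\rho$ by $\hat\rho$ so that the feature vectors $\int\hat\rho\,d\mu$ are uniformly close to $\int\rho\,d\mu$ over $\mu\in Q$, then approximate $\sigma$ on a compact set containing all these feature vectors.

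\textbf{Step 1 (compactness of features).} For $\mu\in Q$ the growth condition gives $\|\int\rho\,d\mu\|\le C\sup_{\mu\in Q}\int(1+\|x\|^p)\,d\mu$. This bound is finite because compactness of $Q$ (Proposition~\ref{prop:Compact Prob}) gives uniformly bounded $p$-th moments, by uniform integrability. Hence the features lie in a bounded set $B_0\subseteq\RR^r$. Let $Q_\sigma$ be a closed $1$-neighbourhood of $B_0$; it is compact.

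\textbf{Step 2 (uniform approximation of $\rho$ in the non-compact regime).} This is the main obstacle: $\mu$ has unbounded support, so the usual compact-set UAT does not suffice. I would split $\RR^k$ into a ball $\{\|x\|\le R\}$ and its complement, and aim for
\[
\sup_{\mu\in Q}\Bigl\|\int(\rho-\hat\rho)\,d\mu\Bigr\|\le \sup_{\|x\|\le R}\|\rho-\hat\rho\| + \sup_{\mu\in Q}\int_{\|x\|>R}\bigl(\|\rho\|+\|\hat\rho\|\bigr)\,d\mu .
\]
The second term is controlled by tightness and $p$-uniform integrability (Proposition~\ref{prop:Compact Prob}), provided $\hat\rho$ satisfies a growth bound $\|\hat\rho(x)\|\le C'(1+\|x\|^p)$. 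That bound holds automatically since $\phi$ is Lipschitz, but the constant $C'$ depends on $\hat\rho$, which in turn depends on $R$, so the argument is circular.

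To break the circularity, I would use the non-compact universality of~\cite{van2024noncompact}. Because $\phi$ is non-polynomial and asymptotically polynomial at $\pm\infty$, networks in $\NNphi_{r,k}$ are dense in a weighted space with weight $1+\|x\|^p$: for each $\eta>0$ there exists $\hat\rho$ with $\sup_x\|\rho(x)-\hat\rho(x)\|/(1+\|x\|^p)\le\eta$. One caveat is that continuous $\rho$ with growth $O(1+\|x\|^p)$ may not lie in the closure of that weighted space, which typically requires $o(\|x\|^p)$ decay relative to the weight. I would handle this by first replacing $\rho$ with $\rho\chi_R$, where $\chi_R$ is a continuous cutoff equal to $1$ on $B(0,R)$ and supported in $B(0,2R)$. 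The error of this replacement is at most $2\sup_{Q}\int_{\|x\|>R}C(1+\|x\|^p)\,d\mu$, which is small for large $R$ by tightness and uniform integrability. The truncated function $\rho\chi_R$ is bounded, so it is in the closure. With the weighted bound in hand, $\|\int(\rho\chi_R-\hat\rho)\,d\mu\|\le\eta\sup_Q\int(1+\|x\|^p)\,d\mu$ uniformly over $\mu\in Q$.

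\textbf{Step 3 (approximation of $\sigma$ and assembly).} By the UAT~\cite{leshno1993multilayer}, choose $\hat\sigma\in\NNphi_{1,r}$ with $|\sigma-\hat\sigma|\le\varepsilon/2$ on $Q_\sigma$, and let $\delta\le1$ be a modulus of uniform continuity of $\hat\sigma$ on $Q_\sigma$ for tolerance $\varepsilon/2$. Now choose $R$ and $\eta$ in Step 2 so that the total feature error is at most $\delta$. This keeps the perturbed features inside $Q_\sigma$, and the same triangle inequality as in Lemma~\ref{lem:ZeropaddingFDSdenseFcont} yields $\sup_{\mu\in Q}|f(\mu)-\hat f(\mu)|\le\varepsilon$. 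Finally $\cFNDS\subseteq\cFNcont$, since Lipschitz $\phi$ gives $\NNphi_{r,k}\subseteq\cF_p$, so density holds.
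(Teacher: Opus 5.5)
Your proof is correct and follows essentially the paper's route: truncate $\rho$ by a continuous cutoff equal to $1$ on $B(0,R)$ and vanishing outside $B(0,2R)$, control the truncation error by tightness and $p$-uniform integrability, approximate the compactly supported truncation on all of $\RR^k$ via the noncompact UAT, and finish with the classical UAT for $\sigma$ and a triangle inequality. The differences are cosmetic. The weight $1+\|x\|^p$ is unnecessary: the truncation lies in $C_0$, so the noncompact UAT already gives a plain sup-norm bound on $\RR^k$, which suffices because each $\mu$ is a probability measure. Your closed $1$-neighbourhood $Q_\sigma$ makes explicit a point the paper leaves implicit, namely that $\int\hat\rho\,d\mu$ must stay inside the compact set on which the modulus of continuity of $\hat\sigma$ is taken.
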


\begin{proof}[Proof of Lemma \ref{lem:duplicationFNDSdenseCont}]
The proof for this lemma proceeds by analogy with Lemma \ref{lem:ZeropaddingFDSdenseFcont} except for the following subtle issue: the compactness of $Q \subseteq \mathcal{P}_p(\RR^k)$, as described in Proposition \ref{prop:Compact Prob}, does not ensure that the input domain of $\rho$ is compact. Consequently, the standard Universal Approximation Theorem cannot be directly applied. To address this issue, we leverage the property that compactness in $\cP_p(\RR^k)$ guarantees the probability mass in the tails of the measures is negligible. This allows us to replace $\rho$ with a function vanishing at infinity, which can then be uniformly approximated by neural networks on the entire space $\RR^k$ using recent results for non-compact domains \citep{van2024noncompact}.
    
For any function $f \in \cFNcont$, we can represent it as 
$f(\mu)=\sigma\left( \int \rho d\mu\right),$
for some $r \in \NN$, $\rho \in \cF_p(\RR^k,\RR^r)$, and $\sigma \in C\left(\RR^r, \RR\right)$. We fix a compact set $Q\subseteq \cP_p(\RR^k)$, and let $\RR^r$ be equipped with the Euclidean norm $\|\cdot\|_{\RR^r}$. By the growth condition on $\rho$, there exist positive constants $M_j > 0$ such that for each component function $\rho_j$, we have $|\rho_j(x)| \leq M_j(1+\|x\|^p_{\RR^k})$ for all $x \in \RR^k$. Defining the vector $M \coloneqq (M_1, \ldots, M_r)^\top \in \RR^r$, it follows that $\|\rho(x)\|_{\RR^r} \leq \|M\|_{\RR^r} (1+\|x\|^p_{\RR^k}), \; \forall x \in \RR^k.$ Moreover, Theorem \ref{thm:duplicationContinuity} ensures the continuity of the map $\mu \mapsto \int \rho d\mu$. Thus, the image of the compact set $Q$ under this map, denoted by $Q_\sigma \subseteq \RR^r$, is also compact, forming the input domain for $\sigma$.

\paragraph{Approximating $\sigma$. } For any $\varepsilon > 0$, UAT ensures there exists $ \hat{\sigma} \in \NNphi_{1,r}$ such that $\sup_{u\in Q_\sigma} |\sigma(u) - \hat{\sigma}(u)| \leq \frac{\varepsilon}{2}.$
Moreover, since $\hat{\sigma}$ is uniformly continuous on compact sets, there exists $ \delta > 0$ such that 
$\|u_1-u_2\|_{\RR^r} \leq \delta \Rightarrow |\hat{\sigma}(u_1) - \hat{\sigma}(u_2)| \leq \frac{\varepsilon}{2}.$

\paragraph{Tail Control and Truncation of $\rho$.} Since $Q$ is compact, it is tight and $p$-uniformly integrable, by Proposition \ref{prop:Compact Prob}. Then for the constant $\frac{\delta}{4\|M\|_{\RR^r}}$, there exists a positive constant $R>0$ such that
\begin{align*}
\sup_{\mu \in Q} \mu \left(\RR^k \setminus \overline{B(0,R)}\right)&\leq \frac{\delta}{4\|M\|_{\RR^r}}  &\text{(tightness)},\\ 
\sup _{\mu \in Q} \int_{\RR^k \setminus \overline{B(0,R)}}\|x\|_{\RR^k}^p \mathrm{~d} \mu &\leq \frac{\delta}{4\|M\|_{\RR^r}}  & \text{($p$-uniformly integrability)}.\end{align*}
Using Urysohn’s Lemma \citep{willard2012general}, there exists a continuous function $\varphi \colon \RR^k \rightarrow [0,1]$ such that $\varphi\left( \overline{B(0,R)}\right)=1$ and $\varphi\left( \overline {\RR^k \setminus {B(0,2R)}}\right)=0$. We truncate function $\rho$ with $\varphi$ by
$$\rho_\varphi (x) \coloneqq \varphi (x) \rho(x), \quad \forall x \in \RR^k.$$
Evidently, $\rho_\varphi \in C_0(\RR^k,\RR^r)$, the space of continuous functions vanishing at infinity. Furthermore, by the tail control of $\rho$, the discrepancy between $\rho$ and its truncated counterpart $\rho_\varphi$ can also be controlled on $Q$
$$
\sup_{\mu \in Q}\int\|\rho_\varphi-\rho\|_{\RR^r} d \mu \leq   \sup _{\mu \in Q} \int_{\RR^k\setminus \overline{B(0,R)}}\|\rho(x)\|_{\RR^r} d \mu(x)  \leq \|M\|_{\RR^r}\sup _{\mu \in Q} \int_{\RR^k\setminus  \overline{B(0,R)}} \left( 1+\|x\|_{\RR^k}^p\right)d \mu(x)\leq\frac{\delta}{2}.
$$

\paragraph{Approximating $\rho_\varphi$.} For the approximation of $C_0$ functions, we invoke a specialized UAT \citep{van2024noncompact}: if the activation function $\phi$ is continuous, nonpolynomial, and asymptotically polynomial at $\pm \infty$, then any function in $C_0(\RR^k,\RR)$ can be uniformly approximated by $\NNphi_{1,k}$ on $\RR^k$. This result extends naturally to the vector-valued space $C_0(\RR^k, \RR^r)$ and $\NNphi_{r,k}$ by approximating each component function independently.
Since $\rho_\varphi \in C_0\left(\RR^k, \RR^r\right),$ there exists 
 $\hat{\rho} \in \NNphi_{r,k}$ such that
 $\sup_{x\in \RR^k}{\|\hat{\rho}(x)- \rho_\varphi(x)\|_{\RR^r}} \leq \frac{\delta}{2}.$
 Thus,
 $$\sup_{\mu \in Q} \left\|\int(\rho-\hat{\rho}) d \mu\right\|_{\RR^r} 
 \leq \sup_{\mu \in Q}\int \|\rho_\varphi-\rho\|_{\RR^r}  d\mu + \sup_{\mu \in Q}\int \| \rho_\varphi-\hat\rho\|_{\RR^r}  d\mu \leq \delta.$$

We define the overall architecture as $\hat{f}=\hat{\sigma}(\int \hat{\rho} d \mu) \in \cFNDS$, and estimate the approximation error
$$\begin{aligned}
\sup_{\mu \in Q} \left| f(\mu)-\hat f(\mu)\right|&=\sup_{\mu \in Q} \left|\sigma\left(\int \rho d \mu\right)-\hat{\sigma}\left(\int \hat{\rho} d \mu\right)\right| \\
& \leq  \sup_{\mu \in Q} \left|\sigma\left(\int \rho d \mu\right)-\hat{\sigma}\left(\int \rho d \mu\right)\right| + \sup_{\mu \in Q} \left|\hat\sigma\left(\int \rho d \mu\right)-\hat{\sigma}\left(\int \hat{\rho} d \mu\right)\right| \\
&\leq \frac{\varepsilon}{2}+\frac{\varepsilon}{2}=\varepsilon,
\end{aligned}
$$
which concludes that $\cFNDS$ is dense in $\cFNcont$.
\end{proof}

\begin{proof}[Proof of Theorem \ref{thm:duplicationUniversality}]
   By Proposition \ref{prop: stone weierstrass}, the Stone–Weierstrass Theorem, and Lemmas \ref{lem:duplicationSubalgebra} and \ref{lem:duplicationSeparate}, we conclude that $\cFNcont$ is dense in $C(Q)$. Combining this with Lemma \ref{lem:duplicationFNDSdenseCont} and the transitivity of density, it follows that $\cFNDS$ is dense in $C(Q)$.
\end{proof}

\section{Details and missing proofs from Section \ref{sec: graph}}
\label{app: graph}
In this section, we present additional details and proofs for our results on graphs.
\subsection{Duplication consistent sequence for graphs}
\label{app: detail graph consistent sequence}
We start by elaborating on the consistent sequence for graphs used in Section~\ref{sec: graph}. More details and proofs for the following assertions can be found in \citep[Appdx.~G]{levin2025transferring}.

The duplication embedding consistent sequence for graphs $\Vdup^G=\left\{\left(V_n\right),\left(\varphi_{N, n}\right),\left(\mathrm{G}_n\right)\right\}$ is defined as follows. The index set $(\NN, \cdot \mid \cdot)$ is the set of natural numbers with divisibility partial order, where $n \preceq N$ if and only if $n \mid N$. For each $n\in \NN$, $V_n=\RR^{n \times n}_\text{sym}$. For $n\preceq N$, the duplication embedding $\varphi_{N, n}\colon  \RR^{n \times n}_\text{sym} \hookrightarrow \RR^{N \times N}_\text{sym}$ is given by
$
    \varphi_{N, n}(A)= A \otimes\left(\mathbbm{1}_{N / n} \mathbbm{1}_{N / n}^{\top}\right),
$
for $A \in \RR^{n \times n}_\text{sym}$, where $\otimes$ denotes the Kronecker product. The group is the symmetric group $S_n$ and the group embedding is the same as the case in Appendix \ref{app: details duplication embedding consistent sequence}. $S_n$ acts on $V_n$ via $g\cdot A=gAg^\top$.

The space $V_\infty$ can be identified with the space of step graphons $W_A\colon [0,1]^2\rightarrow \RR$ given by 
$$ W_A(x,y)=A_{ij}\quad \text{ for } (x,y) \in \left( \frac{i-1}{n} ,  \frac{i}{n}\right] \times \left( \frac{j-1}{n} ,  \frac{j}{n}\right], i,j \in [n].$$
The symmetric group acts on the induced step graphon by 
$$\sigma\cdot W_A = W_A^{\sigma^{-1}}\coloneqq W_A(\sigma^{-1}(x),\sigma^{-1}(y)).$$
We endow each $V_n$ with the cut norm, which is defined as
$$\|A\|_\square\coloneqq\frac{1}{n^2}\max_{S \subseteq [n], T \subseteq [n]} \left| \sum_{i\in S, j \in T} A_{ij}\right| \quad \text{for }A \in \RR^{n \times n}_\text{sym}.$$
This cut norm extends to a norm in $V_\infty$, which coincides with the cut norm on graphons, defined as:
$$\|W\|_\square \coloneqq \sup _{S, T \subseteq[0,1]}\left|\int_{S \times T} W(x, y) d x d y\right| .$$
The symmetrized distance coincides with the $\delta_\square$ distance, which is defined by
$$\overline{\mathrm{d}}(W,U)=\delta_\square(W,U)\coloneqq\inf _{\varphi \in S_{[0,1]}} \|\left(U^{\varphi}- W\right)\|_\square,$$
where $S_{[0,1]}$ denotes the set of all measure-preserving bijections $[0,1] \rightarrow [0,1]$; and $U^\varphi(x,y)\coloneqq U(\varphi(x),\varphi(y))$.
The orbit space can be identified with the space of symmetric measurable functions $[0,1]^2\rightarrow \RR$, modulo the equivalence relation $W_1 \sim W_2$ whenever $\delta_\square(W_1,W_2)=0.$

\subsection{Proof of Theorem  \ref{thm:graphContinue}}
\label{app:proofGraphContinue}
\begin{proof}[Proof of Theorem  \ref{thm:graphContinue}]
    We expand the formulation \eqref{equ:CompleteGraphModel} and get
\begin{align*}
    \hk(W)=\int_{[0,1]^m}\prod_{1\leq i<j\leq m}\left( a_{ij}W(x_i,x_j)+b_{ij}\right)\prod_{\ell=1}^m dx_\ell=\sum_{S \subseteq E_m}\left(\prod_{e \in S} a_e \prod_{e \notin S} b_e\right) t(S;W),
\end{align*}
where $E_m$ denotes the set of edges of the $m$-node complete graph. Thus, each $\hk(\cdot)$ is a linear combination of homomorphism densities of simple graphs, which are continuous with respect to the cut distance $\delta_{\square}$ \citep[Thm.~2.7]{convergent_seqs1}, and hence $\cFW \subseteq C\left(\Gphn, \delta_{\square}\right)$ as claimed.
\end{proof}

\subsection{Proof of Theorem  \ref{thm:graphUniversal}}
\label{app:proofGraphUniversal}
\begin{proof}[Proof of Theorem  \ref{thm:graphUniversal}]
  To establish the density of $\cFW$, it suffices to show that $\mathcal{HD} \subseteq \cFW$, where $\mathcal{HD} \coloneqq \operatorname{span}\{\, t(F, \cdot) \mid F \text{ is a simple graph} \,\}$ is known to be dense in $C\left(\Gphn, \delta_\square\right)$ by~\citep[Thm.~2.2]{diffs_grap_pars}.
  
  For any simple graph $F=(V, E)$ with $|V| = k \leq m$, consider the functional $\hk(W)$ defined in \eqref{equ:CompleteGraphModel} with parameters
$$a_{ij}=\begin{cases} 
1 & \text{if } \{i, j\} \in E \\
0 & \text{if } \{i, j\} \notin E 
\end{cases}, \quad \text{and} \quad 
b_{ij}=\begin{cases} 
0 & \text{if } \{i, j\} \in E \\
1 & \text{if } \{i, j\} \notin E 
\end{cases},$$
and note that $\hk(W)=t(F,W)$, proving that $\mathcal{H D} \subseteq \cFW$ as desired. 
\end{proof}

\subsection{A universal deep model for graphs}
\label{app:deepGraphModel}

First, we introduce a tensor contraction operator that serves as the nonlinear activation function in the network.
For $k \in \NNz$, define a class of functions
$\cG_k\coloneqq\{G: [0,1]^k \rightarrow \RR ~ \text{ measurable, bounded } \} / \sim_{\text{a.e.}}$
where $G_1 \sim_{\text{a.e.}} G_2$ if $G_1=G_2$ almost everywhere, and set $\cG_0=\RR$.

\begin{definition} 
\label{def:TensorContractionOperator}
We define the operator $T \colon \cG_{k+1}\times\prod_{i=1}^k \cG_{2} \to \cG_k$ as
\begin{align*}
    \left(T(P, G_1, \dots, G_k)\right)(y_1,\ldots,y_k) = 
\int_{[0,1]} P(x, y_1,\ldots,y_k) \prod_{i=1}^k G_i(x, y_i) dx.
\end{align*}
For $k=0$, the operator $T\colon \cG_1 \rightarrow \RR$ is given by $\int_{[0,1]}P(x)dx$.
\end{definition}

We define the $j$-th linear layer as
$$
\cL_j(P_j, W) \coloneqq
\left(
\begin{array}{ll}
    \quad P_j \\
    a_j^1 W +b_j^1 \\
    \quad\;\; \vdots \\
    a_j^{k_j} W +b_j^{k_j} \\
\end{array}
\right),
$$
where $P_j \in {\cG}_{k_j+1} $, for some $k_j \in \NNz$; $a_j^i,b_j^i \in \RR$, for $i \in[k_j]$; $W \in \Gphn$. We denote the $j$-th linear layer map as  %
$
    \widetilde\cL_j\colon {\cG}_{k_j+1} \times \Gphn \rightarrow \left({\cG}_{k_j+1} \times{\prod} _{i=1}^{k_j}\cG_2 \right) \times \Gphn
$
given by $\widetilde{\cL}_j(P_j,W)=(\cL_j(P_j, W),W) ~.$ Furthermore, we denote the nonlinearity in the network as $\widetilde{T}\colon \left(\cG_{k+1}\times\prod_{i=1}^k \cG_{2}\right)\times \Gphn \to \cG_k \times \Gphn $ given by $\widetilde{T}\left((P,G_1,\ldots,G_k),W \right)=\left(T(P,G_1,\ldots,G_k),W\right)$, where $T$ is the operator in Definition \ref{def:TensorContractionOperator}.

We then contract the output of $\cL_j$ using $T$ defined in Definition \ref{def:TensorContractionOperator} to get the input of $(j+1)$-th linear layer
$$T \circ \cL_j(P_j, W)\coloneqq P_{j+1}\in {\cG}_{k_j}.$$

Suppose we have $m \geq 1$ layers, set $P_1 \in \cG_m, P_1(x_1,\ldots,x_m) \equiv 1$, and let $k_j=m-j$. For the $j$-th tensor output, $1 \leq j \leq m+1$, we have $P_j \in \cG_{m+1-j}$.
The overall architecture $h^m_D: \Gphn \rightarrow \RR$ can be written as
\begin{equation}
\label{equ: tensor contraction graphon network}
    (h_D^m(W),W)=\widetilde{T} \circ \widetilde{\cL}_m \circ \cdots \circ \widetilde{T} \circ \widetilde{\cL}_1(P_1,W).
\end{equation}

\begin{theorem}
\label{thm: graph deep model continuous universal}
    Let $\cF_{deep}$ be the class of functions defined as $\cF_{deep} \coloneqq \mathrm{span}\{{h^m_D \mid m \in \NN}\}$,
     where each $h_D^m$ has the form of \eqref{equ: tensor contraction graphon network}. Then $\cF_{deep}$ is dense in $C\left(\Gphn,\delta_\square\right)$.
\end{theorem}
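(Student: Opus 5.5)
The plan is to reduce the claim to Theorem~\ref{thm:graphUniversal}. We show that each deep network $h^m_D$ computes exactly a function $\hk^{a,b}$ of the form \eqref{equ:CompleteGraphModel} with $m$ nodes. We also show that every such $\hk^{a,b}$ is realized by some $h^m_D$. Then $\cF_{deep} = \cFW$, and both continuity (Theorem~\ref{thm:graphContinue}) and density (Theorem~\ref{thm:graphUniversal}) transfer directly.

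The key step is an induction on the layer index $j$. The claimed invariant, up to relabeling of the node indices, is
\begin{equation*}
P_{j}(y_1,\ldots,y_{m+1-j}) = \int_{[0,1]^{j-1}} \prod_{\substack{i<l\\ i\le j-1}} \bigl(a_{il}W(x_i,x_l)+b_{il}\bigr) \prod_{i=1}^{j-1} dx_i,
\end{equation*}
where $x_{j},\dots,x_m$ are identified with $y_1,\dots,y_{m+1-j}$. In words, $P_j$ is the partial integral over the already-eliminated nodes $1,\dots,j-1$ of the product of edge factors touching at least one eliminated node.

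The base case is $P_1\equiv 1$. For the inductive step, layer $j$ produces $P_j$ together with the $k_j=m-j$ kernels $G_i = a_j^iW+b_j^i$. The contraction $T$ integrates the first variable $x$ (node $j$) against $\prod_i G_i(x,y_i)$. This multiplies in exactly the factors for the edges $\{j,l\}$ with $l>j$, and then integrates out $x_j$. After the last layer ($k_m=0$), $T$ integrates the remaining single variable, so $h^m_D(W)=\hk^{a,b}(W)$ with $a_{jl}=a_j^{\,l-j}$ and $b_{jl}=b_j^{\,l-j}$. This correspondence between network parameters and edge parameters $(a_{jl},b_{jl})_{j<l}$ is a bijection onto all symmetric parameter matrices, since the diagonal entries are irrelevant. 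Hence $\{h^m_D\}=\{\hk^{a,b}\}$ for each $m$, and taking spans gives $\cF_{deep}=\cFW$. A careful Fubini argument on bounded measurable functions justifies the iterated integrals, and the contraction keeps each $P_j$ in $\cG_{k_j}$ because boundedness is preserved.

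The main obstacle is the bookkeeping of variable indices in the contraction. $T$ always contracts the first argument of $P$ and pairs $G_i$ with the $i$-th remaining argument, so we must check that the order of the remaining variables is preserved layer to layer. This ensures that $G_i$ at layer $j$ really attaches to node $j+i$. I would handle this by stating the inductive hypothesis with explicit positional labels as above.

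Once $\cF_{deep}=\cFW$ is established, Theorem~\ref{thm:graphUniversal} gives density in $C(\Gphn,\delta_\square)$, and Theorem~\ref{thm:graphContinue} gives continuity of every $h^m_D$.
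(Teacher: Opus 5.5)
Your proposal is correct and follows essentially the same route as the paper's proof. The paper likewise inducts layer by layer to show that $P_r$ is the partial integral over the eliminated nodes, reads off $h^m_D=\hk^{a,b}$ with $a_{ij}=a_i^{j-i}$ and $b_{ij}=b_i^{j-i}$, and then invokes Theorem~\ref{thm:graphUniversal}; your explicit remark that this parameter map is onto all $(a_{ij},b_{ij})_{i<j}$, so that $\cF_{deep}=\cFW$ (with $h^1_D\equiv 1$ already in $\cFW$), spells out the surjectivity step the paper leaves implicit.
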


\begin{proof}[Proof of Theorem \ref{thm: graph deep model continuous universal}]
We establish the result by induction, demonstrating that the deep model form \eqref{equ: tensor contraction graphon network} recovers the structure of \eqref{equ:CompleteGraphModel}, and complete the proof by Theorem  \ref{thm:graphUniversal}. We prove inductively that for any $r \in \NN$ such that $2 \leq r \leq m+1$, the output tensor $P_r \in \cG_{m+1-r}$ of the $r$th layer can be represented as
\begin{equation}
\label{equ:GraphDeepInductionHypo}
    P_r(\mathbf{x}_{\geq r})=\int_{[0,1]^{r-1}}\prod_{1\leq i <j \leq r-1} L_i^{j-i}\left(W(x_i,x_j)\right) \prod_{i=1}^{r-1}\prod_{j=r}^{m} L_i^{j-i}\left(W(x_i,x_j)\right) d \mathbf{x}_{< r},
\end{equation}
where $L_i^v(W) \coloneqq a_{i}^{v}W + b_{i}^{v}$ with $a_{i}^{v}, b_{i}^{v} \in \RR$, $\mathbf{x}_{\geq r} \coloneqq (x_r, \dots, x_m)$, and $d \mathbf{x}_{< r} \coloneqq dx_1 \dots dx_{r-1}$.
Having done so, we set $r=m+1$ and conclude that 
$$P_{m+1}= \int_{[0,1]^m}\prod_{1\leq i <j \leq m} L_i^{j-i} (W(x_i,x_j)) d \mathbf{x}_{< m+1}=\int_{[0,1]^m}\prod_{1\leq i <j \leq m} \left(a_i^{j-i} W(x_i,x_j) + b_i^{j-i} \right) d \mathbf{x}_{< m+1}.$$
By setting $a_{ij} \coloneqq a_i^{j-i}$ and $b_{ij} \coloneqq b_i^{j-i}$, this expression exactly matches the form \eqref{equ:CompleteGraphModel}, completing the proof. Thus, we proceed to establish~\eqref{equ:GraphDeepInductionHypo}.

\paragraph{Base Case ($r=2$). }Directly applying Definition \ref{def:TensorContractionOperator}, we have
$$P_2(x_2,\ldots,x_{m})=\int_{[0,1]}\prod_{u=1}^{m-1}\left(a_{1}^{u}W(x_1,x_{u+1})+b_{1}^{u}\right)dx_1=\int_{[0,1]}\prod_{j=2}^{m}L_1^{j-1}(W(x_1,x_{j}))dx_1$$
which is consistent with the inductive hypothesis \eqref{equ:GraphDeepInductionHypo}.

\paragraph{Inductive Step. }Suppose the hypothesis holds for some $r \in \NN$ ($2 \leq r \leq m$), such that $P_{r} \in \cG_{m+1-r}$ takes the form of \eqref{equ:GraphDeepInductionHypo}. Then, for $r+1$, we obtain
\begin{align*}
&P_{r+1}(x_{r+1},\ldots,x_m)\\
&=\int_{[0,1]}P_r(x_r,x_{r+1},\ldots,x_m) \prod_{u=1}^{m-r} \left(a_{r}^{u}W(x_r,x_{r+u})+b_{r}^{u}\right)dx_r \\
& = \int_{[0,1]} \left(\int_{[0,1]^{r-1}}\prod_{1\leq i <j \leq r-1} L_{i}^{j-i}(W(x_i,x_j)) \prod_{i=1}^{r-1}\prod_{j=r}^{m} L_{i}^{j-i}(W(x_i,x_j)) d\mathbf{x}_{< r} \right) \prod_{j=r+1}^{m} L_r^{j-r} (W(x_r,x_j))dx_r \\
&=\int_{[0,1]^r} \left(\prod_{1\leq i <j \leq r-1} L_{i}^{j-i}(W(x_i,x_j))\right) \left( \prod_{i=1}^{r-1}L_i^{r-i}(W(x_i,x_r))\right) \left(\prod_{i=1}^{r}\prod_{j=r+1}^{m} L_{i}^{j-i}(W(x_i,x_j)) \right)
d \mathbf{x}_{< r+1} \\
&= \int_{[0,1]^r} \prod_{1\leq i <j \leq r} L_{i}^{j-i}(W(x_i,x_j)) \prod_{i=1}^{r}\prod_{j=r+1}^{m} L_{i}^{j-i}(W(x_i,x_j))
d \mathbf{x}_{< r+1}.
\end{align*}
This confirms that the hypothesis holds for $r+1$. 
\end{proof}

We remark that the above architecture can be made more expressive by allowing general widths for the linear layers, and by using general linear equivariant maps~\citep{herbst2025higher}. The latter would require adapting the nonlinear operators in Definition~\ref{def:TensorContractionOperator} appropriately, and we do not further explore these extensions.

\section{Details and missing proofs from Section \ref{sec: point cloud}}
\label{app: point cloud}

\subsection{Duplication consistent sequence for point clouds}
\label{app: detail point cloud consistent sequence}
We elaborate on the consistent sequence used in Section~\ref{sec: point cloud}. See~\citep[Appx.~H]{levin2025transferring} for more details and proofs of the following assertions.

Similar to the case of graphs, the duplication embedding consistent sequence for point clouds $\Vdup^P=\left\{\left(V_n\right),\left(\varphi_{N, n}\right),\left(\mathrm{G}_n\right)\right\}$ is defined as follows. The index set $(\NN, \cdot \mid \cdot)$ is the set of natural numbers with divisibility partial order, where $n \preceq N$ if and only if $n \mid N$. For each $n\in \NN$, $V_n=\RR^{n \times k}$, which represents sets of $n$ points in $\RR^k$, and $k$ is fixed. The group is $\mathrm{G}_n=\mathrm{S}_n \times \mathrm{O}(k)$, where $\mathrm{S}_n$ is the permutation group, and $\mathrm{O}(k)$ is the orthogonal group. The group action on $V_n$ is defined as
$$(g,h)\cdot X=gXh^{\top}.$$
For $n\preceq N$, the duplication embedding $\varphi_{N, n}\colon  \RR^{n \times k} \hookrightarrow \RR^{N \times k}$ is given by
$
   \varphi_{N, n}(X)= X\otimes\mathbbm{1}_{N / n},
$
and the group embedding $\theta_{N, n}\colon \mathrm{S}_n \times \mathrm{O}(k) \hookrightarrow \mathrm{S}_N \times \mathrm{O}(k)$ is given by 
$
\theta_{N, n}(g, h)  =\left(g \otimes I_{N / n}, h\right) .$
We consider the Euclidean norm on $\RR^k$ denoted by $\|\cdot\|_{\RR^k}$, which corresponds to the inner product preserved by elements of $\mathrm{O}(k)$. We equip each $V_n$ with the normalized $\ell_2$ norm: $$\|X\|_{\bar{2}}=\left(\frac{1}{n} \sum_{i=1}^n\left\|X_{i:}\right\|_{\RR^k}^2\right)^{1 / 2}.$$ 
Similarly to the case of sets, we can identify each matrix $X\in \RR^{n\times k}$ with a step function $[0,1]\rightarrow \RR^k$. Then the limit space can be identified with $\overline{V_{\infty}}=L^2\left([0,1] ; \RR^k\right)$, and the symmetrized metric can be written as:
$$\overline{\mathrm{d}}(X, Y)=\inf _{h \in \mathrm{O}(k)} \inf _{\varphi \in S_{[0,1]}}\left(\int_0^1\|h(X(t))-Y(\varphi(t))\|_{\RR^k}^2 d t\right)^{1 / 2},$$
where $S_{[0,1]}$ is the set of measure preserving bijections. 

We can further identify each matrix $X\in \RR^{n\times k}$ with an empirical probability measure in $\RR^k$. The space of orbit closures $\VmG$ can then be identified with the space of orbits of probability measures on $\RR^k$ under the $\mathrm{O}(k)$ action by pushforwards $h \cdot \mu=h_\#\mu$ for $h\in\mathrm{O}(k)$. From this perspective, we can further rewrite the symmetrized distance as
$$
\overline{\mathrm{d}}(X, Y)=\inf _{h \in \mathrm{O}(k)} W_2\left(h \cdot \mu_X, \mu_Y\right) \quad \text { for } X, Y \in \overline{V_{\infty}},$$
where $\mu_X=X_\# \lambda$, $\lambda$ is the Lebesgue measure on $[0,1]$.

\subsection{Proof of Proposition \ref{prop:pointcloudCompact}}
\label{app:proofPointcloudCompact}

\begin{proof}[Proof of Proposition \ref{prop:pointcloudCompact}]

Consider the set of probability measures corresponding to the set $K_R$, denoted by 
$Q_R\coloneqq\{X_\# \lambda \mid X\in K_R\}$. We show that $Q_R$ is equal to the set of probability measures with $R$-bounded support $\cP_2\left( \overline{B(0,R)}\right)\coloneqq\{\mu \in \cP_2\left(\RR^k\right) \mid \supp \mu \subseteq \overline{B(0,R)}\}$. 

First, we show that $Q_R \subseteq \cP_2\left( \overline{B(0,R)}\right)$. For any $\mu \in Q_R$, there exists a function $X \in K_R$ such that $\mu = X_\# \lambda$. By the definition of the push-forward measure, for any Borel set $B \subseteq \mathbb{R}^k$, we have $\mu(B) = \lambda(X^{-1}(B))$.  Since $X \in K_R$, for almost every $t\in [0,1]$ we have $\|X(t)\|_{\RR^k}\leq R$. Consequently, $ \supp \mu  \subseteq \overline{B(0,R)}$, which implies $Q_R \subseteq \cP_2\left( \overline{B(0,R)}\right)$.

Conversely, we show that $\cP_2\left( \overline{B(0,R)}\right) \subseteq Q_R$. For any $\mu \in \cP_2\left( \overline{B(0,R)}\right)$, there exists a Borel mapping $X \colon [0,1]\rightarrow \overline{B(0,R)}$ such that $X_\#\lambda =\mu$ because $\overline{B(0,R)}$ is a standard Borel space~\citep[Thm.~3.3.13]{srivastava1998course}. Since the image of $X$ is contained within the ball, we have $\|X(t)\|_{\RR^k} \leq R$ for all $t$. Moreover, since $\int_{[0,1]}\|X(t)\|_{\RR^k}^2 d \lambda(t) \leq R^2 < \infty$, it follows that $X \in L^2([0,1]; \mathbb{R}^k)$, and thus $X \in K_R$. This concludes that $\cP_2\left( \overline{B(0,R)}\right) \subseteq Q_R$.

Consequently, the set $Q_R = \cP_2\left( \overline{B(0,R)}\right)$ is a compact subset of the Wasserstein space $\cP_2\left( \RR^k\right)$ with respect to the $W_2$ metric by Proposition \ref{prop:Compact Prob}. Since the canonical projection map $\pi\colon \cP_2\left( \RR^k\right) \rightarrow \cP_2\left( \RR^k\right) / \Orth(k)$ is continuous (see the proof of Lemma \ref{lem:ContinuityEqual}), we conclude that $K_R / G_\infty$, which can be identified with $Q_R/\Orth(k)$, is compact in $\VmG$.
\end{proof}

\subsection{Proof of Theorem \ref{thm: point cloud continuous}}
\label{app: proof of point cloud continuous}

\begin{proof}[Proof of Theorem \ref{thm: point cloud continuous} ]
By \citep{herbst2025higher}, the IGN architecture in \eqref{eq:what} is continuous with respect to $\delta_2$ distance. Therefore, to establish the overall model is continuous, it suffices to show that $X \mapsto W_X$ is continuous with symmetrized distance $\overline{\mathrm{d}}$ in its input and $\delta_2$ distance in its image.
For $X,Y \in K_R=\left\{X \in L^2\left([0,1] ; \RR^k\right): \|X\|_\infty \leq R\right\},$ the distance of their corresponding covariance functions can be bounded by
\begin{align*}
    \delta_2(W_X,W_Y)^2
    &=\frac{1}{4R^4}\inf _{\varphi \in S_{[0,1]}}\left(\int_{[0,1]^2}|\langle X(x), X(y)\rangle-\langle Y(\varphi(x)), Y(\varphi(y))\rangle|^2 d x d y\right)  \\
    &=\frac{1}{4R^4}\inf _{\varphi \in S_{[0,1]}}\left(\int_{[0,1]^2}|\langle X(x)- Y(\varphi(x)), X(y)\rangle+\langle Y(\varphi(x)), X(y)-Y(\varphi(y))\rangle|^2 d x d y\right) \\
    &\leq \frac{1}{4R^2} \inf _{\varphi \in S_{[0,1]}}\left(\int_{[0,1]^2}\left(\| X(x)- Y(\varphi(x))\|_{\RR^k}+\| X(y)- Y(\varphi(y))\|_{\RR^k}\right)^2 d x d y\right).
\end{align*}
Applying Young's inequality, namely, $(a+b)^2 \leq 2(a^2 + b^2)$, we obtain
\begin{align*}
     \delta_2(W_X,W_Y)^2&\leq \frac{1}{2R^2}\inf _{\varphi \in S_{[0,1]}}\left(\int_{[0,1]^2}\left(\| X(x)- Y(\varphi(x))\|_{\RR^k}^2+\| X(y)- Y(\varphi(y))\|_{\RR^k}^2\right) d x d y\right)\\
    & = \frac{1}{R^2}\inf _{\varphi \in S_{[0,1]}}\left(\int_0^1\| X(x)- Y(\varphi(x))\|_{\RR^k}^2 d x\right).
\end{align*}
For any $h \in \Orth(k)$, we have $W_{h\cdot X}=\langle h \cdot X(x), h \cdot X(y)\rangle =\langle X(x), X(y)\rangle = W_X,$ so
\begin{align*}
    \delta_2(W_X,W_Y)^2 &=\delta_2(W_{h \cdot X},W_Y)^2 
    \leq  \frac{1}{R^2}\inf _{\varphi \in S_{[0,1]}}\left(\int_0^1\| h\cdot X(x)- Y(\varphi(x))\|_{\RR^k}^2 d x\right).
\end{align*}
Taking the infimum over the orthogonal group $\Orth(k)$, the inequality still holds: $$\delta_2(W_X,W_Y) \leq \frac{1}{R}\inf_{h\in \Orth(k)}\inf _{\varphi \in S_{[0,1]}}\left(\int_0^1\| h\cdot X(x)- Y(\varphi(x))\|_{\RR^k}^2 d x\right)^{1/2}=\frac{1}{R}\overline{\mathrm{d}}(X,Y),$$
which implies that the map $X \mapsto W_X$ is Lipschitz continuous. 
\end{proof}

\subsection{Proof of Theorem \ref{thm: point clouds universality}}
\label{app: proof of point clouds universality}

The proof relies on the correspondence between $K_R/G_\infty$ and the orbit space of $\left(\Gphn,\delta_2\right)$ under the action of the orthogonal group $\Orth(k)$, together with the universality of the model $\IGN^{\varrho}$ on compact subsets of $\left(\Gphn,\delta_2\right)$ \citep{herbst2025higher}. We first state and prove several auxiliary lemmas. 

\begin{lemma}
\label{lem: point cloud equivalence d and delta2}
    For $X, Y \in K_R$, and $W_X$,$W_Y$ are the corresponding covariance functions defined in \eqref{eq:covarianceMap}, then $\overline{\mathrm{d}}(X,Y)=0$ if and only if $\delta_2(W_X,W_Y)=0$.
\end{lemma}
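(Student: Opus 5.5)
The forward direction is immediate from the Lipschitz estimate in the proof of Theorem~\ref{thm: point cloud continuous}: there we showed $\delta_2(W_X,W_Y)\le \frac{1}{R}\overline{\mathrm{d}}(X,Y)$. Hence $\overline{\mathrm{d}}(X,Y)=0$ forces $\delta_2(W_X,W_Y)=0$. The real content is the converse. By the identification in Appendix~\ref{app: detail point cloud consistent sequence}, $\overline{\mathrm{d}}(X,Y)=\inf_{h\in\Orth(k)}W_2(h_\#\mu_X,\mu_Y)$. Since $\Orth(k)$ is compact and $h\mapsto W_2(h_\#\mu_X,\mu_Y)$ is continuous, it suffices to produce a single $h\in\Orth(k)$ with $h_\#\mu_X=\mu_Y$, where $\mu_X=X_\#\lambda$.

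First I reduce $\delta_2(W_X,W_Y)=0$ to an exact equality. For each $\varphi\in S_{[0,1]}$, the pair $(X,Y\circ\varphi)$ pushes $\lambda$ to a coupling $\pi_\varphi$ of $\mu_X$ and $\mu_Y$. We have
\[
\|W_X-W_Y^\varphi\|_2^2=\frac{1}{4R^4}\int\!\!\int |\langle x,x'\rangle-\langle y,y'\rangle|^2\,d\pi_\varphi(x,y)\,d\pi_\varphi(x',y').
\]
Take a minimizing sequence $\varphi_n$. Every $\pi_{\varphi_n}$ is supported on $\overline{B(0,R)}^2$, so the sequence is tight and a subsequence converges weakly to some coupling $\pi$. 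The integrand is bounded and continuous on the compact support, so passing to the limit gives
\[
\langle x,x'\rangle=\langle y,y'\rangle \quad \text{for } \pi\otimes\pi\text{-a.e. } ((x,y),(x',y')).
\]
By continuity this equality then holds for all pairs in $\supp\pi\times\supp\pi$.

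Next I build the orthogonal map. Let $S=\supp\pi$ and define $T$ on $A=\operatorname{span}\{x:(x,y)\in S\}$ by linearly extending $x\mapsto y$. To see $T$ is well defined and isometric, take a relation $\sum c_i x_i=0$. Then
\[
\Big\|\sum c_i y_i\Big\|^2=\sum c_ic_j\langle y_i,y_j\rangle=\sum c_ic_j\langle x_i,x_j\rangle=0.
\]
So $T$ is a linear isometry from $A$ onto $\operatorname{span}\{y\}$, and it extends to some $h\in\Orth(k)$ because orthogonal complements have equal dimension. For $(x,y)\in S$ we then have $hx=y$. Since $\pi$ is concentrated on the graph of $h$, taking second marginals gives $\mu_Y=h_\#\mu_X$. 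Therefore $\overline{\mathrm{d}}(X,Y)\le W_2(h_\#\mu_X,\mu_Y)=0$.

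The main obstacle is the compactness step: passing from ``the infimum over $\varphi$ is $0$'' to an actual coupling on which the inner products agree exactly. I plan to handle it through weak compactness of couplings on the bounded ball, using continuity of the bounded integrand, rather than trying to attain the infimum over measure-preserving bijections. Note that the limit $\pi$ need not come from a bijection; it is only a coupling, which is fine because the conclusion only needs marginals. The linear-algebra step is standard but needs care with well-definedness, which the Gram identity on the support supplies.
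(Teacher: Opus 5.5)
Your argument is correct. The forward direction, via $\delta_2(W_X,W_Y)\le \frac1R\overline{\mathrm{d}}(X,Y)$, is the same as the paper's, but the converse takes a genuinely different route.

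The paper's converse has three steps:
\begin{enumerate}
\item It upgrades $\delta_2(W_X,W_Y)=0$ to an exact identity $W_X^\varphi=W_Y^\psi$ a.e.\ by citing the weak-isomorphism theorem for graphons (Lov\'asz, Cor.~10.35).
\item It proves a separate functional-analytic claim, using an SVD of $\Phi_X\colon L^2([0,1])\to\RR^k$ and the identity $\Phi_X^*\Phi_X=\Phi_Y^*\Phi_Y$, that equal Gram kernels force $Y^\psi=hX^\varphi$ a.e.\ for some $h\in\Orth(k)$.
\item It rewrites the infimum defining $\overline{\mathrm{d}}$ over measure-preserving maps rather than bijections.
\end{enumerate}

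You instead replace the graphon theorem by a Prokhorov compactness argument on couplings of $\mu_X,\mu_Y$. Here the bound $\|X\|_{L^\infty},\|Y\|_{L^\infty}\le R$ is what makes the Gram-discrepancy integrand bounded and continuous. You also replace the operator SVD by an elementary Gram-isometry construction on $\supp\pi$.

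What your route buys:
\begin{itemize}
\item It is self-contained, needing only Prokhorov and finite-dimensional linear algebra.
\item It lands directly on the measure-level statement $h_\#\mu_X=\mu_Y$. That is exactly what the identification $\overline{\mathrm{d}}(X,Y)=\inf_h W_2(h_\#\mu_X,\mu_Y)$ requires.
\item It thereby sidesteps the paper's unjustified swap of bijections for general measure-preserving maps in the final step. Both proofs ultimately rest on that $W_2$ identification.
\end{itemize}
The paper's route, in exchange, gives a function-level congruence $hX^\varphi=Y^\psi$, and its Claim holds for all of $L^2([0,1];\RR^k)$ without boundedness.

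Two minor points:
\begin{itemize}
\item The compactness of $\Orth(k)$ you invoke is not needed for this direction. Exhibiting a single $h$ with $h_\#\mu_X=\mu_Y$ already makes the infimum zero.
\item Passing to the limit in the double integral uses that $\pi_n\otimes\pi_n\to\pi\otimes\pi$ weakly whenever $\pi_n\to\pi$ weakly. This is standard but deserves a sentence.
\end{itemize}
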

This lemma not only ensures that the architecture is well-defined, but also implies that point separation on $K_R/G_\infty$ is equivalent to separate points on $\left(\Gphn,\delta_2\right)/\mathrm{O}(k)$. The proof is built on the following claim that if the covariance functions are equal almost everywhere, then there must exist an orthogonal transformation between the original functions. More formally,
\begin{claim}
\label{claim: inner product and rotation}
Let $X, Y \in L^2\left([0,1], \RR^k\right)$. Suppose that
$$
\langle X(x), X(y)\rangle=\langle Y(x), Y(y)\rangle \quad \text { for almost every } x, y \in[0,1] .
$$
Then there exists an orthogonal transformation $h \in \mathrm{O}(k)$ such that $Y=h X$ almost everywhere.
\end{claim}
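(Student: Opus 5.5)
The natural route is through the second-moment matrices and the linear operators $X$ and $Y$ induce on $\RR^k$. Let $A \coloneqq \int_0^1 X(t)X(t)^\top dt$ and $B \coloneqq \int_0^1 Y(t)Y(t)^\top dt$, both positive semidefinite $k\times k$ matrices, finite because $X,Y\in L^2$. Consider the bounded operators $T_X,T_Y\colon \RR^k\to L^2([0,1])$ given by $(T_X v)(t)=\langle X(t),v\rangle$, with adjoints $T_X^*f=\int_0^1 f(t)X(t)\,dt$. The key observation is that the kernels $K_X(x,y)=\langle X(x),X(y)\rangle$ and $K_Y$ define the integral operators $T_XT_X^*$ and $T_YT_Y^*$ on $L^2([0,1])$. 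Since $K_X=K_Y$ almost everywhere on $[0,1]^2$, these integral operators coincide:
\[
T_XT_X^*=T_YT_Y^* .
\]
This is the only place the hypothesis is used, and it only needs equality almost everywhere.

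Next I would show that $T_X$ and $T_Y$ differ by an orthogonal map of $\RR^k$. From the equality above, $\|T_X^*f\|=\|T_Y^*f\|$ for every $f\in L^2$. Hence the map $U_0\colon T_X^*f\mapsto T_Y^*f$ is a well-defined linear isometry from $\operatorname{range}(T_X^*)\subseteq\RR^k$ onto $\operatorname{range}(T_Y^*)$. Well-definedness holds because $T_X^*f=0$ implies $\|T_Y^*f\|=0$.

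Both ranges have the same dimension, since $U_0$ is an isometric bijection between them. So their orthogonal complements in $\RR^k$ also have the same dimension, and $U_0$ extends to some $h\in\Orth(k)$ by choosing any isometry between the complements. Then $hT_X^*=T_Y^*$ on $L^2$, and taking adjoints gives $T_X h^\top = T_Y$. Concretely, for every $v\in\RR^k$,
\[
\langle X(t),h^\top v\rangle=\langle Y(t),v\rangle \quad \text{for a.e. } t.
\]
Equivalently, $\langle hX(t),v\rangle=\langle Y(t),v\rangle$ for a.e. $t$.

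To finish, I apply this with $v$ ranging over the standard basis $e_1,\dots,e_k$. This is a finite union of null sets, so we get $hX(t)=Y(t)$ for almost every $t$, which is the claim.

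The main obstacle I expect is the identification of the kernels with the operators $T_XT_X^*$: one has to check that an almost-everywhere equality of kernels yields equality of the integral operators. This follows from Fubini, using $|K_X(x,y)|\le \|X(x)\|\,\|X(y)\|$ to get $K_X\in L^2([0,1]^2)$, so the operators are well defined and Hilbert--Schmidt. The rest is finite-dimensional linear algebra.

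A pointwise argument, picking points $t_i$ where $X(t_i)$ spans and matching Gram matrices, would be messier because of null-set issues; the operator formulation avoids them.
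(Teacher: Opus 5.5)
Your proof is correct and follows essentially the paper's route. Your $T_X$ is the adjoint of the paper's $\Phi_X$, and both arguments identify $T_XT_X^*=\Phi_X^*\Phi_X$ with the integral operator of the Gram kernel, then produce $h\in\Orth(k)$ with $h\Phi_X=\Phi_Y$ before passing to adjoints. The one difference is that you build $h$ by extending the isometry $T_X^*f\mapsto T_Y^*f$ between ranges rather than through an SVD with shared right singular vectors, which is slightly cleaner because it avoids the bookkeeping for zero singular values.
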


\begin{proof}[Proof of Claim \ref{claim: inner product and rotation}]
    First, we can identify the space $ L^2\left([0,1], \RR^k\right)$ with $\cB\left(L^2([0,1]), \RR^k\right)$, which is the set of bounded linear operators between two Hilbert spaces $L^2([0,1])\rightarrow \RR^k$, equipped with the Hilbert–Schmidt norm $\|\cdot\|_{H S}$. 
    In more detail, each $X \in L^2\left([0,1], \RR^k\right)$ can be written as
    $$X=\left(X_1, \ldots, X_k\right)^{\top} \quad \text{with} \quad X_i \in L^2([0,1]) ,$$
which defines a bounded linear map $\Phi\colon L^2([0,1]) \rightarrow \RR^k$ by
$$
\Phi_X \varphi=\left(\left\langle X_1, \varphi\right\rangle, \ldots,\left\langle X_k, \varphi\right\rangle\right)^{\top} .
$$
Conversely, by Riesz Representation Theorem any bounded linear operator $\Phi \in \cB\left(L^2([0,1]), \RR^k\right)$ can be written as this form for some $X_1,\cdots ,X_k \in L^2([0,1])$.

Define $\cV_m\coloneqq\operatorname{span}\left\{X_1, \ldots, X_k\right\}$. Then $\Phi_X$ vanishes on the orthogonal complement $\cV_k^{\perp}$. Since $\Phi_X\colon \cV_k \rightarrow \RR^k$ is a linear map between finite-dimensional vector spaces, it admits a singular value decomposition \citep{friedberg1997linear}. %
Thus, there exist non-negative singular values $\sigma_1, \ldots, \sigma_k \in \RR_{\geq 0}$, orthonormal bases $\left\{v_1, \ldots, v_k\right\} \subseteq \RR^k$, and  $\left\{u_1, \ldots, u_k\right\} \subseteq L^2([0,1])$ such that
$$
\Phi_X=\sum_{i=1}^k \sigma_i\left\langle u_i, \cdot\right\rangle v_i .
$$
Moreover, for each $\sigma_i>0$, the $u_i$ is an eigenvector of the self-adjoint operator $\Phi_X^* \Phi_X$ corresponding to the eigenvalue $\sigma_i^2$. For indices $i$ with $\sigma_i=0$, the vectors $u_i$ can be chosen to complete the set $\left\{u_1, \ldots, u_k\right\}$ into an orthonormal basis of $\cV_k$.

The self-adjoint operator $\Phi_X^*\Phi_X$ evaluated on $\varphi \in L^2([0,1])$ yields
$$(\Phi_X^*\Phi_X \varphi)(x) = \sum_{i=1}^k \left\langle X_i, \varphi\right\rangle X_i(x)= \sum_{i=1}^k X_i(x) \int_0^1X_i(y) \varphi(y)dy =\int _0^1 \left\langle X(x), X(y)\right\rangle \varphi(y)dy.$$
Since $\langle X(x), X(y)\rangle=\langle Y(x), Y(y)\rangle$ almost everywhere, then 
$\Phi_X^*\Phi_X = \Phi_Y^*\Phi_Y$.
Therefore, we can choose the same orthonormal bases $\left\{u_1, \ldots, u_k\right\} \subseteq L^2([0,1])$ for $\Phi_X$ and $\Phi_Y$, such that
$$\Phi_X=\sum_{i=1}^k \sigma_i\left\langle u_i, \cdot\right\rangle v_i, \quad \Phi_Y=\sum_{i=1}^k \sigma_i\left\langle u_i, \cdot\right\rangle w_i ,$$
where $\left\{v_1, \ldots, v_k\right\}$ and $\left\{w_1, \ldots, w_k\right\}$ are both the orthonormal bases in $\RR^k$. Then there exists $h \in \mathrm{O}(k)$ such that $h(v_i)=w_i$, which implies 
$$h\Phi_X = \sum_{i=1}^k \sigma_i\left\langle u_i, \cdot\right\rangle (hv_i)= \sum_{i=1}^k \sigma_i\left\langle u_i, \cdot\right\rangle w_i =\Phi_Y .$$
For any $\omega \in \RR^k,$
$\langle Y(\cdot), \omega\rangle=\Phi_Y^* \omega=\left(h \Phi_X\right)^* \omega=\langle h X(\cdot), \omega\rangle,$
which yields $Y=hX$ in $L^2([0,1],\RR^k)$.
\end{proof}

\begin{proof}[Proof of Lemma \ref{lem: point cloud equivalence d and delta2}]
First, by Theorem \ref{thm: point cloud continuous}, if  $\overline{\mathrm{d}}(X,Y)=0$, then $\delta_2(W_X,W_Y)=0$. We only need to show that $\delta_2(W_X,W_Y)=0$ implies $\overline{\mathrm{d}}(X,Y)=0$.
Since $\delta_2\left(W_X, W_Y\right)=0$, there exists measure-preserving maps $\varphi, \psi \in \bar{S}_{[0,1]}$, such that $W_X^\varphi = W^\psi _Y$ almost everywhere \citep[Cor.~10.35]{lovasz2012large}, hence
$$
\langle X(\varphi(x)), X(\varphi(y))\rangle=\langle Y(\psi(x)), Y(\psi(y))\rangle \quad \text { for almost every } x, y \in[0,1] .
$$
By Claim \ref{claim: inner product and rotation}, there exists $h \in \mathrm{O}(k)$ such that $hX^\varphi=Y^\psi,$ where $X^\varphi(x)=X(\varphi(x))$. Therefore,
\begin{align*}
    \overline{\mathrm{d}}(X,Y)&=\inf _{h \in \mathrm{O}(k)} \inf _{\varphi \in S_{[0,1]}}\left(\int_0^1\|h(X(t))-Y(\varphi(t))\|_{\RR^k}^2 d t\right)^{1 / 2}\\
    &= \inf _{h \in \mathrm{O}(k)} \inf _{\varphi, \psi\in \overline{S}_{[0,1]}}\left(\int_0^1\|h(X(\varphi(t)))-Y(\psi(t))\|_{\RR^k}^2 d t\right)^{1 / 2} \\%\todo{Can~we~do~this?}\\
    &\leq 0 ,
\end{align*}
where $\overline{S}_{[0,1]}$ denotes the set of measure-preserving maps $[0,1]\rightarrow [0,1]$. This proves that $ \overline{\mathrm{d}}(X,Y)=0$, as desired.
\end{proof}

\begin{proof}[Proof of Theorem \ref{thm: point clouds universality}]
   
We define the class of functions $\cF_h$ as
$$\cF_h \coloneqq \operatorname{span}\{X \mapsto t(F,W_X) \mid F \text{ is simple }\},$$
where $t$ denotes the homomorphism density defined in \eqref{eq:homo}. It is straightforward to see that $\cF_h$ is a subalgebra, since $t\left(F_1, \cdot\right) t\left(F_2, \cdot\right)=t\left(F_1 \sqcup F_2, \cdot\right)$.
By Lemmas \ref{lem: point cloud equivalence d and delta2} and~\citep[Cor.~10.34]{lovasz2012large},  $\cF_h$ separates points. Therefore,  $\cF_h$ is dense in $C\left(K_R/G_\infty\right)$.

By Theorem \ref{thm: point cloud continuous}, $X\mapsto W_X$ is a continuous mapping. Since the input $K_R/G_\infty$ is compact by Proposition \ref{prop:pointcloudCompact}, the image of $K_R/G_\infty$ is a compact subset of $\left(\Gphn, \delta_2\right)$. The IGN model is universal on compact subsets of $\left(\Gphn, \delta_2\right)$, and thus can approximate arbitrarily well any homomorphism density of simple graphs \citep{herbst2025higher}. Consequently, $\cFPC^{\varrho}$ is dense in $C\left(K_R/G_\infty\right)$.
\end{proof}

\end{document}